\documentclass[11pt]{article}
\usepackage{graphicx}
\usepackage[utf8]{inputenc}
\usepackage[T1]{fontenc} 
\usepackage[english]{babel}
\usepackage{verbatim}
\usepackage{calligra}
\usepackage{enumerate}
\usepackage{enumitem} 
\usepackage{dsfont}
\usepackage{amsfonts}
\usepackage{amsmath}
\usepackage{amsthm}
\usepackage{amssymb}
\usepackage{mathtools}
\usepackage{etoolbox}
\usepackage{mathrsfs}
\usepackage{color}
\usepackage{graphicx}
\usepackage{bm}
  \usepackage[all]{xy}
\usepackage{hyperref}
\usepackage{comment}
\hypersetup{
    colorlinks=true,
    linkcolor=black,
    filecolor=black,      
    urlcolor=black,
}

\usepackage{tikz}
\usetikzlibrary{arrows.meta}

\usepackage{graphicx}		
\usepackage[hypcap=false]{caption}

\DeclareMathOperator{\Var}{Var}
\DeclareMathOperator{\Cov}{Cov}

\theoremstyle{plain}
\newtheorem{defi}{Definition}
\newtheorem{thm}[defi]{Theorem}
\newtheorem{prop}[defi]{Proposition}
\newtheorem{coro}[defi]{Corollary}
\newtheorem{lemma}[defi]{Lemma}

\theoremstyle{definition}
\newtheorem{ex}[defi]{Example}
\newtheorem{rmk}[defi]{Remark}

\AtEndEnvironment{ex}{\hfill\ensuremath{\diamond}}
\AtEndEnvironment{rmk}{\hfill\ensuremath{\diamond}}

\newcommand{\mbb}{\mathbb}
\newcommand{\dst}{\displaystyle}

\newcommand{\msf}{\mathsf}
\newcommand{\mc}{\mathcal}

\newcommand{\msr}{\mathscr}
\newcommand{\ve}{\varepsilon}

\newcommand{\de}{\delta}

\newcommand{\R}{\mathbb{R}}

\newcommand{\N}{\mathbb{N}}

\renewcommand{\P}{\mathbb{P}}

\newcommand{\Ker}{{\rm Ker}\,}

\newcommand{\di}{{\rm d}}
\newcommand{\dx}{ \, {\rm d} x}

\allowdisplaybreaks

\usepackage[a4paper,margin=2.5cm]{geometry}

\begin{document}

\newcommand{\ad}[1]{\textcolor{teal}{[Arnaud: #1 ]}}
\newcommand{\gl}[1]{\textcolor{blue}{[Geoffrey: #1]}}

\title{\textsc{\Large{\textbf{Fluctuations of Nonlinear Observables in Mean Field Neural Network Training}}}}

\author{Arnaud Descours\thanks{Université Lyon 1, LSAF, EA 2429, Lyon, France, \texttt{arnaud.descours@univ-lyon1.fr}}
\qquad 
 Geoffrey Lacour\thanks{Université Paris-Saclay, INRAE, MaIAGE, UR 1404, 78350 Jouy-en-Josas, France, \texttt{geoffrey.lacour@inrae.fr}}}

\vspace{.2cm}
\date{}

\maketitle

\begin{abstract}
Mean field limits describe the training dynamics of wide neural networks through the evolution of the empirical distribution of their parameters. Although functional central limit theorems characterize the asymptotic fluctuations of this distribution, quantities of practical interest are typically nonlinear observables of the parameter distribution rather than the distribution itself. In this work, we show how these mean field fluctuations propagate to finite dimensional nonlinear observables for shallow neural networks trained by stochastic gradient descent. Working in the weighted Sobolev space in which the limiting fluctuation process is constructed, we apply a functional Delta method under ordinary Fréchet differentiability, without requiring Lions derivatives with respect to the measure variable. We obtain a central limit theorem for the observables and, under a suitable representation of their differentials, an explicit covariance formula inherited from the underlying mean field fluctuation theory.

We also study whether prescribed quantities of interest can be recovered from the selected observations. Under a constant rank assumption, we prove that a quantity of interest factors locally through the observation functional if and only if, throughout a neighborhood, the kernel of the differential of the observation is contained in that of the quantity of interest. Thus, a differential condition expressed directly in the ambient Sobolev space yields an exact nonlinear local factorization. These results provide a framework both for quantifying finite-width uncertainty on observable, statistically or physically meaningful quantities and for assessing whether the chosen observations contain the information required to identify them.
\end{abstract}

{\footnotesize\textbf{Key words:} Uncertainty quantification, functional Delta method, mean field analysis, neural networks.}

\smallskip

{\footnotesize \textbf{Mathematics Subject Classification (MSC2020):} 68T07, 60F17, 60B12.
}

\section{Introduction}

In this work, we consider two-layer (or shallow) neural networks, namely neural networks consisting of a single hidden layer. Such models provide one of the simplest architectures for which the mean-field limit can be analyzed rigorously while already capturing many of the fundamental mechanisms underlying modern overparameterized learning. Given an input datum $x \in \R^m$, the network prediction is obtained by combining the contributions of the neurons in the hidden layer. Each neuron is characterized by a parameter vector $w \in \R^d$, where $d \in (\N\backslash\{0\})$ denotes the dimension of the parameter space associated with a single neuron. During training, these parameters are updated through stochastic gradient descent (SGD) in order to minimize a prescribed loss function. Beginning with the seminal works by F.Bach \& L.Chizat~\cite{bach-chizat-2023}, by S.Mei, A.Montanari \& P.-M.Nguyen~\cite{mei-montanari-nguyen-2018} and by J.Sirignano \& K.Spiliopoulos~\cite{sirignano-spiliopoulos-2020}, a large body of literature has shown that, rather than tracking each parameter individually, it is more convenient in the large-width regime to describe the state of the network through the empirical distribution of the neuron parameters. This point of view provides a rigorous mean-field description of SGD through the evolution of a probability measure and has led to a detailed understanding of both the deterministic training dynamics and their asymptotic fluctuations.

More precisely, let $\mu_t^N$ denote the empirical distribution of the network parameters at step $t$. Under suitable assumptions, the sequence $((\mu_t^N)_{N \geq 1}$ converges, as the width $N$ of the hidden layer tends to infinity, towards a deterministic measure-valued process $(\bar\mu_t)_{t \geq 0}$ solving a nonlinear continuity equation. Beyond this law of large numbers, the fluctuations around the mean-field limit,
\[
	\eta_t^N := \sqrt N\left(\mu_t^N - \bar\mu_t\right),
\]
satisfy a functional central limit theorem. A further analysis allows one to characterize explicitly the law of the limiting fluctuation process, yielding a complete probabilistic description of the uncertainty induced by finite-width effects. Since these results constitute the mathematical foundation of the present work, we review the mean-field framework in Section~\ref{sec:mean-field-dynamics} together with the corresponding law of large numbers and central limit theorem in Section~\ref{sec:preliminary-results}. In particular, the explicit characterization of the limiting Gaussian process established in the work by A.Descours, A.Guillin, G.Lacour, M.Michel, B.Nectoux \& P.Stos~\cite[Theorem 3]{descours-guillin-lacour-michel-nectoux-stos-2026} will play a central role throughout the paper.

While these results provide a precise description of the parameter dynamics, they do not directly address the quantities that are actually observed in practical applications. Indeed, the empirical measure itself is not an observable of primary interest. Instead, one typically monitors quantities that are computed from it, such as the network prediction associated with a prescribed input, the empirical loss, or other statistics characterizing the state of the network. From the mean-field perspective, all these quantities can naturally be viewed as functionals acting on the empirical distribution of the parameters. This observation provides the starting point of the present work.

This motivates the introduction of the notion of an \emph{observation functional},
\[
	\Phi:\mc M(\R^d) \to \R^k,
\]
which associates with the empirical distribution of the network parameters a quantity of practical interest. Here, $\mc M(\R^d)$ denotes the space of finite Borel measures on the parameter space $\R^d$, while $k \in (\N\backslash\{0\})$ is the dimension of the observed quantity.

A natural question is therefore whether the uncertainty described by the mean-field fluctuation theory propagates to such observation functionals. Since the asymptotic fluctuations of the empirical measure are completely characterized by the functional central limit theorem recalled in Section~\ref{sec:preliminary-results}, one may ask whether an analogous central limit theorem holds for the observed quantities themselves. 

Although observation functionals are naturally introduced as mappings acting on the space of measures, the fluctuation process takes values in the dual of a weighted Sobolev space of the form
\[
	\msr H := \mc H^{-J_0+1,j_0}(\R^d),
\]
for some parameters $J_0, j_0$ recalled in the statement of Theorem~\ref{thm:functional-delta-method} below and whose structure is detailed in Section~\ref{ssec:functional-framework}. Although Wasserstein spaces and
duals of weighted Sobolev spaces carry different structures, probability measures satisfying suitable moment conditions can be canonically viewed in both settings. The latter additionally provide a linear Hilbert space framework, in which statistical, probabilistic and analytical tools take a particularly natural form. 

More broadly, methods base on Sobolev spaces have been especially fruitful in the analysis of inverse problems for partial differential equations arising from mean field models. Recent examples include the works by O.Imanuvilov, H.Liu, \& M.Yamamoto~\cite{imanuvilov-liu-yamamoto-2024} for state determination and inverse source problems in mean field game systems , as well as the inverse coefficient problem studied by O.Imanuvilov \& M.Yamamoto~\cite{imanuvilov-yamamoto-2025}. Mean field game systems themselves arise as continuum limits of Nash equilibrium problems involving a large number of interacting players, as established in the works by J.-M.Lasry \& P.-L.Lions \cite{lasry-lions-2006-I,lasry-lions-2006-II,lasry-lions-2007}.

Accordingly, throughout the theoretical analysis, observation functionals are understood as mappings defined on this dual space, which contains the empirical measures through the canonical embedding recalled in subsection~\ref{ssec:functional-framework}. This leads to our first main result below.

\begin{thm}[Propagation of mean-field fluctuations to observation functionals]\label{thm:functional-delta-method} Assume that Assumptions~\ref{ass:data}--\ref{ass:batch-limit} hold and that $\beta > \frac{3}{4}$. Let
\[
    J_0 \geq 4\bigg\lceil\frac{d}{2}\bigg\rceil + 8, \quad j_0 = \bigg\lceil\frac{d}{2}\bigg\rceil + 2,
\]
set $\msr H := \mc H^{-J_0+1,j_0}(\R^d)$ and $\Phi: \msr H \to \R^k$ be a Fréchet differentiable observation functional. Then, for every $t \geq 0$,
\[
	\sqrt N\Big(\Phi(\mu_t^N) - \Phi(\bar\mu_t)\Big)\xrightarrow[N\to\infty]{\mathscr L} D\Phi(\bar\mu_t)\bar\eta_t.
\]
In particular, the limiting fluctuations of the observation functional are Gaussian. 

Moreover, writing $\Phi=(\Phi_1,\dots,\Phi_k)$ and $Z_t^\Phi:=D\Phi(\bar\mu_t)\bar\eta_t$, the covariance matrix $\Sigma_t^\Phi$ of $Z_t^\Phi$ is given by
\[
    \Sigma_t^\Phi = \Big(\Sigma_{i,j}^\Phi(t)\Big)_{1\leq i,j \leq k},
\]
where
\[
    \Sigma_{i,j}^\Phi(t) = \Cov\Big( D\Phi_i(\bar\mu_t)(\bar\eta_t), D\Phi_j(\bar\mu_t)(\bar\eta_t)\Big).
\]
\end{thm}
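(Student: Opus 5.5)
The plan is to apply a functional Delta method in the Hilbert space $\msr H$, using the functional central limit theorem recalled in Section~\ref{sec:preliminary-results}. That theorem, together with the characterization of the limit in \cite[Theorem 3]{descours-guillin-lacour-michel-nectoux-stos-2026}, gives $\eta^N=\sqrt N(\mu^N-\bar\mu)\to\bar\eta$ in law in a path space of the form $\mc D(\R_+,\msr H)$ (or with values in a space continuously embedded in $\msr H$). It also shows that $\bar\eta$ is a centered Gaussian process.

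\textbf{Step 1: convergence at a fixed time.} Fix $t\geq 0$. I first check that $\eta_t^N\to\bar\eta_t$ in law in $\msr H$. Evaluation at a fixed time is not continuous on the Skorokhod space in general. However, the limit $\bar\eta$ has continuous trajectories (it solves a linear SDE/PDE in $\msr H$), so evaluation at $t$ is continuous at every point of the support of the limit. The continuous mapping theorem then applies.

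\textbf{Step 2: the Delta method.} Write
\[
\sqrt N\big(\Phi(\mu_t^N)-\Phi(\bar\mu_t)\big)=D\Phi(\bar\mu_t)\eta_t^N+\|\eta_t^N\|_{\msr H}\,r\big(\mu_t^N-\bar\mu_t\big),
\]
where $r(h):=\big(\Phi(\bar\mu_t+h)-\Phi(\bar\mu_t)-D\Phi(\bar\mu_t)h\big)/\|h\|_{\msr H}$ for $h\neq 0$ and $r(0)=0$. Fréchet differentiability means exactly that $r$ is continuous at $0$.

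\begin{itemize}
\item The linear term converges in law to $D\Phi(\bar\mu_t)\bar\eta_t$, since $D\Phi(\bar\mu_t)$ is a bounded linear map $\msr H\to\R^k$.
\item For the remainder, tightness of $(\eta_t^N)_N$ in $\msr H$ (from Step 1) gives $\|\eta_t^N\|_{\msr H}=O_\P(1)$. Then $\mu_t^N-\bar\mu_t=N^{-1/2}\eta_t^N\to0$ in probability in $\msr H$. By continuity of $r$ at $0$, $r(\mu_t^N-\bar\mu_t)\to0$ in probability. Hence the remainder is $o_\P(1)$.
\end{itemize}

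Slutsky's lemma then yields the claimed convergence.

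\textbf{Step 3: Gaussianity and covariance.} $\bar\eta_t$ is a centered Gaussian element of $\msr H$: every continuous linear functional of it is a centered real Gaussian, jointly across finitely many functionals. So $Z_t^\Phi=(D\Phi_i(\bar\mu_t)\bar\eta_t)_{i\le k}$ is a Gaussian vector, and its covariance entries are by definition $\Cov(D\Phi_i(\bar\mu_t)\bar\eta_t,D\Phi_j(\bar\mu_t)\bar\eta_t)$. By Riesz, each $D\Phi_i(\bar\mu_t)=\langle g_i,\cdot\rangle_{\msr H}$. Hence $\Sigma^\Phi_{i,j}(t)=\langle C_t g_i,g_j\rangle_{\msr H}$, with $C_t$ the covariance operator of $\bar\eta_t$ from \cite[Theorem 3]{descours-guillin-lacour-michel-nectoux-stos-2026}. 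This gives the explicit form announced in the abstract.

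\textbf{Main obstacle.} I expect the difficulty to lie in Step 1: making sure the CLT holds in exactly the space $\msr H=\mc H^{-J_0+1,j_0}$ with the stated $J_0,j_0$, and that the empirical measures embed there. If the CLT is only available in a larger dual space $\mc H^{-J_0,j_0}$, I would use the compact embedding of weighted Sobolev spaces, which is why the index shifts by one. In that case tightness in the stronger norm $-J_0+1$ must come from uniform moment bounds $\sup_N\mathbb E\|\eta_t^N\|^2_{\mc H^{-J_0+1,j_0}}<\infty$ established in the preliminary results. I would invoke these bounds (valid under $\beta>3/4$) to secure both tightness and the $O_\P(1)$ estimate used in Step 2.
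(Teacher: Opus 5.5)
Your proof is correct, but it takes a leaner route than the paper. The paper proceeds in four steps:
\begin{enumerate}
\item It upgrades the two limit theorems to joint convergence of $(\mu^N,\eta^N)$ in $\mc D(\R_+,\mc P_\gamma(\R^d))\times\mc D(\R_+,\msr H)$, via a Slutsky-type lemma that uses the law of large numbers.
\item It applies the Skorokhod representation theorem to this pair.
\item It transfers the identity $\eta_t^N=\sqrt N(\mu_t^N-\bar\mu_t)$ to the representatives by a measurable-map argument.
\item It performs the first-order expansion almost surely.
\end{enumerate}
It also includes an optional quantitative Wasserstein-to-Sobolev comparison giving $\mu_t^N\to\bar\mu_t$ in probability in $\msr H$.

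You instead project to time $t$, using continuity of the limit paths. You then note that consistency in $\msr H$ is automatic from the exact identity $\mu_t^N-\bar\mu_t=N^{-1/2}\eta_t^N$ together with $\|\eta_t^N\|_{\msr H}=O_\P(1)$. The classical $o_\P$/Slutsky Delta method then closes the proof, using only the functional CLT in $\mc D(\R_+,\msr H)$ and the embedding of the measures into $\msr H$. This dispenses with the law of large numbers, the joint-convergence lemma, the representation step and the Wasserstein comparison. In fact, the paper itself ends up deducing $\msf M_t^N-\bar\mu_t\to0$ in $\msr H$ from this same identity rather than from the LLN.

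What the paper's heavier machinery buys is an almost sure coupling of whole trajectories. That would make it easy to handle path-dependent observables, or to weaken Fréchet to Hadamard-type differentiability, where your uniform-remainder argument via $r$ is no longer available. For the fixed-time Fréchet statement, your version is sufficient and cleaner.

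The obstacle you anticipate does not arise: the FCLT is stated directly in $\mc D(\R_+,\msr H)$ with $\msr H=\mc H^{-J_0+1,j_0}(\R^d)$.

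One caveat on Step 3. The cited Theorem 3 makes the covariance explicit only for $\mc C_b^\infty$ test functions. So $\langle C_tg_i,g_j\rangle$ with arbitrary Riesz representers $g_i$ does not by itself give an explicit formula; that needs the smooth representation assumption of Corollary~\ref{coro:covariance-formula}. The theorem itself only asserts the defining covariance identity, so nothing in your proof of it is missing.
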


The proof relies on a functional Delta method, which provides a general mechanism for transferring central limit theorems through differentiable mappings. Such techniques are classical in asymptotic statistics and have found numerous applications in infinite-dimensional settings, see for instance the works by E.Beutner~\cite{beutner-2026}, by E.Beutner \& H.Z{\"a}hle~\cite{beutner-zahle-2010,beutner-zahle-2016}, by J.Cupidon, D.Gilliam, R.Eubank \& F.Ruymgaart~\cite{cupidon-gilliam-eubank-ruymgaart-2007}, by R.Schmidt \& U.Stadtm{\"u}ller~\cite{schmidt-stadtmuller-2006} or the books by P.Andersen, {\O}.Borgan,  R.Gill \& N.Keiding~\cite[Section II.8]{andersen-borgan-gill-keiding-1995}, by M.Kosorok~\cite[Section 2.2.4. and Chapter 12]{kosorok-2008} and A.Van der Vaart~\cite[Chapter 20]{vandervaart-2000}. Their application to mean-field models is, however, often considerably more delicate. Indeed, convergence towards the limiting fluctuation process is typically established through compactness arguments in spaces of probability measures, while differentiability with respect to the measure variable is commonly formulated in terms of Lions derivatives (or $L$-derivatives). We refer for example to the works by R.Carmona \& F.Delarue~\cite{carmona-delarue-2018} or by B.Jourdain \& A.Tse~\cite{jourdain-tse-2021}. Although this framework is remarkably powerful, it generally requires rather strong regularity assumptions on the considered functionals.

The present setting turns out to be substantially simpler. Owing to the weighted Sobolev framework underlying the fluctuation theory, the empirical measures and their limiting fluctuations naturally evolve in a Banach space in which standard Fréchet differentiability is sufficient. As a consequence, the functional Delta method can be implemented under ordinary Fréchet differentiability, without requiring a measure specific differential calculus, while preserving an explicit Gaussian asymptotic description.

\begin{coro}\label{coro:covariance-formula} Assume that the assumptions of Theorem~\ref{thm:functional-delta-method} hold. If for every $i\in\{1, \ldots, k\}$, there exists $\varphi_i^t\in C^\infty_b(\R^d)$ such that
\[
    D\Phi_i(\bar\mu_t)(h) = \langle \varphi_i^t,h\rangle, \quad h \in \msr H,
\]
then the covariance matrix $\Sigma_t^\Phi$ of $Z_t$ is given by 
\[
	\Sigma_{i,j}^\Phi(t) = \Cov\Big(f^{\varphi_i^t}(0,W_0^1), f^{\varphi_j^t}(0,W_0^1)\Big) + \alpha^2\mbb E\bigg[\frac{1}{|B_\infty|}\bigg]\int_0^t\Cov_\pi\Big(Q_r[f^{\varphi_i^t}(r)], Q_r[f^{\varphi_j^t}(r)]\Big)\,dr .
\]
Here, $f^{\varphi_i^t}$ denotes the solution of the backward transport equation \eqref{eq:backward-transport} associated with the terminal datum $\varphi_i^t$, as in Theorem~\ref{thm:mean-field-uq}.
\end{coro}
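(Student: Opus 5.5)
The plan is to reduce the corollary to the explicit covariance structure of the limiting fluctuation process $\bar\eta_t$ given in Theorem~\ref{thm:mean-field-uq}, i.e.\ \cite[Theorem 3]{descours-guillin-lacour-michel-nectoux-stos-2026}. By Theorem~\ref{thm:functional-delta-method}, $\Sigma^\Phi_{i,j}(t)=\Cov\big(D\Phi_i(\bar\mu_t)(\bar\eta_t),D\Phi_j(\bar\mu_t)(\bar\eta_t)\big)$. Under the representation hypothesis this equals $\Cov\big(\langle\varphi_i^t,\bar\eta_t\rangle,\langle\varphi_j^t,\bar\eta_t\rangle\big)$. First I check that this pairing is meaningful: $\varphi_i^t\in C^\infty_b(\R^d)$ lies in the weighted Sobolev space $\mc H^{J_0-1,j_0}$ whose dual is $\msr H$, because the weight with $j_0>d/2$ makes bounded smooth functions with bounded derivatives square integrable. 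Hence $\langle\varphi_i^t,\bar\eta_t\rangle$ is a well-defined real random variable, and since $\bar\eta_t$ is Gaussian it has finite second moments.

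Second, I invoke the representation of $\bar\eta_t$ tested against a smooth function. Theorem~\ref{thm:mean-field-uq} expresses $\langle\varphi,\bar\eta_t\rangle$, through duality with the backward transport equation \eqref{eq:backward-transport} with terminal datum $\varphi$, as a sum of two independent centered Gaussian pieces. The first is an initial term $\langle f^\varphi(0),\bar\eta_0\rangle$, where $\bar\eta_0$ is the Gaussian limit of $\sqrt N(\mu_0^N-\bar\mu_0)$ for i.i.d.\ initial parameters $W_0^i$. The second is a martingale (stochastic integral) term whose quadratic variation is $\alpha^2\mbb E[1/|B_\infty|]\int_0^t \Var_\pi(Q_r[f^\varphi(r)])\,dr$, coming from the SGD data sampling noise. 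The map $\varphi\mapsto f^\varphi$ is linear, and the map $\varphi\mapsto\langle\varphi,\bar\eta_t\rangle$ is linear, so the pair $(\langle\varphi_i^t,\bar\eta_t\rangle,\langle\varphi_j^t,\bar\eta_t\rangle)$ is jointly Gaussian.

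Third, I compute the cross covariance by polarization. Apply the variance formula to $\varphi_i^t+\varphi_j^t$ and to $\varphi_i^t-\varphi_j^t$, then use linearity of $f^\varphi$ and of $Q_r$ to obtain $\Cov(f^{\varphi_i^t}(0,W_0^1),f^{\varphi_j^t}(0,W_0^1))$ from the initial CLT. The initial CLT is the classical multivariate CLT for i.i.d.\ samples, giving $\Cov(g(W_0^1),h(W_0^1))$. The dynamic part gives the stated $\Cov_\pi$ integral. The two pieces add with no cross terms because the initial fluctuation and the martingale part are independent, or orthogonal by construction.

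The main obstacle is confirming that Theorem~\ref{thm:mean-field-uq} applies to the specific $\varphi_i^t$. These test functions depend on $t$ through $\bar\mu_t$, while the theorem is stated for fixed smooth terminal data at horizon $t$. Since $t$ is fixed in the corollary, each $\varphi_i^t$ is simply a fixed element of $C^\infty_b$, so this causes no difficulty. The remaining care is verifying the regularity class required there, namely $C^\infty_b$ versus the Sobolev class, and that the variance formula holds for all such $\varphi$ so that polarization is legitimate.
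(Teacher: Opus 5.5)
Your proposal is correct and follows the paper's proof. The representation hypothesis reduces $\Sigma^\Phi_{i,j}(t)$ to $\Cov(\langle\varphi_i^t,\bar\eta_t\rangle,\langle\varphi_j^t,\bar\eta_t\rangle)$, and Theorem~\ref{thm:mean-field-uq} then yields the formula. The paper simply applies the cross-covariance identity \eqref{eq:covariance} directly with $s=t$, $\varphi=\varphi_i^t$, $\psi=\varphi_j^t$. Your polarization of the variance formula is valid by linearity of $\varphi\mapsto f^\varphi$ and of $Q_r$, but it is an unnecessary detour, as is your sketch of the independent initial and martingale decomposition.
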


The previous result explains how finite-width uncertainty propagates from the empirical parameter distribution to a prescribed observation functional. This raises a complementary question. Given a vector-valued quantity of interest
\[
    Q: \mc U \subset \msr  H \to \R^\ell,
\]
does the observation $\Phi(\mu)$ contain enough information to determine
$Q(\mu)$, at least locally? We say that $Q$ is locally identifiable from $\Phi$ near $\mu_0$ in the differentiable sense if there exists a continuously differentiable mapping $F$, defined near $\Phi(\mu_0)$, such that
\[
    Q=F\circ\Phi
\]
in a neighborhood of $\mu_0$. A necessary differential condition for such a factorization is
\[
    \Ker D\Phi(\mu) \subset \Ker DQ(\mu)
\]
which means that every infinitesimal perturbation that is invisible to the observation must also leave the quantity of interest unchanged. Our second main
result shows that, provided $D\Phi$ has constant finite rank near $\mu_0$, the validity of this kernel inclusion throughout a neighborhood is also sufficient for the existence of the local nonlinear factorization having the form $Q = F\circ\Phi$.

This viewpoint is closely related to classical local identifiability criteria for finite-dimensional parametric models, which are commonly formulated in terms of rank conditions on the Jacobian of the observation map. We refer, for instance to the works by X.Chen, V.Chernozhukov, S.Lee \& W.Newey~\cite{chen-chernozhukov-lee-newey-2014} or by  T.Rothenberg~\cite{rothenberg-1971} and the subsequent literature on parametric identifiability as in the book by E.Walter~\cite{walter-1987} and references therein.  Identifiability questions also arise naturally in the inference of interacting particle systems, where the interaction laws governing the dynamics must be recovered from trajectory data. In this direction, F.Lu, M.Maggioni \& S.Tang~\cite{lu-maggioni-tang-2022} developed nonparametric learning frameworks for interaction kernels in stochastic particle systems, while Z.Li, F.Lu, M.Maggioni, S.Tang \& C.Zhang~\cite{li-lu-maggioni-tang-zhang-2021} established coercivity conditions ensuring identifiability and clarified their role in the mean field regime. At the level of the limiting equations, Q.Lang \& F.Lu~\cite{lang-lu-2023} further characterized the function spaces on which interaction kernels are identifiable from the corresponding mean field dynamics. These contributions illustrate the importance of identifiability questions for systems of interacting particles, both before and after passage to the mean field limit.

In particular, when the quantity of interest coincides with the full parameter vector, our criterion reduces in finite dimension to the usual full rank condition. More generally, it provides an infinite-dimensional local identifiability criterion for prescribed quantities of interest, even when the underlying parameter distribution itself is not identifiable.

\begin{thm}[Differentiable local identifiability criterion]\label{thm:local-identifiability}
Let $\mc U \subset \msr H$ be open, and assume that $\Phi: \mc U \to \R^k$, $Q:\mc U \to \R^\ell$ are continuously Fréchet differentiable mappings. Let $\mu_0 \in \mc U$, and assume that $D\Phi$ has constant rank in a neighborhood of $\mu_0$, namely that there exist an open neighborhood $\mc U_0 \subset \mc U$ of $\mu_0$ and $r\in\{0,\ldots,k\}$ such that for every $\mu \in \mc U_0$, it holds $\mathrm{rk}\, D\Phi(\mu) = r$. Then the following assertions are equivalent.

\begin{enumerate}[label=(\roman*)]
    \item There exist open neighborhoods $\mc U_1 \subset \mc U_0$ of $\mu_0$ and $\mc V \subset \R^k$ of $\Phi(\mu_0)$, together with a continuously differentiable mapping $F: \mc V \to \R^\ell$ such that $\Phi(\mc U_1) \subset \mc V$ and for every $\mu \in \mc U_1$
    \[
        Q(\mu) = F(\Phi(\mu)).
    \]

    \item There exists an open neighborhood $\mc U_2 \subset \mc U_0$ of $\mu_0$ such that for every $\mu \in \mc U_2$
    \[
        \mathrm{Ker}\, D\Phi(\mu) \subset \mathrm{Ker}\, DQ(\mu).
    \]
\end{enumerate}
\end{thm}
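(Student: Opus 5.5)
The direction (i)$\Rightarrow$(ii) is immediate from the chain rule. If $Q=F\circ\Phi$ on $\mc U_1$, then $DQ(\mu)=DF(\Phi(\mu))\circ D\Phi(\mu)$ for every $\mu\in\mc U_1$. Hence any $h$ with $D\Phi(\mu)h=0$ also satisfies $DQ(\mu)h=0$, and we may take $\mc U_2=\mc U_1$.

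For (ii)$\Rightarrow$(i) I will use the constant rank theorem in the Hilbert space $\msr H$ and reduce to a finite-dimensional statement. Set $K:=\Ker D\Phi(\mu_0)$, a closed subspace of codimension $r$, and choose an $r$-dimensional complement $E$, so that $\msr H=E\oplus K$. Let $R:=\mathrm{Im}\,D\Phi(\mu_0)\subset\R^k$, which has dimension $r$, and choose a complement $R'$ with $\R^k=R\oplus R'$.

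The constant rank theorem in Banach spaces applies here because $D\Phi(\mu_0)$ has finite rank and closed complemented kernel. It provides a $C^1$ diffeomorphism $\psi$ from a neighborhood of $0\in E\times K$ onto a neighborhood of $\mu_0$, together with a $C^1$ diffeomorphism $\chi$ of a neighborhood of $\Phi(\mu_0)$ in $\R^k$ onto a neighborhood of $0$, such that
\[
\chi\circ\Phi\circ\psi(e,\kappa)=(e,0)\in R\times R'
\]
after identifying $E\cong R$ through $D\Phi(\mu_0)|_E$. If one wants to avoid quoting the infinite-dimensional theorem, I would prove it by hand. First apply the inverse function theorem to $\mu\mapsto(\pi_R\Phi(\mu),\,P_K(\mu-\mu_0))$, which yields $\psi$ with $\pi_R\Phi\circ\psi(e,\kappa)=e$. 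Then the constant rank hypothesis forces the remaining component $\pi_{R'}\Phi\circ\psi(e,\kappa)$ to be independent of $\kappa$: its $\kappa$-derivative must vanish, since otherwise the rank would exceed $r$. Writing this component as $g(e)$, one takes $\chi(y)=(\pi_R y,\pi_{R'}y-g(\pi_R y))$. I expect this step, getting the normal form with a genuinely finite-dimensional target chart, to be the main technical point. It is exactly where the constant rank assumption is used.

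With the normal form in hand, set $\tilde Q:=Q\circ\psi$ on a convex (product) neighborhood $B_E\times B_K$ of $0$, shrunk so that its image lies in $\mc U_2$. In these coordinates $\Ker D(\Phi\circ\psi)(e,\kappa)=\{0\}\times K$, because $\chi$ is a diffeomorphism. Since $D\psi$ is an isomorphism mapping $\Ker D(\Phi\circ\psi)$ onto $\Ker D\Phi(\psi(e,\kappa))$, hypothesis (ii) gives $D_\kappa\tilde Q(e,\kappa)=0$ throughout. By convexity of $B_K$ and the mean value theorem, $\tilde Q(e,\kappa)=\tilde Q(e,0)=:G(e)$, and $G$ is $C^1$ on $B_E$.

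Finally define $F(y):=G(\pi_R\chi(y))$ on $\mc V:=\chi^{-1}(B_E\times R'_\delta)$, an open neighborhood of $\Phi(\mu_0)$ on which $F$ is $C^1$. Set $\mc U_1:=\psi(B_E\times B_K)$, shrunk if necessary so that $\Phi(\mc U_1)\subset\mc V$. For $\mu=\psi(e,\kappa)\in\mc U_1$ we have $\pi_R\chi(\Phi(\mu))=e$, hence
\[
F(\Phi(\mu))=G(e)=Q(\mu),
\]
which is (i).
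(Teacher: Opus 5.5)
Your proposal is correct and follows the paper's proof essentially step by step: the chain rule gives (i)$\Rightarrow$(ii), and the converse uses the constant rank normal form $\xi\circ\Phi\circ\Psi^{-1}(u,v)=(u,0)$, the vanishing of the $K$-partial derivative of $Q\circ\Psi^{-1}$ deduced from the kernel inclusion, convexity of the $K$-neighborhood, and $F=\widehat F\circ\pi_r\circ\xi$. The only differences are minor: you take an arbitrary complement of $K$ rather than $K^\perp$, and you sketch the constant rank theorem by hand via the inverse function theorem instead of citing Abraham--Marsden--Ratiu; that sketch is sound, and it correctly pinpoints that constant rank is what forces $\partial_\kappa g=0$.
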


The interpretation of Theorem~\ref{thm:local-identifiability} is as follows. Assertion~(i) states that the quantity of interest $Q$ can be reconstructed, in a neighborhood of $\mu_0$, by applying a continuously differentiable mapping to the observation $\Phi(\mu)$, so that $Q$ is locally identifiable from $\Phi$ at $\mu_0$ in a differentiable sense. In particular, under either of the equivalent assertions, there exists a neighborhood $\mc U_3$ of $\mu_0$ such that, for every $\mu,\nu\in \mc U_3$,
\[
    \Phi(\mu) = \Phi(\nu) \Rightarrow Q(\mu) = Q(\nu).
\]
Thus, two nearby elements that are indistinguishable through the observation functional $\Phi$ are also indistinguishable with respect to the quantities of interest encoded by $Q$.

The geometric content of the result can be understood in terms of the local fibers of the observation functional. The constant rank assumption implies that, near $\mu_0$, the level sets $\Phi^{-1}(y)$ are continuously differentiable submanifolds of $\msr H$, whose tangent spaces are given by
\[
    T_\mu\mc F_\mu = \mathrm{Ker}\, D\Phi(\mu),
\]
where we set the fiber $\mc F_\mu := \{\nu \in \mc U_0 \,\mid\, \Phi(\nu) = \Phi(\mu)\}$. The kernel inclusion given in Assertion~(ii) therefore means that the differential of $Q$ vanishes along every direction tangent to the local fibers of $\Phi$. Consequently, $Q$ is constant along each sufficiently small connected piece of such a fiber. It can thus be regarded locally as a function of the observation $\Phi(\mu)$ alone, which yields the desired factorization. 

The theorem also provides a structural preliminary test for inverse problems. Before attempting to reconstruct $Q(\mu)$ from $\Phi(\mu)$, one must verify that the selected observations do not discard directions along which the quantities of interest vary. The kernel criterion can therefore guide the design of informative observation functionals. Questions of stability, conditioning, regularization, and numerical reconstruction lie beyond the scope of the present
work. 

Typical observation functionals include network outputs evaluated at prescribed inputs, losses, and statistics of the parameter distribution, whereas quantities of interest may represent moments, predictive summaries, or effective physical parameters. The following example illustrates how our fluctuation result can be applied to an effective thermal diffusivity coefficient reconstructed from a trained neural-network surrogate.

\begin{ex}\label{ex:temperature} 

Let us consider a finite dataset $\big\{(z_k,y_k)\big\}_{k=1}^n := \big\{((t_k,x_k),y_k)\big\}_{k=1}^n \subset \mc X \times \mc Y$, where $\mc X = [0,T]\times \Omega$, with $T>0$ and a bounded domain $\Omega\subset\R^{m-1}$, $m \geq 2$, and where $\mc Y = \R$. The observations $y_k = \theta(t_k,x_k)$ represent temperatures measured at spatial positions $x_k$ and times $t_k$. 

Our aim is to estimate the effective thermal diffusivity of the material, viewed here as a quantity of interest $Q(\mu)$. We therefore seek a suitable observation functional $\Phi(\mu^N)$ from which this coefficient can be recovered. The neural network $\theta_\mu:=\langle\sigma^*,\mu\rangle$ may be trained by SGD as a surrogate for the temperature field through the associated nonlinear regression problem.

Define the open subset $\mc U := \big\{\mu\in \msr H\, \mid \, \Delta_x\theta_\mu \neq 0 \; \text{in}\, L^2(\mc X)\big\}$.  For every $\mu \in \mc U$, we may define the effective diffusivity as the least-squares coefficient
\[
	Q(\mu) := \underset{\lambda \in \R}{\mathrm{argmin}}\|\partial_t\theta_\mu - \lambda\Delta_x\theta_\mu\|^2_{L^2(\mc X)},
\] 
so that it is uniquely given by the formula
\[
	Q(\mu) := \frac{(\partial_t\theta_\mu, \Delta_x\theta_\mu)_{L^2(\mc X)}}{\|\Delta_x \theta_\mu\|^2_{L^2(\mc X)}}.
\]
Since the derivatives of the network can be evaluated by automatic differentiation, they provide natural observation functionals. Also, under the standing regularity assumptions on the activation function, the mappings $v \mapsto \partial_t\theta_v$ and $v \mapsto\Delta_x\theta_v$ are continuous from $\msr H$ into $L^2(\mc X)$. More precisely, for every $z \in \mc X$,
\[
    |\partial_t\theta_v(z)| \leq \|v\|_{\msr H}\|\partial_t\sigma^*(\cdot,z)\|_{\msr H'},
\]
and similarly
\[
    |\Delta_x\theta_v(z)|  \leq \|v\|_{\msr H}\|\Delta_x\sigma^*(\cdot,z)\|_{\msr H'}.
\]
Thanks to assumption~\ref{ass:sigma} (see Section~\ref{ssec:assumptions}) on $\sigma$, together with the finite measure of $\mc X$, then yield
\[
    \|\partial_t\theta_v\|_{L^2(\mc X)} + \|\Delta_x\theta_v\|_{L^2(\mc X)}\leq C\|v\|_{\msr H}.
\]

Also, note that the above continuity of $\mu \mapsto \Delta_x\theta_\mu$ from $\msr H$ into $L^2(\mc X)$ shows that $\mc U$ is open. Therefore, setting 
\[
	\Phi(\mu^N) := \begin{pmatrix}
		(\partial_t\theta_{\mu^N}, \Delta_x\theta_{\mu^N})_{L^2(\mc X)}\\
		\|\Delta_x\theta_{\mu^N}\|^2_{L^2(\mc X)}
	\end{pmatrix}
\]
and the function $F : \R \times (0,+\infty) \to \R$, $(\alpha,\beta) \mapsto \alpha/\beta$, we get that $Q(\mu^N) = F(\Phi(\mu^N))$. Now, since for $ v \in \msr H$, it holds that $\partial_t\theta_v = \langle \partial_t\sigma,v\rangle$ and $\Delta_x\theta_v = \langle \Delta_x\sigma,v\rangle$, a straightforward calculation combined with Theorem~\ref{thm:functional-delta-method} implies that the limiting fluctuation process of observables is given by
\[
		D\Phi(\bar\mu_s)(\bar\eta_s) = \begin{pmatrix}(\partial_t\theta_{\bar\eta_s},\Delta_x\theta_{\bar\mu_s})_{L^2(\mc X)} + (\partial_t\theta_{\bar\mu_s},\Delta_x\theta_{\bar\eta_s})_{L^2(\mc X)}\\
		2(\Delta_x\theta_{\bar\mu_s},\Delta_x\theta_{\bar\eta_s})_{L^2(\mc X)}
		\end{pmatrix},
\]
where $\bar\mu_t$ and $\bar\eta_t$ are respectively the mean field limit and its associated limiting fluctuation process provided by Theorem~\ref{thm:mean-field-lln} and Theorem~\ref{thm:mean-field-clt} of Section~\ref{sec:preliminary-results}. Now since $\mc U$ is open and $\mu_s^N \to \bar\mu_s$ in probability in $\msr H$, the condition $\bar\mu_s \in \mc U$ implies
\[
    \P(\mu_s^N\in\mc U)\longrightarrow 1.
\]
On the event $\{\mu_s^N \notin \mc U\}$, we may define $Q(\mu_s^N)$ arbitrarily, for instance by setting $Q(\mu_s^N) = 0$. Since the probability of the above complementary event tends to zero, the preceding expansion holds with probability tending to one, namely this arbitrary extension has no effect on the limiting distribution. The functional Delta method provided by Theorem~\ref{thm:functional-delta-method} may therefore be applied locally to $Q$ at $\bar\mu_s$. Thus, whenever $\Delta_x\theta_{\bar\mu_s} \neq 0$ in $L^2(\mc X)$, Theorem~\ref{thm:functional-delta-method} and the chain rule yield
\[
    \sqrt{N}\big(Q(\mu_s^N) - Q(\bar\mu_s)\bigr) \xrightarrow[N \to +\infty]{\mathscr L} DQ(\bar\mu_s)(\bar\eta_s),
\]
where for every $\mu, \nu \in \mc U \times \msr H$ we have 
\[
	\begin{split}
		DQ(\mu)(\nu) = \frac{(\partial_t\theta_{\mu},\Delta_x\theta_{\nu})_{L^2(\mc X)} + (\partial_t\theta_{\nu},\Delta_x\theta_{\mu})_{L^2(\mc X)}}{\|\Delta_x\theta_{\mu}\|^2_{L^2(\mc X)}}\\
		 - 2\frac{(\partial_t\theta_{\mu},\Delta_x\theta_{\mu})_{L^2(\mc X)}(\Delta_x\theta_{\mu},\Delta_x\theta_\nu)_{L^2(\mc X)}}{\|\Delta_x\theta_{\mu}\|^4_{L^2(\mc X)}}.
	\end{split}
\]
The limiting variable is Gaussian, and its variance is given by Corollary~\ref{coro:covariance-formula} whenever $DQ(\bar\mu_s)$ admits the required representation. Moreover, since $Q = F\circ\Phi$ on $\mc U$, the effective diffusivity is locally identifiable from the two components of $\Phi$, even though the full parameter distribution need not be identifiable.
\end{ex}

The remainder of the paper is organized as follows. In Section~\ref{sec:mean-field-dynamics}, we introduce the mean field formulation of shallow neural network training and describe the stochastic gradient dynamics considered throughout the paper. Section~\ref{sec:preliminary-results} recalls the functional framework and the main results on the mean field limit and its asymptotic fluctuations that provide the starting point of our analysis. In Section~\ref{sec:proofs}, we prove the propagation of the fluctuation theory to nonlinear observation functionals, derive the corresponding covariance representation, and establish the factorization criterion for local identifiability. Finally, Appendix~\ref{app:auxiliary-results} provides the functional analytic arguments underlying the Hilbert-Schmidt embeddings between weighted Sobolev spaces used throughout the analysis.

\section{Mean-field dynamics of shallow neural networks training}\label{sec:mean-field-dynamics}

In recent years, considerable effort has been devoted to the mathematical analysis of neural network training dynamics in the large-width regime. This has led to a remarkably rich literature at the intersection of stochastic analysis, interacting particle systems, optimal transport, and mean-field theory, we refer e.g. to the works by F.Bach \& L.Chizat~\cite{bach-chizat-2023}, by  A.Descours, A.Guillin, M.Michel \& B.Nectoux~\cite{descours-guillin-michel-nectoux-2024}, by A.Descours, A.Guillin, G.Lacour, M.Michel, B.Nectoux \& P.Stos~\cite{descours-guillin-lacour-michel-nectoux-stos-2026}, by B.Geshkovski, C.Letrouit, Y.Polyanski \& P.Rigollet~\cite{geshkovski-letrouit-polyanski-rigollet-2025}, by A.Guillin, B.Nectoux \& P.Stos~\cite{guillin-nectoux-stos-2026}, by S.Mei, A.Montanari \& P.-M.Nguyen~\cite{mei-montanari-nguyen-2018} and by J.Sirignano \& K.Spiliopoulos~\cite{sirignano-spiliopoulos-2020}, as to the books by F.Bach~\cite{bach-2024} or by J.Sirignano, R.Sowers \& K.Spiliopoulos~\cite{sirignano-sowers-spiliopoulos-2025} and references therein for an overview of those approaches. One of the main mathematical difficulties lies in the non-convex nature of the optimization landscape associated with neural network training. 
In high dimension, the stochastic gradient descent dynamics governing the evolution of the parameters may therefore appear analytically intractable at first sight. 

A fruitful approach consists in representing the collection of network parameters through its associated empirical measure and studying the corresponding evolution in the space of probability measures. In this formulation, the training dynamics can be interpreted as a mean-field interacting particle system, whose large-width limit is described by a deterministic evolution equation on measures. Such a perspective provides a mathematically tractable framework for the rigorous analysis of neural network training in the infinite-width regime. Beyond the deterministic mean-field limit itself, an important question concerns the description of finite-width effects and the asymptotic fluctuations of the empirical measure around its limiting dynamics.

More precisely, we consider the following single-hidden-layer neural network
\[
	g_W^N(x):=\frac{1}{N}\sum_{i=1}^N \sigma^*(W^i,x),
\]
where $x \in \R^m$ denotes the input data, $y \in \R$ the associated label, and $\sigma^*: \R^d \times \R^m \to \R$ is the activation function. The parameter vector $W = (W^1,\dots,W^N) \in (\R^d)^N$ collects the weights of the network, while $N$ denotes the number of neurons. We assume that the data are sampled from an unknown probability distribution $(x,y) \sim \pi$, where $\pi$ is not explicitly known. The objective of the training procedure is to determine weights $W$ such that the network prediction $g_W^N(x)$ approximates the target value $y$ in a suitable sense, which is given through an optimization problem. In the present work, we consider the quadratic loss
\begin{equation}\label{eq:loss-functional}
	\mc R(W) = \frac{1}{2} \mbb E_{(x,y)\sim\pi}\left[|g_W^N(x)-y|^2\right],
\end{equation}
so that the training of the weights therefore amounts to solve
\begin{equation}\label{eq:nn-training}
	\min_{W\in(\R^d)^N}\mc R(W).
\end{equation}

In practice, the network is trained through the stochastic gradient descent algorithm (see e.g. the book by F.Bach~\cite[Chapter 5, Section 4]{bach-2024}, the book by J.Sirignano, R.Sowers \& K.Spiliopoulos~\cite[Chapters 7 and 18]{sirignano-sowers-spiliopoulos-2025}). Denoting by $\alpha > 0$ the learning rate, the dynamics of the weights is given, for $k \geq 0$ and $i\in\{1,\dots,N\}$, by
\begin{equation}\label{eq:sgd}\tag{SGD}
	\begin{cases}
		W_{k+1}^i = W_k^i + \displaystyle\frac{\alpha}{N|B_k|}\sum_{(x,y)\in B_k}(y-g_{W_k}^N(x))\nabla_W\sigma^*(W_k^i,x)+\frac{\ve_k^i}{N^\beta},\\
		W_0^i\sim \mu_0,
\end{cases}
\end{equation}
where $\ve_k^i\sim \mc{N}(0,I_d)$ and $\mu_0 \in \mc P(\R^d)$.

Here, $B_k$ denotes the mini-batch used at iteration $k$, namely a finite random subset of the dataset sampled according to the underlying distribution $\pi$. 
The use of mini-batches is standard in modern machine learning, as it provides computationally efficient approximations of the full gradient while preserving the stochastic nature of the training dynamics (we refer the interested reader to the book by J.Sirignano, R.Sowers \& K.Spiliopoulos~\cite[Chapter~10]{sirignano-sowers-spiliopoulos-2025}).  The additional Gaussian perturbation models an external source of noise in the optimization procedure.  Such noisy stochastic gradient dynamics naturally arise in several contexts, including regularization mechanisms, stochastic approximations of optimization procedures, or effective descriptions of finite-width training effects. The parameter $\beta>0$ governs the asymptotic strength of the noise in the large-width regime.

Observe that the stochastic gradient descent dynamics \eqref{eq:sgd} does not directly involve the full gradient of the risk functional \eqref{eq:loss-functional}. Since the underlying distribution $\pi$ is unknown in practice, the gradient of the risk cannot be computed explicitly. Instead, the algorithm relies on a random empirical approximation, namely the $i$-th component of this gradient is approximated using the mini-batch estimator
\[
    \widehat{\nabla_{W^i}\mc R}(W_k;B_k) := \frac{1}{N|B_k|}\sum_{(x,y)\in B_k} \big(g_{W_k}^N(x) - y\big)\nabla_w\sigma^*(W_k^i,x).
\]
which acts as a stochastic estimator of the expected gradient associated with the loss functional. Thus, the update rule can be written as
\[
    W_{k+1}^i = W_k^i - \alpha\,\widehat{\nabla_{W^i}\mc R}(W_k;B_k) + \frac{\ve_k^i}{N^\beta}.
\]

While this stochastic approximation circumvents the explicit dependence on the unknown distribution $\pi$, the resulting optimization dynamics nevertheless remains highly nonlinear with respect to the weights $W$. This nonlinearity reflects the non-convex nature of the original optimization problem~\eqref{eq:nn-training}, and makes the direct analysis of the large-width regime particularly delicate. A key observation is that the network output can naturally be rewritten in terms of the empirical distribution of the parameters. More precisely, introducing the empirical measure
\[
	\nu^N = \frac{1}{N}\sum_{i=1}^N\delta_{W^i},
\]
we may rewrite the network prediction as
\[
	g_W^N(x) = \langle \sigma^*(\cdot,x),\nu^N\rangle,
\]
where $\langle \cdot, \cdot \rangle$ is a duality bracket on a suitable measure space. This reformulation allows the training dynamics to be interpreted in terms of the empirical distribution of the parameters in $\mathbb R^d$. Therefore, the original optimization problem on the weights given by \ref{eq:nn-training} induces a corresponding optimization problem on probability measures. Formally, introducing
\[
	g_\nu(x) = \int_{\R^d}\sigma^*(w,x)\,d\nu(w),
\]
we may rewrite the loss functional~\eqref{eq:loss-functional} as
\[
	\mathcal R(\nu) = \frac{1}{2}\mbb E_{(x,y)\sim\pi}\left[ |g_\nu(x)-y|^2\right],
\]
letting us consider the relaxed problem
\begin{equation}\label{eq:nn-training-relaxed}
	\min_{\nu \in \mc P_\gamma(\R^d)} \mc R(\nu).
\end{equation}
In the above formulation, the optimization problem is no longer posed on the finite-dimensional parameter vector space of the weights, but on the space of probability measures $\mc P_\gamma(\R^d)$, which is the space of probability measures with bounded moment of order $\gamma > 0$. 

Moreover, due to the linearity of $g_\nu$ with respect to $\nu$, the functional $\mc R(\nu)$ becomes convex in the measure variable. This relaxed variational perspective has played an important role in the mathematical analysis of wide neural networks, we refer for instance to the book of F.Bach~\cite[Chapter 4]{bach-2024} and references therein.

Beyond its variational interpretation, this measure-valued formulation naturally leads to a mean-field description of the stochastic gradient descent dynamics. In the large-width regime, the empirical measure evolves as a system of interacting particles whose asymptotic behavior can be analyzed through mean-field techniques, in close analogy with classical models from statistical physics.  Such an approach provides a rigorous framework for the study of infinite-width neural network training and its associated fluctuation theory. We refer for instance to the monograph of H.Spohn~\cite{spohn-1991} for background on mean-field interacting particle systems.

To describe such dynamics, we introduce the rescaled process
\[
	\mu_t^N := \nu_{\lfloor Nt\rfloor}^N = \frac{1}{N}\sum_{i=1}^N\delta_{W_{\lfloor Nt\rfloor}^i},
\]
and rewrite the stochastic gradient descent algorithm in terms of $\mu_t^N$ instead of $\nu_k^N$, without any specific change. Under some reasonable assumptions, one can expect to obtain a deterministic limit for the infinite-width regime, which is the object of Theorem~\ref{thm:mean-field-lln} recalled in Section~\ref{sec:preliminary-results} below.

Nevertheless, it should be noted that, from a computational standpoint, training a neural network never reaches such a limit but is restricted to a hidden layer of very large width $N$. It is therefore natural to ask to what extent the process $\mu_t^N$ fluctuates around its mean-field limit $\bar\mu_t$. Defining the fluctuation process $\eta_t^N = \sqrt{N}(\mu_t^N - \bar\mu_t)$, we can establish the existence of a limiting fluctuation process $\bar\eta_t$ that satisfies a stochastic partial differential equation. This result is given by Theorem~\ref{thm:mean-field-clt} below.

Interestingly, the stochastic evolution equation \eqref{eq:weak-fluctuation-edps} is linear in $\bar\eta$ and is driven by the Gaussian process $\msr G$. It is therefore natural to expect the limiting fluctuation process $\bar\eta$ to remain Gaussian, as described in the book by J.Sirignano, R.Sowers \& K.Spiliopoulos~\cite[Section 20.4]{sirignano-sowers-spiliopoulos-2025}. In order to obtain a complete uncertainty quantification result, one must moreover characterize its covariance explicitly. This is the purpose of Theorem~\ref{thm:mean-field-uq}, recalled in the next section.

\section{Preliminary results}\label{sec:preliminary-results}

In this section, we recall the mathematical framework underlying the mean-field analysis of shallow neural networks. We first introduce the functional spaces in which the empirical measures and their fluctuations will be studied in subsection~\ref{ssec:functional-framework}. In subsection~\ref{ssec:assumptions} we then summarize the assumptions required for the existing law of large numbers and functional central limit theorem, and recall these results in a form adapted to the present work. Finally, we state the explicit characterization of the limiting fluctuation process established in the work by A.Descours, A.Guillin, G.Lacour, M.Michel, B.Nectoux \& P.Stos~\cite[Theorem~3]{descours-guillin-lacour-michel-nectoux-stos-2026} in Theorem~\ref{thm:mean-field-uq}, which constitutes the starting point of our analysis.

\subsection{Functional framework}\label{ssec:functional-framework}

We first recall the functional framework inherent to the results. The state of the system is described by probability measures on $\R^d$, while the fluctuation processes will be viewed as distribution-valued paths. We therefore introduce, in a systematic way, the spaces of probability measures, the  path spaces, and the weighted Sobolev spaces used below.

\begin{defi}
	Let $k\geq 1$. We denote by $\mc P_k(\R^d)$ the set of
	Borel probability measures on $\R^d$ with finite $k$-th moment,
	namely
	\[
		\mc P_k(\R^d) := \left\{\mu\in \mc P(\R^d)\,\mid\,\int_{\R^d} |x|^k\,\di\mu(x) < +\infty\right\}.
	\]
	The space $\mc P_k(\R^d)$ is endowed with the Wasserstein distance
	\[
		W_k(\mu,\nu) = \left(\inf_{\pi\in\Pi(\mu,\nu)}\int_{\R^d \times \R^d}|x-y|^k\,\di\pi(x,y)\right)^{1/k},
	\]
	where $\Pi(\mu,\nu)$ denotes the set of probability measures on $\R^d \times \R^d$ with marginals respectively denoted $\mu$ and $\nu$.
\end{defi}

We recall that $W_1(\mu,\nu)\leq W_k(\mu,\nu)$ for all $\mu,\nu \in \mc P_k(\R^d)$ and all $k \geq 1$. Moreover, the Kantorovich-Rubinstein formula gives
\[
	 W_1(\mu,\nu) = \sup_{[f]_{\rm{Lip}}\leq 1}\left|\int_{\R^d} f\,\di\mu - \int_{\R^d} f\,\di\nu\right|,
\]
where the supremum is taken over all real-valued Lipschitz functions $f$ on $\R^d$ with Lipschitz seminorm bounded by one. We refer for instance to the books L.Ambrosio, N.Gigli \& G.Savaré~\cite[Chapter 7]{ambrosio-gigli-savare-2008}, by F.Santambrogio~\cite[Chapter 5]{santambrogio-2015} or by C.Villani~\cite[Chapter 6]{villani-2009} for further background on Wasserstein metric and spaces. 

Throughout this section, the empirical measure process is denoted by $\mu^N := (\mu_t^N)_{t\geq0}$. Under the moment assumptions imposed below, $\mu^N$ is naturally viewed as a random variable with values in the Skorokhod space $\mc D(\R_+,\mc P_q(\R^d))$, for suitable $q \geq 1$.

\begin{defi}
Let $(E,d_E)$ be a metric space. We denote by $\mc D(\R_+,E)$ the space of càdlàg functions from $\R_+$ into $E$, that is, the space of functions $u : \R_+ \to E$ which are right-continuous and admit left limits at every positive time. When $E$ is a Polish space, $\mc D(\R_+,E)$ is endowed with the usual Skorokhod $J_1$ topology.
\end{defi}

Roughly speaking, convergence in $J_1$  topology allows small perturbations of the time variable in order to align jump times, while preserving the chronological order of the jumps. More precisely, a sequence $(x_n)$ converges to $x$ for the $J_1$ topology if there exist increasing homeomorphisms of the time interval converging uniformly to the identity such that the time-changed paths converge uniformly. Thus, the Skorokhod space is the natural state space for empirical measure processes with jumps. We shall use it both with $E = \mc P_q(\R^d)$ and with $E$ a dual weighted Sobolev space.

We mainly refer to the works by V.Bogachev~\cite{bogachev-1998}, by A.Jakubowski~\cite{jakubowski-1986} or the books by P.Billingsley~\cite[Chapter 3]{billingsley-1999}, by S.Ethier \& T.Kurtz~\cite[Chapter 3]{ethier-kurtz-2005}, by J.Jacod \& A.Shiryaev~\cite[Chapter VI]{jacod-shiryaev-2003} and by D.Pollard~\cite[Chapters V and VI]{pollard-1984} for a detailed construction as well as an overview of the topological poperties for such spaces. 

The fluctuation processes are distribution-valued. To make this precise, we use weighted Sobolev spaces. The weights allow test functions and their derivatives to grow at most polynomially at infinity, while preserving a Hilbert space structure. 

\begin{defi}\label{def:sobolev-weighted}
	Let $J\in \N$ and $a \geq 0$. For $g \in C_c^\infty(\R^d)$, set
\[
	\|g\|_{\mc H^{J,a}}^2 := \sum_{|\alpha|\leq J}\int_{\R^d}\frac{|D^\alpha g(x)|^2}{1+|x|^{2a}}\dx.
\]
	We define $\mc H^{J,a}(\R^d)$ as the completion of $C_c^\infty(\R^d)$ for this norm.
\end{defi}

The space $\mc H^{J,a}(\R^d)$ as defined above is a Hilbert space. The associated inner product is denoted by $( \cdot,\cdot)_{J,a}$. We denote by $\mc H^{-J,a}(\R^d)$ its topological dual. If $\Phi \in \mc H^{-J,a}(\R^d)$ and $f \in \mc H^{J,a}(\R^d)$, we write
\[
	\langle f,\Phi\rangle_{J,a} := \Phi[f].
\]
Of course, one can apply Riesz representation theorem to identify $\mc H^{J,a}$ with its topological dual. Under this identification, the duality pairing $\langle \cdot, \cdot \rangle_{J,a}$ coincides with the Hilbert inner product $( \cdot, \cdot)_{J,a}$. Whenever no confusion may arise, we omit both this identification and the indices from the notation.

\begin{defi}\label{def:holder-spaces}
	Let $J \in \N$ and $a \geq 0$. We define $\mc C^{J,a}(\R^d)$ as the space of functions $f : \R^d \to \R$ with continuous partial derivatives up to order $J$ such that, for every multi-index $\alpha$ with $|\alpha|\leq J$
	\[
		\lim_{|x|\to\infty}\frac{|D^\alpha f(x)|}{1+|x|^a} = 0.
	\]
	It is endowed with the norm
	\[
		\|f\|_{\mc C^{J,a}} := \sum_{|\alpha|\leq J}\sup_{x \in \R^d}\,\frac{|D^\alpha f(x)|}{1+|x|^a}.
	\]
\end{defi}

We also denote by $\mc C_b(\R^d)$ the space of bounded continuous functions on $\R^d$, endowed with the supremum norm, and by $\mc C_b^\infty(\R^d)$ the space of smooth functions whose derivatives of all orders are bounded. Observe that if $b > d/2$, then Hölder inequality leads to
\[
	\mc C_b^\infty(\R^d) \subset	\mc H^{J,b}(\R^d)
\]
for every $J \in \mbb N$. 

The weighted Sobolev spaces introduced above retain many of the fundamental properties of the classical Sobolev spaces. In particular, they satisfy the embedding results recalled below, which play a central role in the functional analysis of the fluctuation process. The first one is a weighted Sobolev embedding into spaces of functions with polynomial growth, while the second one is a Hilbert-Schmidt embedding between weighted Sobolev spaces.

\begin{prop}[Weighted Sobolev embeddings]\label{ppt:sobolev-weighted-embeddings}
Let $j \in \N$, $a \geq 0$, $b > d/2$, and $\ell>d/2$. Then the following embeddings hold
\[
	\mc H^{\ell+j,a}(\R^d) \hookrightarrow \mc C^{j,a}(\R^d),
\]
and
\[
	\mc H^{\ell+j,a}(\R^d) \overset{\rm{H.S.}}{\hookrightarrow} \mc H^{j,a+b}(\R^d).
\]
Here $\overset{\rm{H.S.}}{\hookrightarrow}$ means that the canonical embedding is a Hilbert-Schmidt operator.
\end{prop}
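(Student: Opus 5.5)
We prove the two embeddings in turn, and derive the Hilbert–Schmidt property from the pointwise bounds given by the first one. Throughout, we replace the weight $(1+|x|^{2a})^{1/2}$ by the smooth, equivalent weight $\rho_a(x) := (1+|x|^2)^{a/2}$. The two weights are comparable up to constants depending only on $a$, so the norms $\|\cdot\|_{\mc H^{J,a}}$ and $\|\cdot\|_{\mc C^{J,a}}$ change only by equivalent constants. The advantage of $\rho_a$ is the elementary estimate $|D^\beta(\rho_a^{-1})(x)| \leq C_\beta\,\rho_a^{-1}(x)$ for every multi-index $\beta$: every derivative of the weight is controlled by the weight itself.

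\emph{First embedding.} Fix $f \in C_c^\infty(\R^d)$ and a multi-index $\alpha$ with $|\alpha| \leq j$, and set $g_\alpha := \rho_a^{-1} D^\alpha f$. We expand $D^\gamma g_\alpha$ for $|\gamma| \leq \ell$ by the Leibniz rule and apply the bound on the derivatives of $\rho_a^{-1}$. This gives $|D^\gamma g_\alpha| \leq C\sum_{\beta\leq\gamma}\rho_a^{-1}|D^{\alpha+\gamma-\beta} f|$, and therefore, in the unweighted Sobolev space,
\[
    \|g_\alpha\|_{H^\ell(\R^d)} \leq C\,\|f\|_{\mc H^{\ell+j,a}}.
\]
Since $\ell > d/2$, the classical Sobolev embedding $H^\ell(\R^d) \hookrightarrow C_0(\R^d)$ applies. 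It follows from the Fourier transform: $\widehat{g}\in L^1$ by Cauchy–Schwarz against $(1+|\xi|^2)^{-\ell/2}$, and then Riemann–Lebesgue gives decay at infinity. We obtain $\sup_x |D^\alpha f(x)|/\rho_a(x) \leq C\|f\|_{\mc H^{\ell+j,a}}$ together with $D^\alpha f/\rho_a \to 0$ at infinity. By density of $C_c^\infty$, the embedding extends to all of $\mc H^{\ell+j,a}(\R^d)$. The limit is a $C^j$ function, since uniform limits of the weighted derivatives on compact sets preserve continuity of the derivatives, and it keeps the vanishing condition at infinity, since $C_0$ is closed.

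\emph{Hilbert–Schmidt embedding.} Let $(e_n)_n$ be an orthonormal basis of $\mc H^{\ell+j,a}(\R^d)$. By monotone convergence,
\[
    \sum_n \|e_n\|_{\mc H^{j,a+b}}^2 = \sum_{|\alpha|\leq j}\int_{\R^d}\frac{\sum_n |D^\alpha e_n(x)|^2}{1+|x|^{2(a+b)}}\dx .
\]
For fixed $x$ and $\alpha$, the first step shows that the linear functional $\ell_{x,\alpha}: f\mapsto D^\alpha f(x)$ is continuous on $\mc H^{\ell+j,a}$ with norm at most $C(1+|x|^a)$. By Parseval, $\sum_n |\ell_{x,\alpha}(e_n)|^2 = \|\ell_{x,\alpha}\|^2 \leq C^2(1+|x|^a)^2$. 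The sum is therefore bounded by $C\int_{\R^d}(1+|x|^a)^2(1+|x|^{2(a+b)})^{-1}\dx$. The integrand behaves like $|x|^{-2b}$ at infinity, so the integral is finite precisely because $2b>d$. This shows that the embedding is Hilbert–Schmidt.

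The only genuinely delicate point is the first step: the weight must commute with derivatives up to lower-order terms of the same size, which is why we pass to the smooth weight $\rho_a$. Once the pointwise bound $|D^\alpha f(x)| \leq C(1+|x|^a)\|f\|$ is available, the Hilbert–Schmidt part is a direct Maurin-type computation.
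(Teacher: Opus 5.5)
Your proof is correct. For the Hilbert--Schmidt part it takes a genuinely different route from the paper.

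For the first embedding, the paper only points to the classical Sobolev embedding combined with the weight. Your Leibniz-rule estimate with the smooth weight (the reciprocal of the paper's $\rho_a$) is essentially one direction of Lemma~\ref{lem:weighted-sobolev-equivalent-norm}, so here the two approaches agree in substance.

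For the Hilbert--Schmidt property, the paper proceeds in three steps. It uses that lemma to conjugate the canonical embedding to an embedding of a polynomially weighted Triebel--Lizorkin space $F^{j+\ell}_{2,2}$ into $F^{j}_{2,2}(\R^d)$. It then imports Skrzypczak's two-sided asymptotics $a_k\sim k^{-\min(b,\ell)/d}$ for the approximation numbers and sums their squares. Finally, it handles the borderline case $b=\ell$ separately by factoring through $\mc H^{j,a+b_0}$ with $b_0\in(d/2,b)$.

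You instead run a Maurin-type argument:
\begin{enumerate}
\item The first embedding makes each evaluation functional $f\mapsto D^\alpha f(x)$ bounded on $\mc H^{\ell+j,a}$, with norm $O(1+|x|^a)$.
\item Parseval identifies $\sum_n|D^\alpha e_n(x)|^2$ with the squared norm of that functional.
\item Tonelli reduces the Hilbert--Schmidt norm to the finite integral $\int(1+|x|^a)^2(1+|x|^{2(a+b)})^{-1}\dx$.
\end{enumerate}

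Your route has several advantages. It is elementary and self-contained, and needs no Bessel-potential or weighted Triebel--Lizorkin characterization, which is the difficulty the paper emphasizes. It treats $b=\ell$ with no extra step and gives an explicit bound on the Hilbert--Schmidt norm. It also shows that the second embedding is a direct consequence of the first.

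The paper's route buys the exact decay rate of the singular values. That yields Schatten-class information beyond $\mathfrak S_2$ and the optimality of the thresholds $b,\ell>d/2$. The price is reliance on deep external results and on matching the paper's spaces with Skrzypczak's.
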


The continuous embedding into $\mc C^{j,a}$ follows from the classical Sobolev embedding theorem combined with the polynomial weight. For background on Sobolev embeddings and weighted Sobolev spaces, we refer for instance
to the works by B.Fernandez \& S.M{\'e}l{\'e}ard~\cite{fernandez-meleard-1997} or by M.Izuki, T.Nogayama, T.Noi \& Y.Sawano~\cite{izuki-nogayama-noi-sawano-2023} as well as to the books by R.Adams \& J.Fournier~\cite[Section 4.56]{adams-fournier-2003}, by J.Heinonen, T.Kilpel{\"a}inen \& O.Martio~\cite[Chapter 1]{heinonen-kilpelainen-olli-1993}, by A.Kufner~\cite{kufner-1985} or by H.Triebel~\cite[Section 7.2]{triebel-2010}. The only diffcult point of Proposition~\ref{ppt:sobolev-weighted-embeddings} concerns the Hilbert-Schmidt property, which does not directly follow from a representation of Bessel potentials. Since this point is often used implicitly in the literature, we recall the argument in Appendix~\ref{app:auxiliary-results}, but we refer to the work by T.Krainer \& B.-W.Schulze~\cite[Chapter~4, Theorem~4.1.5-d.]{krainer-schulze-2002} for more general considerations.

We first introduce the indices used in the law of large numbers given by
\[
	L=\left\lceil\frac{d}{2}\right\rceil + 3, \quad \gamma = 4\left\lceil\frac{d}{2}\right\rceil + 5,\quad \text{and}\quad \gamma^* = \gamma + 1.
\]
Since $L-2 > d/2$, Proposition~\ref{ppt:sobolev-weighted-embeddings} yields the
continuous embedding
\[
	\mc H^{L,\gamma}(\R^d) \hookrightarrow \mc C^{2,\gamma}(\R^d).
\]
Moreover, since $\gamma^* > \gamma$, one has
\[
	\mc C^{2,\gamma}(\R^d) \hookrightarrow \mc C^{2,\gamma^*}(\R^d),
\]
and therefore
\[
	\mc H^{L,\gamma}(\R^d) \hookrightarrow \mc C^{2,\gamma^*}(\R^d).
\]

Since every twice continuously differentiable function with polynomial growth of order $\gamma^*$ is, in particular, a continuous function with the same polynomial growth,
\[
	\mc C^{2,\gamma^*}(\R^d) \hookrightarrow \mc C^{0,\gamma^*}(\R^d).
\]
Passing to the dual spaces therefore gives the continuous embeddings
\[
	\mc C^{0,\gamma^*}(\R^d)' \hookrightarrow \mc C^{2,\gamma^*}(\R^d)' \hookrightarrow \mc H^{-L,\gamma}(\R^d).
\]
Consequently, every finite Borel measure with finite $\gamma^*$-th moment naturally defines an element of $\mc H^{-L,\gamma}(\R^d)$. This space is the negative-order Sobolev space naturally associated with the law of large numbers. The fluctuation central limit theorem is formulated in a weaker negative-order Sobolev space, which we now introduce. Following the notation introduced in the work by A.Descours, A.Guillin, M.Michel \& B.Nectoux~\cite{descours-guillin-michel-nectoux-2024}, let
\[
    J_0 \geq 4\bigg\lceil\frac{d}{2}\bigg\rceil + 8, \quad j_0 = \bigg\lceil\frac{d}{2}\bigg\rceil + 2,
\]
and set
\[
    \msr H := \mc H^{-J_0+1,j_0}(\R^d).
\]
The embeddings used in the fluctuation analysis yield $\mc H^{J_0-1,j_0}(\R^d) \hookrightarrow \mc H^{L,\gamma}(\R^d)$, so that by duality,
\begin{equation}\label{eq:embedding-lln-clt-spaces}
    \mc H^{-L,\gamma}(\R^d) \hookrightarrow \msr H.
\end{equation}
Thus, the empirical measures and the mean field limit may be considered as elements of $\msr H$, while the fluctuation central limit theorem is naturally stated in this latter space.

\subsection{Assumptions and known results}\label{ssec:assumptions}

We begin by recalling the assumptions under which the mean-field limit and the corresponding fluctuation theorem have been established. Throughout the paper, the stochastic gradient dynamics generates a natural filtration describing the information available up to each optimization step. For every $N \geq 1$, we define
\begin{equation*}\label{eq.filtration}
	\mc F_0^N = \sigma\bigg(\{W_0^i\}_{i=1}^N\bigg),
\end{equation*}
and, for every $k \geq 1$,
\begin{equation*}
	\mc F_k^N = \sigma\bigg(W_0^i,\{B_j\}_{j=0}^{k-1},\{\varepsilon_j^i\}_{j=0}^{k-1}\bigg),\; i\in\{1,\ldots,N\}.
\end{equation*}
In the above, $\sigma$ denotes the $\sigma$-algebra generated by the corresponding random variables. The assumptions used throughout the paper are summarized below.
\begin{enumerate}[label= {\bf A\arabic*}]
	\item\label{ass:data}(Data sampling). For every $k \geq 0$, the collection $\big((x_k^n,y_k^n)\big)_{n \geq 1}$ consists of independent and identically distributed random variables with common distribution $\pi \in \mc P(\mc X \times \mc Y)$. Moreover,
		\begin{itemize}
			\item the collections $\big((x_k^n,y_k^n)\big)_{n\geq1}$ are
			independent for different values of $k$,
			\item for every $k\geq0$, the pair $\big(|B_k|,\big((x_k^n,y_k^n)\big)_{n\geq1}\big)$ is independent of $\mc F_k^N$,
			\item for every $q,k \geq 0$, $|B_q|$ is independent of $\big((x_k^n,y_k^n)\big)_{n \geq 1}$,
			\item the output variable satisfies $\mbb E\big[|y|^{16\gamma^*}\big]<+\infty$.
		\end{itemize}
	\item\label{ass:sigma}(Regularity of the activation function). The activation function $\sigma^*: \R^d \times \mc X \longrightarrow \R$ belongs to $\mc C_b^\infty(\R^d \times \mc X)$.
	\item\label{ass:init}(Initialization). The initial parameters $\{W_0^i\}_{i=1}^N$ are i.i.d. with common law $\mu_0 \in \mc P(\R^d)$ satisfying $\mbb E\big[|W_0^1|^{8\gamma^*}\big] < +\infty$.
	\item\label{ass:noise}(Exploration noise). The family $\big(\ve^i_k\big)_{1 \leq i \leq N}^{k \geq 0}$ consists of independent Gaussian random vectors with distribution $\mc N(0,I_d)$. Moreover, for every $k \geq 0$ and every $i\in\{1,\ldots,N\}$ the random variable $\ve_k^i$ is independent of $\mc F_k^N$.
\end{enumerate}

The previous assumptions are standard in the mean-field analysis of
stochastic gradient methods. For the reader's convenience, we briefly
comment on their respective roles. Assumption~\ref{ass:data} describes the learning framework. At each optimization step, the algorithm receives a fresh mini-batch, sampled independently of both the past trajectory and the current state of the network. The moment assumption on the output variable is used to derive uniform moment estimates for the system and its mean-field limit. Assumption~\ref{ass:sigma} is a regularity assumption on the activation function. It ensures that the drift of the limiting mean-field equation is sufficiently smooth and that all the derivatives appearing in the fluctuation analysis are well defined. Assumption~\ref{ass:init} states that the neurons are initially exchangeable. The moment condition propagates along the dynamics and is required in several tightness and stability arguments. Finally, Assumption~\ref{ass:noise} models the exploration noise added to the stochastic gradient dynamics. Its independence with respect to the natural filtration expresses that the perturbation introduced at each optimization step is independent of the past history of the algorithm.

We are now in a position to recall the mean-field limit established in the work by A.Descours, A.Guillin, M.Michel \& B.Nectoux~\cite[Theorem 1]{descours-guillin-michel-nectoux-2024}. We highlight that similar results are proved in the works by Z.Chen, G.M.Rotskoff, J.Bruna \& E.Vanden-Eijnden~\cite{chen-rotskoff-bruna-vanden-eijnden-2020}, by S.Mei, A.Montanari \& P.-M.Nguyen~\cite{mei-montanari-nguyen-2018}, or in particular by J.Sirignano \& K.Spiliopoulos~\cite{sirignano-spiliopoulos-2020}. Roughly speaking, as the width of the neural network tends to infinity, the empirical distribution of the parameters converges towards a deterministic measure-valued trajectory governed by a nonlinear evolution equation.

\begin{thm}[Mean-field limit]\label{thm:mean-field-lln}
	Assume that $\beta>\frac{1}{2}$ and that Assumptions~\ref{ass:data}--\ref{ass:noise} hold. Then the empirical measure process $(\mu^N)_{N \geq 1}$ converges in probability, in $\mc D(\R_+,\mc P_\gamma(\R^d))$, towards a deterministic trajectory $\bar\mu \in \mc C(\R_+,\mc P_1(\R^d))$.

Moreover, $\bar\mu$ is the unique solution in $\mc C(\R_+,\mc P_1(\R^d))$ of the following weak formulation of the nonlinear measure-valued equation given for every $f \in \mc C_b^\infty(\R^d)$ and every $t \geq 0$ by
\begin{equation}\label{eq:mean-field-limit}
	\langle f,\bar\mu_t\rangle = \langle f,\mu_0\rangle + \alpha\int_0^t\int_{\mc X \times \mc Y}\Big(y - \langle\sigma^*(\cdot,x),\bar\mu_s\rangle\Big) \langle \nabla f\cdot\nabla\sigma^*(\cdot,x), \bar\mu_s\rangle \, \di\pi(x,y)\, \di s.
\end{equation}
Furthermore, $\bar\mu \in \mc C\big(\R_+,\mc H^{-L,\gamma}(\R^d)\big)$, and equation~\eqref{eq:mean-field-limit} extends to every test function $f$ which belongs to $\mc H^{L,\gamma}(\R^d)$.
\end{thm}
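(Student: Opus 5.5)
The statement immediately above is Theorem~\ref{thm:mean-field-lln}, the mean-field limit, which the paper recalls from \cite[Theorem 1]{descours-guillin-michel-nectoux-2024}. The plan is the classical martingale/compactness scheme: derive a pre-limit identity for $\langle f,\mu_t^N\rangle$, prove tightness of $(\mu^N)$ in $\mc D(\R_+,\mc P_\gamma(\R^d))$ and in a negative Sobolev space, identify every limit point as a solution of \eqref{eq:mean-field-limit}, and conclude by uniqueness. Since the limit is deterministic, convergence in law upgrades to convergence in probability.

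\textbf{Pre-limit equation.} For $f\in\mc C_b^\infty(\R^d)$, Taylor-expand $f(W_{k+1}^i)-f(W_k^i)$ to second order using \eqref{eq:sgd} and sum over $k<\lfloor Nt\rfloor$. This gives
\[
\langle f,\mu_t^N\rangle=\langle f,\mu_0^N\rangle+\alpha\int_0^t\int_{\mc X\times\mc Y}\big(y-\langle\sigma^*(\cdot,x),\mu_s^N\rangle\big)\langle\nabla f\cdot\nabla\sigma^*(\cdot,x),\mu_s^N\rangle\,\di\pi\,\di s+M_t^N+R_t^N .
\]
To obtain this form, take conditional expectations given $\mc F_k^N$ and use the independence in \ref{ass:data} and \ref{ass:noise}. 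The term $M_t^N$ is a martingale coming from the mini-batch randomness and the noise, and $R_t^N$ collects the second-order Taylor terms and the time-discretisation error.

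\textbf{Vanishing of $M^N$ and $R^N$.} The drift increment is $O(1/N)$ and there are $O(N)$ steps. The noise contributes a martingale part with quadratic variation $O(N\cdot N^{-1}\cdot N^{-2\beta})\to0$. Its second-order Itô-type term is of size $N\cdot N^{-2\beta}$, which vanishes precisely because $\beta>1/2$. The batch martingale has quadratic variation $O(1/N)$. Doob's inequality then gives $\sup_{t\le T}|M_t^N|\to0$ in $L^2$, and $R^N\to0$ as well. These estimates need uniform moment bounds $\sup_N\mbb E[\sup_{t\le T}\langle|w|^{\gamma^*},\mu_t^N\rangle]<\infty$. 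I would get them by applying the same expansion to $|w|^{8\gamma^*}$, using \ref{ass:init}, the moment of $y$ in \ref{ass:data}, and boundedness of $\nabla\sigma^*$ from \ref{ass:sigma}, and closing with Grönwall. The conditioning on $|B_k|$ needs care.

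\textbf{Tightness, identification, and uniqueness (main obstacle).} Tightness in $\mc D(\R_+,\mc H^{-L,\gamma})$ follows from Aldous' criterion applied in the Hilbert space. Pick an orthonormal basis $(f_j)$ of $\mc H^{L,\gamma}$ and sum the increment bounds, which converge through the Hilbert–Schmidt embedding of Proposition~\ref{ppt:sobolev-weighted-embeddings}. The moment bounds then give compact containment and tightness in $\mc P_\gamma$. The identification map $\mu\mapsto$ (the equation residual) is continuous on $\mc C(\R_+,\mc P_1)$, and limit points are continuous because jumps are $O(1/N)$. Hence every limit point satisfies \eqref{eq:mean-field-limit} almost surely.

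The delicate step is uniqueness in $\mc C(\R_+,\mc P_1)$. I would freeze the nonlinearity: for a given solution $\nu$, the velocity field
\[
v_s(w)=\alpha\int_{\mc X\times\mc Y}\big(y-\langle\sigma^*(\cdot,x),\nu_s\rangle\big)\nabla\sigma^*(w,x)\,\di\pi
\]
is smooth and globally Lipschitz, so $\nu$ is the pushforward of $\mu_0$ by its flow. For two solutions, compare the flows in $W_1$ using the Lipschitz dependence of $v$ on $\nu$ through $\langle\sigma^*,\nu\rangle$, then apply Grönwall. Finally, the extension to $f\in\mc H^{L,\gamma}$ is by density. Both sides of \eqref{eq:mean-field-limit} are continuous in $f$ for the $\mc H^{L,\gamma}$ norm, via the embedding into $\mc C^{2,\gamma^*}$ and the $\gamma^*$-moment bounds. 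The same embedding gives $\bar\mu\in\mc C(\R_+,\mc H^{-L,\gamma})$.
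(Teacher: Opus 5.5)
Your outline is correct and follows the standard martingale--compactness--uniqueness argument behind this result; the paper itself gives no proof and recalls the theorem from Descours, Guillin, Michel and Nectoux (Theorem~1 and Corollary~1 there). The ingredients match: the pre-limit decomposition with the It\^o correction of order $N^{1-2\beta}$, moment bounds, tightness, identification of continuous limit points, uniqueness by freezing the velocity field and a Gr\"onwall estimate on the flows in $W_1$, and extension to $\mathcal H^{L,\gamma}(\R^d)$ by density.

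Two steps need more care when written out. First, tightness in $\mathcal D(\R_+,\mathcal P_\gamma(\R^d))$ does not follow from compact containment alone: you need Jakubowski's criterion, or the fact that the embedding into $\mathcal H^{-L,\gamma}(\R^d)$ is a homeomorphism on the $W_\gamma$-compact sets of $\gamma^*$-moment-bounded measures. Second, the orthonormal-basis sums converge because $\sum_j|D^\alpha f_j(w)|^2\le C(1+|w|^{2\gamma})$ for $|\alpha|\le 2$, which comes from the continuous embedding $\mathcal H^{L,\gamma}(\R^d)\hookrightarrow\mathcal C^{2,\gamma}(\R^d)$, not from the Hilbert--Schmidt property.
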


The parameter $\beta$ is the scaling exponent of the exploration noise introduced in the stochastic gradient dynamics \eqref{eq:sgd}. The condition $\beta > \frac{1}{2}$ ensures that this perturbation vanishes at the mean-field scale and therefore does not affect the deterministic limiting dynamics. The limiting equation~\eqref{eq:mean-field-limit} is a nonlinear transport equation on the space of probability measures whose nonlinearity stems from the dependence of the drift on the current distribution $\bar\mu_t$, which is characteristic of McKean-Vlasov type dynamics.

Theorem~\ref{thm:mean-field-lln} states that the limiting trajectory is continuous for the $W_1$ topology. Moreover, as stated in the work by A.Descours, A.Guillin, M.Michel \& B.Nectoux~\cite[Corollary~1]{descours-guillin-michel-nectoux-2024} it is possible to show that $\bar\mu \in C\big(\R_+,\mc H^{-L,\gamma}(\R^d)\big)$, and that the limiting equation extends to test functions in $\mc H^{L,\gamma}(\R^d)$. The higher-order moment estimates obtained in the same work allow one to strengthen the Wasserstein regularity of the limiting trajectory. More precisely it is in fact continuous for the $W_\gamma$ topology in which the functional law of large numbers is formulated.

\begin{coro}[Continuity of the mean field limit in $W_\gamma$ topology]\label{coro:Wgamma-continuity} Under the assumptions of Theorem~\ref{thm:mean-field-lln}, the deterministic limiting trajectory satisfies $\bar\mu \in C\big(\R_+,\mc P_\gamma(\R^d)\big)$. Therefore, for every $t \geq 0$, the following convergence holds in $\mc P_\gamma(\R^d)$:
\[
    \mu_t^N \xrightarrow[N \to +\infty]{\mbb P} \bar\mu_t.
\]
\end{coro}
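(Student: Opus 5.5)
We need to prove Corollary \ref{coro:Wgamma-continuity}: $\bar\mu\in C(\R_+,\mc P_\gamma(\R^d))$, and for each fixed $t$, $\mu_t^N\to\bar\mu_t$ in probability in $\mc P_\gamma(\R^d)$.

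\textbf{Step 1 (continuity of the limit).} The plan is to combine the $W_1$-continuity of $\bar\mu$ from Theorem~\ref{thm:mean-field-lln} with a uniform higher moment bound. Recall that on $\mc P_\gamma(\R^d)$, convergence in $W_\gamma$ is equivalent to weak convergence together with convergence of the $\gamma$-th moments. A convenient sufficient condition is weak convergence plus uniform integrability of $|x|^\gamma$. So it suffices to show that for every $T>0$,
\[
\sup_{s\le T}\int|x|^{q}\,\di\bar\mu_s<\infty \quad\text{for some } q>\gamma.
\]
The moment estimates in \cite{descours-guillin-michel-nectoux-2024} give such bounds with $q$ of order $8\gamma^*$, inherited from Assumptions~\ref{ass:init} and~\ref{ass:data}. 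If needed, one can also argue directly from~\eqref{eq:mean-field-limit} with a truncated test function $f=(1+|x|^2)^{q/2}$. Because $\sigma^*\in\mc C_b^\infty$, the drift is bounded by $C(1+|y|)(1+\langle|\sigma^*|,\bar\mu_s\rangle)$. This yields a Grönwall-type bound on $\langle f,\bar\mu_s\rangle$ once $\mbb E_\pi|y|<\infty$, followed by monotone convergence to remove the truncation. Then, if $s_n\to s$, $W_1$-convergence gives weak convergence, and the uniform $q$-moment bound gives uniform integrability of $|x|^\gamma$. Hence $W_\gamma(\bar\mu_{s_n},\bar\mu_s)\to0$.

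\textbf{Step 2 (convergence at fixed $t$).} Theorem~\ref{thm:mean-field-lln} gives convergence in probability of the process in $\mc D(\R_+,\mc P_\gamma(\R^d))$ towards a continuous limit. We use the standard fact that if $x_n\to x$ in $J_1$ and $x$ is continuous at $t$, then $x_n(t)\to x(t)$. Skorokhod convergence to a continuous path is in fact locally uniform convergence. By Step~1, $\bar\mu$ is continuous in $\mc P_\gamma$, so the evaluation map $\pi_t:\mc D(\R_+,\mc P_\gamma)\to\mc P_\gamma$ is continuous at $\bar\mu$.

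The continuous mapping theorem for convergence in probability to a deterministic limit then applies. For any $\ve>0$ there is $\delta>0$ such that $d_{J_1}(\mu^N,\bar\mu)<\delta$ implies $W_\gamma(\mu^N_t,\bar\mu_t)<\ve$. Therefore
\[
\P\big(W_\gamma(\mu_t^N,\bar\mu_t)\ge\ve\big)\le \P\big(d_{J_1}(\mu^N,\bar\mu)\ge\delta\big)\to0 .
\]

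\textbf{Main obstacle.} The difficulty lies in Step~1, namely securing a moment bound of order strictly greater than $\gamma$, uniform on compact time intervals, for the limit $\bar\mu$. The uniform integrability argument needs a strict gap above $\gamma$. This gap is available thanks to the $8\gamma^*$ moments in Assumption~\ref{ass:init}. I would obtain it either by citing the propagated moment estimates of \cite{descours-guillin-michel-nectoux-2024}, or by passing to the limit via Fatou's lemma in the uniform-in-$N$ bounds on $\sup_{s\le T}\mbb E\langle|x|^{q},\mu^N_s\rangle$.
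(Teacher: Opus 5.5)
Your proposal is correct. It shares the paper's overall architecture: a moment bound of order $q>\gamma$ for $\bar\mu$, uniform on compact time intervals; an upgrade from $W_1$-continuity to $W_\gamma$-continuity; and continuity of the evaluation map at a continuous path, which is identical to the paper's last step. The two key steps, however, use different tools.

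For the upgrade, the paper proves an explicit interpolation inequality,
$W_\gamma(\mu,\nu)\le C\big(\int|x|^q\,\di\mu+\int|x|^q\,\di\nu\big)^{(1-\theta)/q}W_1(\mu,\nu)^\theta$,
via log-convexity of $L^p$ norms on a $W_1$-optimal coupling. This gives a quantitative modulus $W_\gamma(\bar\mu_s,\bar\mu_t)\le C_T W_1(\bar\mu_s,\bar\mu_t)^\theta$. Your appeal to the characterization ``narrow convergence plus uniform integrability of $|x|^\gamma$'' is shorter but purely qualitative.

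For the moment bound, the paper transfers the particle estimates of \cite[Corollary~13]{descours-guillin-michel-nectoux-2024} to $\bar\mu$ through Skorokhod representation, lower semicontinuity and Fatou. It does this first at the $\mc P_\gamma$-continuity points of $\bar\mu$, then at the remaining countably many times by right-continuity. If you use your Fatou variant, take the fixed-time narrow convergence of $\mu^N_s$ from the $W_1$-continuity of $\bar\mu$, not from your Step~2, which would be circular. Indeed, $J_1$-convergence in $\mc P_\gamma$ together with $W_1$-continuity of the limit already gives $W_1(\mu^N_s,\bar\mu_s)\to0$ in probability at every $s$. Done this way, you avoid the paper's continuity-point bookkeeping.

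Your Gr\"onwall variant is genuinely different and more self-contained. The velocity field in \eqref{eq:mean-field-limit} is bounded by $\alpha(\mbb E_\pi|y|+\|\sigma^*\|_\infty)\|\nabla\sigma^*\|_\infty$. Hence the moments of $\mu_0$, available up to order $8\gamma^*$ by Assumption~\ref{ass:init}, propagate with an explicit exponential-in-time bound, and the particle system is not needed at all. The one point needing care is the truncation. A plain cutoff $\chi_R(1+|x|^2)^{q/2}$ does not close the Gr\"onwall loop. Instead take $f_\ve(x)=\big((1+|x|^2)/(1+\ve|x|^2)\big)^{q/2}$: it belongs to $\mc C_b^\infty(\R^d)$, satisfies $|\nabla f_\ve|\le \frac q2 f_\ve$ uniformly in $\ve\in(0,1]$, and increases to $(1+|x|^2)^{q/2}$ as $\ve\downarrow0$, so monotone convergence concludes.
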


\begin{proof}
Let us start by proving the following Wasserstein interpolation estimate:  let $1 \leq r < p < q < +\infty$ and let $\theta \in (0,1)$ be defined by
\[
    \frac{\theta}{r}+\frac{1-\theta}{q} = \frac{1}{p} .
\]
Then, for every $\mu,\nu \in \mc P_q(\R^d)$,
\begin{equation}\label{interpolation_Wasserstein}
    W_p(\mu,\nu) \leq  2^{\frac{(q-1)(1-\theta)}{q}}\bigg(\int_{\R^d}|x|^q\,\di\mu(x) + \int_{\R^d}|x|^q\,\di\nu(x)\bigg)^{\frac{1-\theta}{q}} W_r(\mu,\nu)^\theta.
\end{equation}
To prove this, let $\mu,\nu \in \mc P_q(\R^d)$ and consider their optimal coupling $\pi_r$ for $W_r$. Denoting by $f$ the function $(x,y)\mapsto |x-y|$, we have, thanks to the log-convexity of Lebesgue norms applied to $L^r(\pi_r)$ and $L^q(\pi_r)$ norms, $\|f\|_{L^p} \le \|f\|_{L^r}^\theta\|f\|_{L^q}^{1-\theta}$. Inequality \eqref{interpolation_Wasserstein} follows from $ W_p(\mu,\nu)\le \|f\|_{L^p}$, $ W_r(\mu,\nu)^\theta = \|f\|_{L^r}^\theta$ and $\|f\|_{L^q}^q\le 2^{q-1}\bigg(\dst\int_{\R^d}|x|^q\,\di\mu(x) + \dst\int_{\R^d}|x|^q\,\di\nu(x)\bigg)$.

Then, let $T>0$ and $q\in(\gamma,\gamma+1)$. By \cite[Corollary~13]{descours-guillin-michel-nectoux-2024}, it holds 
\begin{equation}\label{eq_Cor13_JMLR}
    \underset{N \geq 1}{\mathrm{sup}}\, \mbb E\bigg[\underset{t \in [0,T+1]}{\mathrm{sup}}\,\int_{\R^d}|w|^q\,\di\mu_t^N(w)\bigg] < +\infty.
\end{equation}
Applying Skorokhod representation theorem, see e.g. the books by P.Billingsley~\cite[Theorem 6.7]{billingsley-1999} or by V.Bogachev~\cite[Theorem 8.5.4]{bogachev-2007}, there exist a probability space and some $\mathcal D(\R_+,\mathcal P_\gamma(\R^d))$-valued random elements $(M^N)_{N \ge 1}$ defined on that space such that $M^N\overset{\mathscr L}{=}\mu^N$ for all $N\ge1$ and  $M^N \underset{\to}{N \to +\infty} \bar\mu$ almost surely in $\mathcal D(\R_+,\mathcal P_\gamma(\R^d))$. Denote $\mathtt D_{T+1}\subset [0,T+1]$ the set of discontinuity points of $\bar\mu$, restricted to $[0,T+1]$. By a result from the book by P.Billingsley~\cite[Chapter~3, Lemma~1]{billingsley-1999}, this set is at most countable.  Since convergence in Skorokhod topology implies convergence at any continuity point of the limit (see e.g. the book by P.Billingsley~\cite[Equation (12.14)]{billingsley-1999}), it holds almost surely for all $t \in [0,T+1]\backslash \mathtt D_{T+1}$, $W_\gamma(M_t^N,\bar\mu_t)\to 0$.       
Now, fix $t\in [0,T+1] \backslash \mathtt D_{T+1}$. Then, see e.g. the book by L.Ambrosio, N.Gigli \& G.Savaré~\cite[Lemma 5.1.7]{ambrosio-gigli-savare-2008}, it holds almost surely
\begin{equation*}
\int_{\R^d}|w|^q\,\di \bar\mu_t(w)\le\liminf_N\int_{\R^d}|w|^q\,\di M_t^N(w).
\end{equation*} 
Taking the expectation and applying Fatou's lemma, we obtain
\begin{align}\label{eq:Fatou+Ambrosio}
\int_{\R^d}|w|^q\,\di \bar\mu_t(w)&\le\liminf_N\mathbb E\Big[\int_{\R^d}|w|^q\,\di \mu_t^N(w)\Big]\nonumber\\
&\le \liminf_N\mathbb E\Big[\sup_{s\in[0,T+1]}\int_{\R^d}|w|^q\,\di \mu_s^N(w)\Big]\nonumber\\
&\le \underset{N \geq 1}{\mathrm{sup}}\mathbb E\Big[\sup_{s\in[0,T+1]}\int_{\R^d}|w|^q\,\di \mu_s^N(w)\Big].
\end{align}
Hence, recalling \eqref{eq_Cor13_JMLR}, we just proved that 
\begin{equation*}
\sup_{t\in[0,T+1]\backslash \mathtt D_{T+1} }\int_{\R^d}|w|^q\,\di \bar\mu_t(w) < +\infty.
\end{equation*}
Finally, if $t\in\mathtt D_{T+1} \cap [0,T]$, consider a sequence $(t_n) \subset [0,T+1]\backslash \mathtt D_{T+1}$ with $t_n \to t$, $t_n > t$. By right continuity, $ W_\gamma(\bar\mu_t,\bar\mu_{t_n})\to 0$, and hence, see e.g. in the book by L.Ambrosio, N.Gigli \& G.Savaré~\cite[Lemma 5.1.7]{ambrosio-gigli-savare-2008}, it follows 
\[
	\dst\int_{\R^d}|w|^q\,\di \bar\mu_t(w)\le \liminf_n \int_{\R^d}|w|^q\,\di \bar\mu_{t_n}(w)\le \sup_{s\in[0,T+1]\backslash \mathtt D_{T+1} }\int_{\R^d}|w|^q\,\di \bar\mu_s(w).
\]
This proves that 
\begin{equation}\label{eq_unif_moment_order_q_barmu}
\sup_{t\in[0,T]}\int_{\R^d}|w|^q\,\di \bar\mu_t(w) < +\infty.
\end{equation}

Then, by \eqref{eq_unif_moment_order_q_barmu}, $\bar\mu_t\in\mathcal P_q(\R^d)$ for any $t\in[0,T]$. Applying \eqref{interpolation_Wasserstein} with $r = 1$, $p = \gamma$, and the above choice $q \in (\gamma,\gamma+1)$, we obtain, for every $s,t\in[0,T]$,
\[
    W_\gamma(\bar\mu_s,\bar\mu_t) \leq C_T W_1(\bar\mu_s,\bar\mu_t)^\theta,
\]
where the constant $C_T$ is provided by \eqref{eq_unif_moment_order_q_barmu}.
The $W_1$-continuity of $\bar\mu$ therefore yields $\bar\mu \in C(\R_+,\mc P_\gamma(\R^d))$, since $T$ is arbitrary. Finally, since evaluation at time $t$ is continuous at every trajectory that is continuous at $t$ for the Skorokhod $J_1$ topology, the following convergence in $\mc D\big(\R_+,\mc P_\gamma(\R^d)\big)$
\[
    \mu^N \xrightarrow[N \to +\infty]{\mbb P} \bar\mu
\]
implies, by the continuous mapping theorem, that for every $t \geq 0$ 
\[
    \mu_t^N \xrightarrow[N \to +\infty]{\mbb P} \bar\mu_t, \text{ in } \mc P_\gamma(\R^d).
\] 
\end{proof}

This improvement concerns the mean-field law of large numbers. The fluctuation central limit theorem recalled below remains formulated in the weaker space
$\msr H = \mc H^{-J_0+1,j_0}(\R^d)$.

The previous theorem establishes the deterministic mean-field limit of the empirical measure process. We now turn to its fluctuations around this limit. Their asymptotic behavior is described by a functional central limit theorem, which requires two additional assumptions.

The following assumptions are only required for the fluctuation analysis. The first one is mainly technical and simplifies several uniform estimates appearing in the proof of the functional central limit theorem. The second one reflects the asymptotic stabilization of the mini-batch size and enters explicitly into the covariance structure of the limiting Gaussian process.

\begin{enumerate}[label= {\bf A\arabic*},resume]
	\item \label{ass:mu0-compact}(Compactly supported initialization). The initial distribution $\mu_0 \in \mc P(\R^d)$ is compactly supported.				\item \label{ass:batch-limit} (Asymptotically constant batch size). The batch size satisfies
	\[
		|B_k| \longrightarrow |B_\infty| \qquad\text{almost surely as }\;  k\to\infty.
	\]
\end{enumerate}

Assumption~\ref{ass:mu0-compact} states that the parameters of the neurons are initially supported in a common compact subset of the parameter space. Although mainly technical, this assumption considerably simplifies the tightness and regularity estimates required for the fluctuation analysis. Assumption~\ref{ass:batch-limit} states that the mini-batch size stabilizes along training. While this assumption has no influence on the deterministic mean-field limit, its limiting value determines the covariance of the Gaussian noise driving the fluctuation process.

The law of large numbers identifies the deterministic trajectory
$\bar\mu$ around which the empirical measure concentrates. To investigate the random deviations from this limit, we introduce the rescaled fluctuation process
\begin{equation}\label{eq:fluctuation-process}
	\eta_t^N = \sqrt N\,(\mu_t^N-\bar\mu_t), \quad t \geq 0.
\end{equation}
The normalization by $\sqrt N$ is the natural one predicted by the central limit theorem. Formally, the fluctuation process is expected to converge towards the solution of a linear stochastic evolution equation obtained by linearizing the mean-field dynamics around the deterministic trajectory $\bar\mu$. Its weak formulation reads as follows. Almost surely for every $f \in \mc H^{J_0,j_0}(\R^d)$, and every $t \in \R_+$,

\begin{equation}\label{eq:weak-fluctuation-edps}
	\begin{split}
		\langle f,\eta_t\rangle - \langle f,\eta_0 \rangle = \alpha\int_0^t\int_{\mc X \times \mc Y} (y-\langle\sigma^*(\cdot,x),\bar \mu_s\rangle)\langle\nabla f\cdot\nabla\sigma^*(\cdot,x),\eta_s\rangle \di\pi(x,y)\di s\\
 - \alpha\int_0^t\int_{\mc X \times \mc Y} \langle\sigma^*(\cdot,x),\eta_s\rangle \langle \nabla f\cdot\nabla\sigma^*(\cdot,x),\bar \mu _s\rangle \di\pi(x,y) \di s + \langle f, \mathscr G_t\rangle.
	\end{split}
\end{equation}

The stochastic forcing term $\mathscr G$ appearing in~\eqref{eq:weak-fluctuation-edps} plays the role of the driving Gaussian noise. Following the terminology introduced by B.Fernandez \& S.Méléard~\cite{fernandez-meleard-1997}, we now define the Gaussian process driving the limiting fluctuation equation. Although referred to as a \emph{G-process}, it is simply an infinite-dimensional centered Gaussian process whose covariance is entirely determined by the deterministic mean-field trajectory and by the asymptotic mini-batch size.

\begin{defi}[G-process]\label{defi:g-process}
	Let $J_0,j_0 \geq 0$. A process $\mathscr G \in \mc C(\R_+,\mc H^{-J_0,j_0}(\R^d))$ is called a \emph{G-process} if the following properties hold.
	\begin{enumerate}
		\item For every integer $k \geq 1$ and every collection of test functions $f_1,\ldots,f_k \in \mc H^{J_0,j_0}(\R^d)$, the projected process
		\[
			\Big(\langle f_1,\mathscr G_t\rangle,\ldots,\langle f_k,\mathscr G_t\rangle\Big)_{t\geq 0}
		\]
		is an $\R^k$-valued centered Gaussian process with continuous
		trajectories and independent increments.
		\item The covariance structure is prescribed by, for every $0\leq s\leq t$, by
		\begin{equation}\label{eq:covariance-g-process}
		\Cov\big ( \langle f_i, \mathscr G_t \rangle,\langle f_j, \mathscr G_s \rangle\big )=\alpha^2 \mbb{E}\left[\frac{1}{|B_{\infty}|}\right]\int_0^s\Cov_\pi(\mathrm Q_v[f_i](x,y),\mathrm Q_v[f_j ](x,y))\, \di v,
		\end{equation}
		where $\mathrm Q_v[f](x,y):=(y-\langle \sigma^*(\cdot, x),\bar \mu_v\rangle)\langle\nabla f\cdot\nabla\sigma^*(\cdot,x),\bar \mu_v\rangle$ for $f \in \mathcal H^{J_0,j_0}(\R^d)$ and $\bar \mu$ is given by Theorem~\ref{thm:mean-field-lln}. 
\end{enumerate}
\end{defi}

Unlike a standard cylindrical Wiener process, the covariance of the G-process in \eqref{eq:weak-fluctuation-edps} depends both on the deterministic mean-field trajectory through the operator $Q_t$ and on the limiting mini-batch size through the factor $\mbb E[|B_\infty|^{-1}]$. Also, note that the stochastic evolution equation~\eqref{eq:weak-fluctuation-edps} is understood in the weak sense. We therefore recall the corresponding notion of weak solution, which provides the appropriate framework for the functional central limit theorem.

\begin{defi}[Weak solution]\label{def:weak-solution-edps} Let $\nu$ be an $\msr H$-valued random variable. A process $\eta \in \mc C\big(\R_+,\msr H\big)$, defined on some probability space, is called a \emph{weak solution} of \eqref{eq:weak-fluctuation-edps} with initial distribution $\nu$ if there exists a G-process $\mathscr G \in \mc C\big(\R_+, \mc H^{-J_0,j_0}(\R^d)\big)$ defined on the same probability space such that \eqref{eq:weak-fluctuation-edps} holds almost surely and $\eta_0$ has the same distribution as $\nu$.

Weak uniqueness is said to hold if any two weak solutions of \eqref{eq:weak-fluctuation-edps}, possibly defined on different probability spaces and having the same initial distribution, have the same law.
\end{defi}

We are now in position to state the functional central limit theorem. It shows that the fluctuations of the empirical measure around the deterministic mean-field limit converge towards the unique solution of a linear stochastic evolution equation driven by the G-process introduced above.

\begin{thm}[Functional central limit theorem]\label{thm:mean-field-clt}
	Assume that $\beta > \frac{3}{4}$,  Assumptions~\ref{ass:data}--\ref{ass:noise} and \ref{ass:mu0-compact}--\ref{ass:batch-limit} hold. Then the fluctuation process $(\eta^N)_{N \geq 1}$, defined in~\eqref{eq:fluctuation-process}, converges in distribution in $\mc D\big(\R_+,\msr H\big)$ towards a continuous process $\bar\eta$ which belongs to $\mc C\big(\R_+, \msr H\big)$.
	
	Furthermore, the limiting process $\bar\eta$ has the same distribution as the unique weak solution $\eta^\star$ of~\eqref{eq:weak-fluctuation-edps} with initial distribution $\nu_0$ in the sense of Definition~\ref{def:weak-solution-edps}. More precisely, $\nu_0$ is the unique $\msr H$-valued Gaussian random variable such that, for every $k \geq 1$ and every $f_1,\ldots,f_k \in \mc H^{J_0 -1, j_0}(\R^d)$,
	\[
		\big(\langle f_1,\bar\eta_0\rangle,\ldots,\langle f_k,\bar\eta_0\rangle \big)^T \sim \mc N \big(0, \Gamma(f_1,\ldots,f_k)\big),
	\]
	where $\Gamma(f_1, \ldots, f_k)$ denotes the covariance matrix of $\big(f_1(W_0^1), \ldots, f_k(W_0^1)\big)^T$.
\end{thm}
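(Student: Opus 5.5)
The proof follows the classical scheme of B.Fernandez \& S.Méléard~\cite{fernandez-meleard-1997}, adapted to the discrete-time dynamics \eqref{eq:sgd} as in the work by A.Descours, A.Guillin, M.Michel \& B.Nectoux~\cite{descours-guillin-michel-nectoux-2024}. It has three steps: relative compactness of $(\eta^N)_{N\geq1}$ in $\mc D(\R_+,\msr H)$, identification of every limit point as a weak solution of \eqref{eq:weak-fluctuation-edps}, and weak uniqueness for this equation.

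\textbf{Pre-limit decomposition and tightness.} For $f\in\mc H^{J_0,j_0}(\R^d)$, I would Taylor expand $f(W^i_{k+1})-f(W^i_k)$ to second order and sum over $k\leq \lfloor Nt\rfloor$. This writes $\langle f,\eta_t^N\rangle-\langle f,\eta_0^N\rangle$ as the sum of four terms:
\begin{itemize}
\item a drift term reproducing the linearized operator of \eqref{eq:weak-fluctuation-edps}, after subtracting \eqref{eq:mean-field-limit} and using $\langle\sigma^*,\mu^N\rangle-\langle\sigma^*,\bar\mu\rangle=N^{-1/2}\langle\sigma^*,\eta^N\rangle$;
\item a martingale $M_t^N[f]$ generated by the mini-batch sampling;
\item a martingale generated by the Gaussian noise $\ve^i_k$;
\item remainder terms (time discretization, second-order Taylor terms, and the quadratic term $N^{-1/2}\langle\sigma^*,\eta^N\rangle\langle\nabla f\cdot\nabla\sigma^*,\eta^N\rangle$).
\end{itemize}
The noise martingale and the noise-induced second-order terms are of size $\sqrt N\,N^{-\beta}$ and $\sqrt N\,N^{-2\beta}\cdot N$ respectively. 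Controlling both is where I expect $\beta>3/4$ to enter, so that they vanish in $\msr H$.

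Using the uniform moment bounds \eqref{eq_Cor13_JMLR}, which propagate because of \ref{ass:mu0-compact}, together with the embedding $\mc H^{J_0-1,j_0}\hookrightarrow\mc C^{2,\gamma}$ from Proposition~\ref{ppt:sobolev-weighted-embeddings}, I would derive
\[
\sup_N\mbb E\Big[\sup_{t\le T}\|\eta^N_t\|^2_{\mc H^{-J_0,j_0}}\Big]<\infty .
\]
The Hilbert--Schmidt embedding $\mc H^{J_0,j_0}\hookrightarrow\mc H^{J_0-1,j_0}$ then transfers this to compactness in $\msr H$. Combining it with the Aldous criterion applied to the drift and martingale parts gives tightness in $\mc D(\R_+,\msr H)$. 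Jumps of $\eta^N$ are $O(N^{-1/2})$ in $\msr H$, so every limit point is continuous.

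\textbf{Identification of the limit.} I would pass to the limit along a subsequence using the Skorokhod representation and the convergence $\mu^N\to\bar\mu$ from Theorem~\ref{thm:mean-field-lln}. The drift terms converge by continuity of the linear pairings. For the martingales $M^N[f]$, I would compute their predictable quadratic covariations: the mini-batch variance together with \ref{ass:batch-limit} produces exactly $\alpha^2\mbb E[|B_\infty|^{-1}]\int_0^t\Cov_\pi(\mathrm Q_v[f_i],\mathrm Q_v[f_j])\,\di v$. Since the jumps are negligible, the martingale CLT (for instance \cite[Chapter VII]{ethier-kurtz-2005}) identifies the limit as a G-process $\mathscr G$ in the sense of Definition~\ref{defi:g-process}. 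The initial condition $\eta^N_0=\sqrt N(\mu^N_0-\mu_0)$ converges to $\nu_0$ by the Hilbert-space CLT for i.i.d.\ variables $\delta_{W_0^i}-\mu_0$. This requires $\mbb E\|\delta_{W_0^1}\|^2_{\msr H}<\infty$, which I would obtain by summing $\sum_n f_n(W_0^1)^2$ over an orthonormal basis $(f_n)$ of $\mc H^{J_0-1,j_0}$ and using the Hilbert--Schmidt property.

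\textbf{Weak uniqueness (main obstacle).} The equation is linear, so it suffices to show that two solutions driven by the same G-process and the same initial datum coincide. Their difference $\zeta$ solves the deterministic linear equation with zero forcing. The difficulty is that the transport term $\langle\nabla f\cdot\nabla\sigma^*,\zeta\rangle$ loses one derivative, so a naive Gronwall estimate does not close in a single space. I would first try an energy estimate in $\mc H^{-J_0,j_0}$: apply the equation to $f=\Pi\zeta_s$, the Riesz representative of $\zeta_s$, and integrate by parts. This should show that the derivative-losing part is controlled by $\|\zeta_s\|^2$, thanks to the smoothness of $\sigma^*$ from \ref{ass:sigma}, in the spirit of \cite{fernandez-meleard-1997}. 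Gronwall then gives $\zeta\equiv0$. If that fails, the fallback is a duality argument with the backward transport equation \eqref{eq:backward-transport}: one solves the adjoint equation with terminal datum $\varphi$ and checks that $t\mapsto\langle f(t),\zeta_t\rangle$ is constant. Once uniqueness holds, every limit point has the same law, which completes the proof.
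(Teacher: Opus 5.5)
Your three-step architecture (a priori bound and tightness, identification of limit points via the martingale CLT, then weak uniqueness through a Fernandez--M\'el\'eard energy estimate) is the standard scheme behind this result. The paper does not reprove the theorem; it recalls it from the work of Descours, Guillin, Michel and Nectoux. Your identification and uniqueness steps are sound in outline, and you correctly locate where $\beta>\frac34$ enters, namely the $N^{\frac32-2\beta}$ second-order noise term. The tightness step, however, fails as written.

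\textbf{The compactness transfer runs the wrong way.} Dualizing $\mc H^{J_0,j_0}(\R^d)\hookrightarrow\mc H^{J_0-1,j_0}(\R^d)$ gives $\msr H=\mc H^{-J_0+1,j_0}(\R^d)\hookrightarrow\mc H^{-J_0,j_0}(\R^d)$. So the space in which you bound $\eta^N$ is \emph{larger} than $\msr H$. A uniform bound in $\|\cdot\|_{\mc H^{-J_0,j_0}}$ does not even control $\|\eta^N_t\|_{\msr H}$, let alone give compact containment in $\msr H$.

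Moreover, $\mc H^{J_0,j_0}\hookrightarrow\mc H^{J_0-1,j_0}$ is neither Hilbert--Schmidt nor compact. It gains one derivative and no weight, whereas Proposition~\ref{ppt:sobolev-weighted-embeddings} requires gains $\ell>d/2$ and $b>d/2$. Concretely, take $\phi\in C_c^\infty(\R^d)$ and $|x_n|\to\infty$ with disjoint translated supports. The functions $(1+|x_n|^{2j_0})^{1/2}\phi(\cdot-x_n)$ are bounded in the first space but have no convergent subsequence in the second. Even if that embedding were compact, Schauder's theorem would only give compactness of $\msr H$-bounded sets in $\mc H^{-J_0,j_0}$, not in $\msr H$.

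What you need is $\sup_N\mbb E[\sup_{t\le T}\|\eta^N_t\|^2_{\mc H^{-J_1,j_1}}]<\infty$ in a strictly \emph{smaller} dual space, with $\mc H^{J_0-1,j_0}\overset{\rm H.S.}{\hookrightarrow}\mc H^{J_1,j_1}$. For instance, $(J_1,j_1)=(L,\gamma)$ works, since $J_0-1-L>d/2$ and $\gamma-j_0>d/2$. The adjoint embedding $\mc H^{-L,\gamma}\hookrightarrow\msr H$ is then compact, which yields compact containment. Jakubowski's criterion together with Aldous' criterion for $\langle f,\eta^N\rangle$ then gives tightness.

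\textbf{The a priori bound is the real work, and it is missing.} Moment bounds control $\mu^N_t$, not $\sqrt N(\mu^N_t-\bar\mu_t)$, so the bound does not follow from them. You have to feed your pre-limit decomposition into an estimate of $\sum_n\langle f_n,\eta^N_t\rangle^2$, with $(f_n)$ an orthonormal basis of $\mc H^{J_1,j_1}$. Two things then happen:
\begin{itemize}
\item the Taylor remainders are controlled through $\sum_n|D^\alpha f_n(w)|^2\le C(1+|w|^{2j_1})$ for $|\alpha|\le 2$, which forces $J_1>d/2+2$;
\item the transport term $\langle\nabla f\cdot\nabla\sigma^*(\cdot,x),\eta^N_s\rangle$ loses a derivative exactly as in your uniqueness step.
\end{itemize}
A direct Gronwall argument therefore does not close. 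The standard remedy is the same integration-by-parts cancellation on the Riesz representative, now applied to the discrete-time pre-limit process. So the derivative loss you single out as the main obstacle for uniqueness is already the main obstacle for tightness.

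\textbf{A smaller point on uniqueness.} Pathwise uniqueness for this linear additive-noise equation gives uniqueness in law only once the joint law of $(\eta_0,\mathscr G)$ is fixed; the solution is then a Borel functional of that pair. You should therefore prove in the identification step that $\bar\eta_0$ is independent of $\mathscr G$. This holds because the mini-batches are independent of the initial weights and the limiting bracket is deterministic.
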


The functional central limit theorem shows that, after the $\sqrt N$ rescaling, the random fluctuations of the empirical measure around the deterministic mean-field trajectory become asymptotically Gaussian. In contrast with the law of large numbers given by Theorem~\ref{thm:mean-field-lln}, the limiting dynamics is now stochastic, since it is governed by a linear evolution equation driven by the G-process introduced above. As for Theorem~\ref{thm:mean-field-lln}, the stronger condition $\beta > \frac{3}{4}$ ensures that the exploration noise appearing in the stochastic gradient algorithm remains negligible at the fluctuation scale.

The functional central limit theorem identifies the limiting fluctuation process as the unique weak solution of the stochastic evolution equation~\eqref{eq:weak-fluctuation-edps}. The proof relies on a compactness argument together with Prokhorov theorem, which establishes convergence in distribution without providing an explicit description of the law of the limiting process. Nevertheless, as discussed for instance by
J.Sirignano, R.Sowers \& K.Spiliopoulos~\cite[Chapter~4, Section~20]{sirignano-sowers-spiliopoulos-2025}, the limiting dynamics is expected to remain Gaussian since it solves a linear stochastic equation driven by a Gaussian process. The remaining question was therefore to characterize this Gaussian law explicitly. The approach used in the work by A.Descours, A.Guillin, G.Lacour, M.Michel, B.Nectoux \& P.Stos~\cite[Theorem~3]{descours-guillin-lacour-michel-nectoux-stos-2026} addresses this question by extending the class of admissible test
functions. More precisely, instead of considering arbitrary test functions, we introduce a family of solutions to a nonlocal backward transport equation whose role is to compensate the drift term appearing in~\eqref{eq:weak-fluctuation-edps}. More precisely, the correction is obtained by transporting the test functions backward along the linearized mean-field dynamics. Such an approach can be found e.g. in the works by S.Bourguin \& K.Spiliopoulos~\cite{bourguin-spiliopoulos-2026} or by Z.Wang, X.Zhao \& R.Zhu~\cite[Section~3.4]{wang-zhao-zhu-2023}. This transformation allows the stochastic evolution to be expressed solely in terms of its Gaussian forcing and ultimately yields an explicit characterization of the law of the limiting fluctuation process. This leads to the following nonlocal backward transport equation.

More precisely, given a terminal time $t \geq 0$ and a final condition $\varphi \in \mc C_b^\infty(\R^d)$, we consider the following nonlocal backward transport equation.
{ \footnotesize\begin{equation}\label{eq:backward-transport}
  \begin{cases}
    \partial_s f(s,w) + \alpha\,\mbb E_\pi\Big[\big(y-\langle\sigma^*(\cdot,x),\bar\mu_s\rangle\big)\nabla f(s,w)\cdot\nabla\sigma^*(w,x)  -  \big\langle\nabla f(s,\cdot)\cdot\nabla\sigma^*(\cdot,x),\bar\mu_s\big\rangle\sigma^*(w,x)\Big]=0,
    \\
    f(t,w) = \varphi(w).
  \end{cases}
\end{equation}}

We can now state the main result of A.Descours, A.Guillin, G.Lacour, M.Michel, B.Nectoux \& P.Stos \cite[Theorem~3]{descours-guillin-lacour-michel-nectoux-stos-2026}, which provides an explicit representation of the covariance of the
limiting fluctuation process and therefore completely characterizes its
law.

\begin{thm}\label{thm:mean-field-uq} Assume that Assumptions~\ref{ass:data}--\ref{ass:batch-limit} hold and that $\beta > \frac{3}{4}$. Then, for every $n \geq 1$ and every $\varphi_1,\ldots,\varphi_n \in \mc C_b^\infty(\R^d)$, the finite-dimensional process
\[
	\Big(\langle\varphi_1,\bar\eta\rangle, \ldots, \langle\varphi_n,\bar\eta\rangle \Big) \in \mathcal C(\R_+,\R^n)
\]
is a centered Gaussian process. Its covariance is explicitly given, for every $\varphi,\psi \in \mc C_b^\infty(\R^d)$ and every $0 \leq s \leq t$, by
\begin{equation}\label{eq:covariance}
	\begin{split}
		\Cov\big(\langle\varphi,\bar\eta_s\rangle,\langle\psi,\bar\eta_t\rangle\big) = &\, \Cov\big(f^\varphi(0,W_0^1),f^\psi(0,W_0^1)\big)\\
		&+ \alpha^2\mbb E\bigg[\frac{1}{|B_\infty|}\bigg]\int_0^s\Cov_\pi\big(Q_r[f^\varphi(r)],Q_r[f^\psi(r)]\big)\,dr,
	\end{split}
\end{equation}
where
$f^\varphi$ and $f^\psi$ denote the solutions of \eqref{eq:backward-transport} on $[0,s]$ and $[0,t]$ with terminal conditions $f^\varphi(s,\cdot)=\varphi$ and $f^\psi(t,\cdot)=\psi$, respectively.
\end{thm}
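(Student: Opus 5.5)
The plan is to use the backward transport equation \eqref{eq:backward-transport} as a \emph{duality device}. Pairing $\bar\eta$ with the time-dependent test function $s\mapsto f^\varphi(s,\cdot)$ should cancel the drift in \eqref{eq:weak-fluctuation-edps}, leaving only the initial datum and a Wiener-type integral against the G-process.

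\textbf{Step 1: the backward equation.} Fix $t\geq 0$ and $\varphi\in\mc C_b^\infty(\R^d)$, and rewrite \eqref{eq:backward-transport} as $\partial_s f + \mathcal L_s f = 0$. Here
\[
\mathcal L_s f(w) = \alpha\,\mbb E_\pi\big[(y-\langle\sigma^*(\cdot,x),\bar\mu_s\rangle)\nabla f\cdot\nabla\sigma^*(w,x)\big] - \alpha\,\mbb E_\pi\big[\langle\nabla f\cdot\nabla\sigma^*(\cdot,x),\bar\mu_s\rangle\sigma^*(w,x)\big].
\]
The first term is transport along a smooth vector field with bounded derivatives, by \ref{ass:sigma} and the moment bound on $y$ in \ref{ass:data}. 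The second term is a nonlocal, linear, rank-type perturbation with smooth bounded coefficients. I would construct $f^\varphi$ as follows:
\begin{itemize}
\item solve along the backward characteristics of the vector field;
\item close a Picard fixed point on the finite-dimensional nonlocal coefficient $s\mapsto \mbb E_\pi[\langle\nabla f\cdot\nabla\sigma^*(\cdot,x),\bar\mu_s\rangle\,\cdot\,]$, using the continuity of $\bar\mu$ from Corollary~\ref{coro:Wgamma-continuity};
\item show that $f^\varphi\in C^1([0,t],\mc C_b^\infty(\R^d))$.
\end{itemize}
Since $j_0>d/2$, the embedding $\mc C_b^\infty\subset\mc H^{J_0,j_0}$ noted after Definition~\ref{def:holder-spaces} then gives $f^\varphi\in C^1([0,t],\mc H^{J_0,j_0}(\R^d))$. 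This makes it an admissible test function in \eqref{eq:weak-fluctuation-edps} at each frozen time.

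\textbf{Step 2: the representation identity.} Take a partition $0=t_0<\dots<t_n=t$ and telescope $\langle f^\varphi(t),\bar\eta_t\rangle-\langle f^\varphi(0),\bar\eta_0\rangle$. Each increment splits as $\langle f^\varphi(t_{i+1})-f^\varphi(t_i),\bar\eta_{t_{i+1}}\rangle$ plus the increment of $\langle f^\varphi(t_i),\bar\eta\rangle$ on $[t_i,t_{i+1}]$, and the latter is given by \eqref{eq:weak-fluctuation-edps}. The two drift terms in \eqref{eq:weak-fluctuation-edps} are exactly $\int\langle\mathcal L_r f,\bar\eta_r\rangle\,dr$, with the second term moved onto $\eta_r$ by Fubini. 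Hence, as the mesh goes to zero, they cancel against $\int\langle\partial_r f^\varphi,\bar\eta_r\rangle\,dr$. This uses continuity of $\bar\eta$ in $\msr H$ and the $C^1$ regularity of $f^\varphi$ in $\mc H^{J_0,j_0}$. The result is
\[
\langle\varphi,\bar\eta_t\rangle=\langle f^\varphi(0),\bar\eta_0\rangle+\int_0^t\langle f^\varphi(r),d\msr G_r\rangle,
\]
where the last term is the $L^2$-limit of the Riemann sums $\sum_i\langle f^\varphi(t_i),\msr G_{t_{i+1}}-\msr G_{t_i}\rangle$. These sums are centered Gaussian with independent summands by Definition~\ref{defi:g-process}. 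By weak uniqueness in Theorem~\ref{thm:mean-field-clt}, it suffices to argue on the realization $\eta^\star$.

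\textbf{Step 3: Gaussianity and covariance.} I need $\bar\eta_0$ to be independent of $\msr G$. This should be inherited from the construction of the limit, since $\bar\eta_0$ comes from the $W_0^i$ alone while $\msr G$ comes from the batches and noise, which are independent of $\mc F_0^N$ by \ref{ass:data}. Granting this, any finite family $\langle\varphi_k,\bar\eta_{s_k}\rangle$ is an $L^2$-limit of linear combinations of the jointly Gaussian vector $(\bar\eta_0$ tested against finitely many functions, finitely many increments of $\msr G)$. It is therefore centered Gaussian, with $\bar\eta_0$ Gaussian by Theorem~\ref{thm:mean-field-clt}.

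For $s\leq t$, the covariance is computed term by term:
\begin{itemize}
\item The initial terms give $\Gamma(f^\varphi(0),f^\psi(0))=\Cov(f^\varphi(0,W_0^1),f^\psi(0,W_0^1))$.
\item The stochastic integral for $\psi$ splits at time $s$. The piece on $[s,t]$ is independent of everything up to time $s$, by independent increments.
\item The piece on $[0,s]$ has covariance, via \eqref{eq:covariance-g-process} applied to the Riemann sums, equal to $\alpha^2\mbb E[|B_\infty|^{-1}]\int_0^s\Cov_\pi(Q_r[f^\varphi(r)],Q_r[f^\psi(r)])\,dr$.
\end{itemize}
Passing to the limit uses continuity of $r\mapsto Q_r[f(r)]$ in $L^2(\pi)$. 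This yields \eqref{eq:covariance}.

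\textbf{Main obstacle.} I expect the hardest part to be Step 1 together with the limit in Step 2. I need well-posedness of the nonlocal backward equation with time-uniform bounds on enough derivatives in the weighted norm $\mc H^{J_0,j_0}$, so that the frozen-test-function errors vanish. To get this I would first differentiate the characteristic flow $J_0$ times to obtain uniform $\mc C_b^{J_0}$ bounds, and then handle the nonlocal term by a Grönwall argument on these bounds.
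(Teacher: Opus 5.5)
Your strategy matches the one the paper attributes to the cited work (the paper recalls this theorem rather than proving it). You transport the test function backward through \eqref{eq:backward-transport}, so that the drift of \eqref{eq:weak-fluctuation-edps} cancels and $\langle\varphi,\bar\eta_t\rangle=\langle f^\varphi(0),\bar\eta_0\rangle+\int_0^t\langle f^\varphi(r),d\msr G_r\rangle$. Steps 1 and 2 are sound as sketched, with one correction. The nonlocal term $w\mapsto\alpha\,\mbb E_\pi[c_s(x)\sigma^*(w,x)]$, with $c_s(x)=\langle\nabla f(s,\cdot)\cdot\nabla\sigma^*(\cdot,x),\bar\mu_s\rangle$, is not finite-rank unless $\pi$ has finite support. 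Your Picard/Gr\"onwall scheme still closes, because every $w$-derivative of this term is bounded by a constant times $\|\nabla f(s)\|_\infty$. You should therefore run the fixed point on $f$ in $C([0,t],\mc C^1_b)$ rather than on a ``finite-dimensional coefficient''.

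The genuine gap is the independence of $\bar\eta_0$ and $\msr G$ in Step 3, which you only grant. This is what removes the cross terms $\Cov\big(\langle f^\varphi(0),\bar\eta_0\rangle,\int_0^t\langle f^\psi(r),d\msr G_r\rangle\big)$ from \eqref{eq:covariance}, and what makes the family jointly Gaussian.

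The justification you offer fails at finite $N$. The martingale part $M^N$ of $\eta^N$ is built from the centred SGD increments $\frac{\alpha}{N|B_k|}\sum_{(x,y)\in B_k}(y-g^N_{W_k}(x))\nabla\sigma^*(W_k^i,x)$. These depend on the current weights and hence on $W_0$, so $M^N$ is \emph{not} independent of $\eta_0^N$. Independence only emerges in the limit, because the conditional quadratic variation of $M^N$ converges in probability to the deterministic covariance \eqref{eq:covariance-g-process}. Making this rigorous needs a conditional (mixing) martingale CLT, e.g.\ showing that $\mbb E\big[e^{i\langle g,\eta_0^N\rangle+i\langle f,M_t^N\rangle}\big]$ factorizes in the limit.

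There is also an inconsistency. Once you switch to the abstract weak solution $\eta^\star$, the construction of $\bar\eta$ is no longer available, and Definition~\ref{def:weak-solution-edps} as recalled imposes no independence. You must therefore do one of two things. Either build independence of $\eta_0$ and $\msr G$ into the notion of weak solution and compute on $\eta^\star$; this is needed anyway for weak uniqueness to make sense, since by your Step 2 the solution is a deterministic functional of the pair $(\eta_0,\msr G)$, so its law depends on their joint law. Or stay with the actual limit and prove the asymptotic independence as above.
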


An immediate consequence of Theorem~\ref{thm:mean-field-uq} is the following explicit variance representation.
\begin{equation}\label{eq:variance}
  \Var\big(\langle \varphi, \bar\eta_t\rangle\big) = \Var\big(f^\varphi(0,W_0^1)\big) + \alpha^2\mbb E\bigg[\frac{1}{|B_\infty|}\bigg]\int_0^t\Var\big(Q_s[f^\varphi(s)]\big)\,ds.
\end{equation}
Choosing $\varphi = \sigma^*(\cdot, x)$ yields an explicit variance formula for the prediction of the neural network at the input $x \in \mathcal X$. This identity provides a direct uncertainty quantification formula in the mean-field regime.

\begin{rmk}\label{rmk:sobolev-gaussianity-fluctuation-process} Since $C_c^\infty(\R^d) \subset C^\infty_b(\R^d)$ and $C^\infty_c(\R^d)$ is dense in $\msr  H' \simeq \mc H^{J_0-1,j_0}(\R^d)$, Theorem~\ref{thm:mean-field-uq} also implies that $\bar\eta_t$ is an $\msr H$-valued Gaussian random variable. More precisely, for every $\varphi \in \msr H'$, choose $\varphi_n \in C_c^\infty(\R^d)$ such that $\varphi_n \to \varphi$ in $\msr H'$. Then
\[
    \langle\varphi_n,\bar\eta_t\rangle \to \langle\varphi,\bar\eta_t\rangle,
\]
almost surely, while every variable $\langle\varphi_n,\bar\eta_t\rangle$ is Gaussian by Theorem~\ref{thm:mean-field-uq}. Since a weak limit of Gaussian distributions on $\R$ is Gaussian, the conclusion follows. Nevertheless, this does not let us extend the explicit covariance formula \eqref{eq:covariance} in such a setting, for which a deeper analysis of qualitative properties to the solution of the backward transport equation~\eqref{eq:backward-transport} is required.
\end{rmk}

\section{Proofs of the results}\label{sec:proofs}

We begin with the proof of Theorem~\ref{thm:functional-delta-method}, recalling that its inherent functional framework is given for the fluctuation space by $\msr H = \mc H^{-J_0+1,j_0}(\R^d)$, in which Theorem~\ref{thm:mean-field-clt} provides the convergence of $\eta^N$ towards the continuous limiting process $\bar\eta$. First, we apply a joint convergence result provided by Lemma~\ref{lem:joint-convergence} in Appendix~\ref{app:auxiliary-results}. 

For completeness, and for readers interested in the relation between the two functional frameworks involved, we also establish an explicit quantitative comparison between the Wasserstein and weighted Sobolev topologies. Although this estimate is not logically required by the subsequent representation argument, it provides an independent quantitative way to the required fixed-time Sobolev convergence and makes the passage between the two topologies as self contained as possible. Nevertheless, readers that are primarily interested in the Delta method argument may skip the preliminary step.

Then, we apply the Skorokhod representation theorem to the jointly convergent pair $(\mu^N,\eta^N)$ and work, on a common probability space, with representatives converging almost surely in the corresponding product Skorokhod space. Finally, we use the identity $\eta_t^N = \sqrt{N}(\mu_t^N-\bar\mu_t)$ together with the Fréchet differentiability of the observation functional. A first-order expansion around the mean field limit and a control of the remainder then give the desired convergence in distribution.

\begin{proof}[Proof of Theorem~\ref{thm:functional-delta-method}] As above mentioned, we prove the result for a fixed time $t \geq 0$. By Theorem~\ref{thm:mean-field-lln}, the empirical measure $\mu^N$ converges in probability to the deterministic path $\bar\mu$ in $\mc B_1 := \mc D(\R_+, \mc P_\gamma(\R^d))$, implying that $\mu^N \xrightarrow[N\to\infty]{\mathscr L} \bar\mu$ in the same functional space. Moreover, by Theorem~\ref{thm:mean-field-clt}, the fluctuation process $\eta^N:=\sqrt N(\mu^N-\bar\mu)$ converges in law to $\bar\eta$ in $\mc B_2 := \mc D(\R_+,\msr H)$. Then, we observe that $\mc B_1 \times \mc B_2$ equipped with the metric 
\[
	d_{\mc B_1 \times \mc B_2}\bigg((\mu_1,\eta_1),(\mu_2,\eta_2)\bigg) := \sqrt{d_{\mc B_1}(\mu_1,\mu_2)^2 + d_{\mc B_2}(\eta_1,\eta_2)^2}
\]
is a Polish space, see e.g. results in the book by C.Villani~\cite[Theorem 6.18]{villani-2009} and in the book by J.Jacod \& A.Shiryaev~\cite[Chapter VI, Theorem 1.14-a]{jacod-shiryaev-2003}, where $d_{\mc B_1}$ and  $d_{\mc B_2}$ stand respectively for the usual ($J_1$) Skorokhod topologies on $\mc P_\gamma(\R^d)$ endowed with Wasserstein distance and on $\mathscr H$ for the metric introduced in Section~\ref{sec:preliminary-results}. Applying now Lemma~\ref{lem:joint-convergence} in Appendix~\ref{app:auxiliary-results}, we get that 
\[
	(\mu^N,\eta^N) \xrightarrow[N\to\infty]{\msr L} (\bar\mu,\bar\eta) \quad\text{in} \;\mc B_1\times\mc B_2.	
\]

As above mentioned, we begin by establishing a quantitative estimate showing how random processes are embedded from Wasserstein to negative-order weighted Sobolev topologies.

\textit{Preliminary step - A quantitative comparison between Wasserstein and weighted Sobolev topologies.} 
Let $t \geq 0$. The purpose of this step is to prove \eqref{eq:control sobolev wasserstein probab}, and consequently that $\mu_t^N\xrightarrow[N \to +\infty]{\P}\bar\mu_t$ in $\mathscr H$. We give a complete proof and a quantitative rate of convergence  with respect to the $W_\gamma$ distance. Note however that one can invoke simply that $\mathcal P_\gamma(\R^d)$ is continuously embedded in $\msr H$ so one can skip this step at the first reading and go directly to the next step (see Remark~\ref{rk:topologies} for further details). 
We introduce the intermediate space $\msr K:=\mc H^{-L,\gamma^*}(\R^d)$. Since
\[
    \mc H^{J_0-1,j_0}(\R^d) \hookrightarrow \mc H^{L,\gamma}(\R^d) \hookrightarrow \mc H^{L,\gamma^*}(\R^d),
\]
duality yields
\[
    \msr K \hookrightarrow \mc H^{-L,\gamma}(\R^d) \hookrightarrow \msr H.
\]
Let $\mu,\nu \in \mc P_{2\gamma^*}(\R^d)$ and let $\pi \in \Pi(\mu,\nu)$ denote the optimal coupling between $\mu$ and $\nu$ fo $W_2$. For every $\varphi \in \mc H^{L,\gamma^*}(\R^d)$, we can write
\[
	\big|\varphi(w) - \varphi(w')\big| \leq \sup_{s\in[0,1]} \big|\nabla\varphi(w + s(w'-w))\big||w-w'| \leq C\|\varphi\|_{\mc H^{L,\gamma^*}}|w - w'|\big(1 + |w|^{\gamma^*} + |w'|^{\gamma^*}\big),
\]
for some $C>0$. Integrating with respect to $\pi$, we obtain 
\[
	\big|\langle \varphi, \mu - \nu\rangle\big| \leq  C\|\varphi\|_{\mc H^{L,\gamma^*}}\int_{\R^d \times \R^d}|w - w'|\big(1 + |w|^{\gamma^*} + |w'|^{\gamma^*}\big)\, \di\pi(w,w'),
\]
and Hölder inequality let us conclude, for a different constant that we still denote by $C$, 
\[
	\big|\langle \varphi, \mu - \nu\rangle\big| \leq  C\|\varphi\|_{\mc H^{L,\gamma^*}}\bigg(1 + \int_{\R^d \times \R^d}|w|^{2\gamma^*} + |w'|^{2\gamma^*} \, \di\pi(w,w')\bigg)^{\frac{1}{2}} W_2(\mu,\nu).
\]
Since $\gamma \geq 2$, $W_2(\mu,\nu) \leq W_\gamma(\mu,\nu)$ (see e.g. the book by C.Villani~\cite[Remark 6.6]{villani-2009}), so that one gets
\[
	\big|\langle \varphi, \mu - \nu \rangle\big| \leq C\bigg(1 + \int_{\R^d \times \R^d}|w|^{2\gamma^*} + |w'|^{2\gamma^*} \, \di\pi(w,w')\bigg)^{\frac{1}{2}}\|\varphi\|_{\mc H^{L,\gamma^*}}W_\gamma(\mu,\nu),
\]	
and therefore, taking the supremum on both sides over the closed unit ball in $\mc H^{L,\gamma^*}(\R^d)$ yields
\begin{equation}\label{eq:embedding-wasserstein-dual-sobolev}
	\|\mu - \nu \|_{\msr K} \leq C\bigg(1 + \int_{\R^d \times \R^d}|w|^{2\gamma^*} + |w'|^{2\gamma^*} \, \di\pi(w,w')\bigg)^{\frac{1}{2}}W_\gamma(\mu,\nu).
\end{equation}
Our goal is now to apply this inequality to $\mu_t^N$ and $\bar\mu_t$.  
We first point out that $\mu_t^N$ does satisfy the above required higher-order moment control, thanks to the result by A.Descours, A.Guillin, M.Michel \& B.Nectoux~\cite[Lemma~8]{descours-guillin-michel-nectoux-2024}. Namely, for every $T>0$, there exists $C_T>0$ such that
\[
	\underset{N \geq 1}{\mathrm{sup}}\, \underset{1 \leq i \leq N}{\mathrm{sup}}\, \underset{0 \leq k \leq \lfloor NT \rfloor}{\mathrm{sup}}\,\mbb E\big[|W_k^{i,N}|^{8\gamma^*}\big]\leq C_T.
\]
Hence, for every fixed $t\in[0,T]$, this immediately implies
\[
\underset{N \geq 1}{\mathrm{sup}}\, \mbb E\bigg[\int_{\R^d}|w|^{2\gamma^*}\di\mu_t^N(w)\bigg] < +\infty.
\]

Therefore, Markov inequality gives, for every $ R > 0$,
\begin{equation}\label{markov mu t N}
	\underset{N \geq 1}{\mathrm{sup}}\, \P\bigg(\int_{\R^d}|w|^{2\gamma^*}\,\di\mu_t^N(w) > R\bigg) \leq \frac{1}{R}\underset{N \geq 1}{\mathrm{sup}}\,\mbb E\bigg[\int_{\R^d}|w|^{2\gamma^*}\,\di\mu_t^N(w)\bigg].
\end{equation}

We now prove that $\dst\int_{\R^d}|w|^{2\gamma^*}\,\di\bar\mu_t(w) < +\infty$. Indeed, by Corollary~\ref{coro:Wgamma-continuity}, $\mu_t^N \to \bar\mu_t$ in probability in $\mc P_\gamma(\R^d)$, and therefore also in distribution. 

Since the function $w \mapsto  |w|^{2\gamma^*}$ is nonnegative and lower semicontinuous, we have, see e.g. in the book by L.Ambrosio, N.Gigli \& G.Savaré~\cite[Lemma~5.1.7]{ambrosio-gigli-savare-2008}, that the extended valued moment functional from $\mc P_\gamma(\R^d)$ into $\R_+$ given by $\mu \mapsto \dst\int_{\R^d}|w|^{2\gamma^*}\,\di\mu(w)$ is lower semicontinuous for the narrow topology. Combining this lower semicontinuity property together with the same argument as the one used to prove \eqref{eq:Fatou+Ambrosio} that
\begin{equation}\label{mu bar bounded moments gamma 2}
    \int_{\R^d}|w|^{2\gamma^*}\,\di\bar\mu_t(w) \leq \underset{N \to +\infty}{\mathrm{lim}\,\mathrm{inf}}\,\mbb E\bigg[\int_{\R^d}|w|^{2\gamma^*}\,\di\mu_t^N(w)\bigg] \leq \underset{N \geq 1}{\mathrm{sup}}\,\mbb E\bigg[\int_{\R^d}|w|^{2\gamma^*}\,\di\mu_t^N(w)\bigg] < +\infty.
\end{equation}
Together with the preceding uniform expectation estimate, this shows that $\mu_t^N$ and $\bar\mu_t$ define elements of $\msr K$, almost surely. Now, for $R > 0$, set
\[
    A_{N,R} := \bigg\{\int_{\R^d}|w|^{2\gamma^*}\,\di\mu_t^N(w) \leq R \bigg\}.
\]
For every $\ve > 0$, estimate~\eqref{markov mu t N} allows us to choose $R_\ve > 0$ such that
\[
    \underset{N \geq 1}{\mathrm{sup}}\,\P\big(A_{N,R_\ve}^{c}\big) \leq \ve.
\]
On the event $A_{N,R_\ve}$, applying \eqref{eq:embedding-wasserstein-dual-sobolev} with $\mu = \mu_t^N$ and $\nu = \bar\mu_t$ gives the quantitative estimate
\begin{equation}\label{eq:control sobolev wasserstein probab}
    \|\mu_t^N-\bar\mu_t\|_{\msr K} \leq C\bigg( 1 + \int_{\R^d}|w|^{2\gamma^*}\,\di\bar\mu_t(w) +R_\ve \bigg)^{\frac{1}{2}}W_\gamma(\mu_t^N,\bar\mu_t).
\end{equation}
Set
\[
    C_{\ve,t} := C\bigg( 1 + \int_{\R^d}|w|^{2\gamma^*}\,\di\bar\mu_t(w) + R_\ve \bigg)^{\frac{1}{2}}.
\]
Then, for every $\de > 0$,
\[
\begin{split}
    \P\big(\|\mu_t^N-\bar\mu_t\|_{\msr K} > \de \big)
    &\leq \P\big(A_{N,R_\ve}^{\,c}\big) + \P\big(C_{\ve,t}W_\gamma(\mu_t^N,\bar\mu_t) > \de\big)\\
    &\leq \ve + \P\bigg(W_\gamma(\mu_t^N,\bar\mu_t)>\frac{\de}{C_{\ve,t}}\bigg).
\end{split}
\]
Since $\mu_t^N\xrightarrow[N \to +\infty]{\P}\bar\mu_t$ in $\mc P_\gamma(\R^d)$ by Corollary~\ref{coro:Wgamma-continuity}, we obtain
\[
    \underset{N \to +\infty}{\mathrm{lim}\, \mathrm{sup}}\,\P\big(\|\mu_t^N-\bar\mu_t\|_{\msr K} > \de\big) \leq \ve.
\]
Letting $\ve \to 0$, we conclude that $\mu_t^N \xrightarrow[N \to +\infty]{\P}\bar\mu_t$ in $\msr K$ and hence in $\msr H$ by the continuous embedding $\msr K \hookrightarrow \msr H$.

\textit{First step - Representations in product space.} We now use the Skorokhod representation theorem, see for instance the books by P.Billingsley~\cite[Theorem 6.7]{billingsley-1999} or by V.Bogachev~\cite[Theorem 8.5.4]{bogachev-2007}. It allows us to construct, on a common probability space, random variables
\[
    (\mathsf M^N,\mathsf E^N) \overset{\mathscr L}{=} (\mu^N,\eta^N)
\]
and a limiting random variable
\[
    (\bar\mu,\bar{\mathsf E}) \overset{\mathscr L}{=} (\bar\mu,\bar\eta)
\]
such that
\begin{equation}\label{eq:product-law-convergence}
    (\mathsf M^N,\mathsf E^N) \to (\bar\mu,\bar{\mathsf E}) \quad \text{a.s.}\; \text{in}\; \mc B_1 \times \mc B_2.
\end{equation}

Now for every fixed $N \geq 1$ and $t \geq 0$, consider the Borel measurable mapping $G_{N,t}(m,e) := e_t - \sqrt{N}\big(m_t-\bar\mu_t\big)$, with values in $\msr H$. Also  since $(\msf M^N,\msf E^N) \overset{\mathscr L}{=} (\mu^N,\eta^N)$, and $G_{N,t}$ is measurable, and since equality in law is preserved under measurable mapping, then
\[
    G_{N,t}(\msf M^N,\msf E^N) \overset{\mathscr L}{=} G_{N,t}(\mu^N,\eta^N).
\]
Now by the definition of the fluctuation process,
\[
    G_{N,t}(\mu^N,\eta^N) = \eta_t^N - \sqrt{N}\big(\mu_t^N - \bar\mu_t\big) = 0,
\]
almost surely in $\msr H$. Therefore,
\[
    \msr L \big(G_{N,t}(\msf M^N,\msf E^N)\big) = \delta_0,
\]
and hence,
\begin{equation}\label{eq:identity-skorokhod-representation}
    \msf E_t^N = \sqrt{N}\big(\msf M_t^N - \bar\mu_t\big),
\end{equation}
almost surely in $\msr H$. Moreover, since the limiting path $\bar{\msf E}$ is continuous, the almost sure convergence in \eqref{eq:product-law-convergence} implies
\[
    \msf E_t^N \to \bar{\msf E}_t
\]
almost surely in $\msr H$. In particular, \eqref{eq:identity-skorokhod-representation} already gives that almost surely in $\msr H$
\[
    \msf M_t^N - \bar\mu_t  = N^{-1/2}\msf E_t^N \to 0.
\]

\textit{Second step - Functional Delta method argument.} 
By assumption, the observation functional $\Phi : \msr H \to \R^k$ is Fréchet differentiable at $\bar\mu_t$, so that there exists a remainder map $r_t: \msr H \to \R^k$ such that

\[
    \Phi(\bar\mu_t+h) = \Phi(\bar\mu_t) + D\Phi(\bar\mu_t)(h) + r_t(h),
\]
with
\[
    \frac{|r_t(h)|_{\R^k}}{\|h\|_{\msr H}} \underset{h \to 0}{\longrightarrow} 0
\]

in $\msr H$. Using this expansion with $h = \msf M_t^N - \bar\mu_t = N^{-\frac{1}{2}}\msf E_t^N$, we obtain
\[
	\sqrt{N} \big(\Phi(\msf M_t^N) - \Phi(\bar\mu_t)\big) = D\Phi(\bar\mu_t)(\msf E_t^N) + \sqrt{N}\,r_t(\msf M_t^N - \bar\mu_t).
\]
We now show that the remainder converges to zero almost surely. With the convention that the following quotient is zero whenever $\msf M_t^N = \bar\mu_t$, we have
\[
	\big|\sqrt{N}\,r_t(\msf M_t^N - \bar\mu_t)\big|_{\R^k} = \|\msf E_t^N\|_{\msr H} \frac{|r_t(\msf M_t^N - \bar\mu_t)|_{\R^k}}{\|\msf M_t^N - \bar\mu_t\|_{\msr H}}.
\]
Since $\msf M_t^N  \xrightarrow[N\to\infty]{a.s.} \bar\mu_t$ in $\msr H$, the definition of Fréchet differentiability gives
\[
	\frac{|r_t(\msf M_t^N - \bar\mu_t)|_{\R^k}}{\|\msf M_t^N - \bar\mu_t\|_{\msr H}} \xrightarrow[N \to +\infty]{a.s.} 0.
\]
Also, $\msf E_t^N \to \bar{\msf E}_t$ almost surely in $\msr H$ so that $(\|\msf E_t^N\|_{\msr H})_{N \geq 1}$ is bounded almost surely. It follows that the remainder satisfies in $\R^k$
\[
    \sqrt{N}\,r_t(\msf M_t^N - \bar\mu_t) \xrightarrow[N\to\infty]{a.s.} 0.
\]

Moreover, since $D\Phi(\bar\mu_t) : \msr H \to \R^k$ is linear and continuous, we get that $D\Phi(\bar\mu_t)(\msf E_t^N) \to D\Phi(\bar\mu_t)(\bar{\msf E}_t)$ almost surely in $\R^k$. Thus,

\[
	\sqrt{N}\Big(\Phi(\msf M_t^N) - \Phi(\bar\mu_t)\Big) \xrightarrow[N \to +\infty]{a.s.} D\Phi(\bar\mu_t)(\bar{\msf E}_t)
\]
in $\R^k$. Finally, since $(\msf M^N,\msf E^N) \overset{\mathscr L}{=} (\mu^N,\eta^N)$ and $\bar{\msf E} \overset{\mathscr L}{=}\bar\eta$, the almost sure convergence obtained on the Skorokhod representation space implies the desired convergence in distribution, namely
\[
	\sqrt{N}\Big(\Phi(\mu_t^N) - \Phi(\bar\mu_t)\Big) \xrightarrow[N \to +\infty]{\mathscr L} D\Phi(\bar\mu_t)(\bar\eta_t)
\]
in $\R^k$. Since $\bar\eta_t$ is Gaussian and $D\Phi(\bar\mu_t)$ is linear and continuous, the limiting random vector $D\Phi(\bar\mu_t)(\bar\eta_t)$ is Gaussian in $\R^k$ (see item 1. in Definition~\ref{defi:g-process}). This achieves the proof.
\end{proof}

We now prove the corollary, which is a straightforward consequence of Theorem~\ref{thm:mean-field-uq} and of Theorem~\ref{thm:functional-delta-method}. 

\begin{proof}[Proof of Corollary~\ref{coro:covariance-formula}] By Theorem~\ref{thm:functional-delta-method}, the limiting random vector is
\[
    Z_t^\Phi = \Big(D\Phi_1(\bar\mu_t)(\bar\eta_t), \ldots, D\Phi_k(\bar\mu_t)(\bar\eta_t)\Big).
\]
Using the representation assumption, we can write it
\[
    Z_t^\Phi = \Big(\langle \varphi_1^t,\bar\eta_t\rangle, \ldots, \langle \varphi_k^t,\bar\eta_t\rangle \Big).
\]
Hence
\[
    \Sigma_{i,j}^\Phi(t) = \Cov\Big(\langle \varphi_i^t,\bar\eta_t\rangle, \langle \varphi_j^t,\bar\eta_t\rangle \Big).
\]
Applying Theorem~\ref{thm:mean-field-uq} with $s=t$, $\varphi=\varphi_i^t$ and $\psi=\varphi_j^t$ gives
\[
    \Sigma_{i,j}^\Phi(t) = \Cov\Big(f^{\varphi_i^t}(0,W_0^1), f^{\varphi_j^t}(0,W_0^1)\Big) + \alpha^2\mbb E\bigg[\frac{1}{|B_\infty|}\bigg]\int_0^t\Cov_\pi\Big(Q_r[f^{\varphi_i^t}(r)], Q_r[f^{\varphi_j^t}(r)]\Big)\,\di r.
\]
This proves the claim.
\end{proof}

\begin{rmk}\label{rk:topologies}
The argument used above was deliberately formulated in terms of a simple functional structure that is sufficient to apply the functional delta method. That being said, a more intrinsic connection between the Wasserstein and weighted Sobolev spaces can be established. Such a continuity property is not automatic for arbitrary combinations of Wasserstein and dual Hölder topologies. For the polynomial growth exponent considered here, however, the canonical embedding $i : \big(\mc P_\gamma(\R^d),W_\gamma\big) \to \big(\big(C^{J,\gamma}(\R^d)\big)^*,\|\cdot\|_{(C^{J,\gamma}(\R^d))^*}\big)$, $ \iota(\mu)(f):=\int_{\R^d}f\,\di\mu$, is well defined and continuous for $J \geq 1$. Combined with the continuous dual Sobolev embedding
\[
    \big(C^{J,\gamma}(\R^d)\big)^* \hookrightarrow \msr H
\]
used in the present paper, which holds under well-chosen parameters $J_0, j_0$ , this also yields the continuity of the canonical embedding of $\big(\mc P_\gamma(\R^d),W_\gamma\big)$ into $\big(\msr H,\|\cdot\|_{\msr H}\big)$. Let us briefly explain the first continuity statement. Suppose that $\mu^N \to \mu$ in $W_\gamma$, and let $\chi_R$ be a smooth cutoff function equals to one on a ball $B_R$, vanishing outside $B_{R+1}$, and taking values in $[0,1]$. For every $f \in C^{J,\gamma}(\R^d)$, we then can write $f = \chi_Rf+( 1-\chi_R)f$. If $\|f\|_{C^{J,\gamma}}\leq 1$, then, for every fixed $R>0$, the bounded Lipschitz norm of $\chi_Rf$ is bounded by a constant $C_R$ independent of $f$. Consequently, denoting by $d_{\mathrm{BL}}$ the bounded Lipschitz distance,
\[
    \underset{\|f\|_{C^{J,\gamma}}\leq 1}{\mathrm{sup}}\;\bigg|\int_{\R^d}\chi_Rf\,\di(\mu^N-\mu)\bigg| \leq C_R\, d_{\mathrm{BL}}(\mu^N,\mu).
\]
The other term is controlled by
\[
\begin{split}
    \underset{\|f\|_{C^{J,\gamma}}\leq 1}{\mathrm{sup}}\;\bigg|\int_{\R^d}(1-\chi_R)f\,\di(\mu^N-\mu)\bigg| \leq \int_{\{|x|>R\}}(1+|x|^\gamma)\,\di\mu^N(x) + \int_{\{|x|>R\}}(1+|x|^\gamma)\,\di\mu(x).
\end{split}
\]
Since convergence in $W_\gamma$ implies weak convergence together with uniform integrability of the moments of order $\gamma$, the right-hand side in the above estimate is uniformly small as $R \to +\infty$. Then, for every fixed $R$, weak convergence gives $d_{\mathrm{BL}}(\mu^N,\mu) \to 0$. Taking first the upper limit as $N \to +\infty$ and then letting $R \to +\infty$ gives
\[
    \|i(\mu^N)-i(\mu)\|_{(C^{J,\gamma}(\R^d))^*} \to 0.
\]

The above observation provides an alternative way to prove Theorem~\ref{thm:functional-delta-method}. Indeed, after applying the
Skorokhod representation theorem to the joint convergence, one obtains
representatives such that $\msf M^N \to \bar\mu$ almost surely in $\mc D\big(\R_+,\mc P_\gamma(\R^d)\big)$. Since the limiting trajectory $\bar\mu$ is continuous for the $W_\gamma$ topology, it follows at every fixed time $t \geq 0$ that $W_\gamma(\msf M_t^N,\bar\mu_t) \to 0$ almost surely. The above continuous embedding then yields $\msf M_t^N \to \bar\mu_t$ almost surely in $\msr H$, and the Taylor expansion appearing in the functional delta
method part of the proof can be applied directly.
\end{rmk}

The representation assumption in Corollary~\ref{coro:covariance-formula} deserves some additional discussion. Since $H$ is a Hilbert space, every continuous linear functional $D\Phi_i(\bar\mu_t) \in \msr H'$ is represented by a unique element of $\msr H' \simeq \mc H^{J_0-1,j_0}(\R^d)$. The additional assumption in Corollary~\ref{coro:covariance-formula} is therefore not the existence of such a Sobolev representative, but its stronger regularity $\varphi_i^t\in C_b^\infty(\R^d)$, which allows Theorem~\ref{thm:mean-field-uq} to be applied directly. In practice, such a representation arises naturally for a broad class of observables. For instance, if $\Phi_i(\mu)=\langle q_i,\mu\rangle$, then $D\Phi_i(\mu)(h)=\langle q_i,h\rangle$, so that one may simply take $\varphi_i^t=q_i$. More generally, for a cylindrical observable of the form
\[
	\Phi_i(\mu) = F_i\big(\langle q_1,\mu\rangle,\ldots,\langle q_m,\mu\rangle\big),
\]
the chain rule yields
\[
	D\Phi_i(\bar\mu_t)(h) = \bigg\langle\sum_{\ell=1}^m\partial_\ell F_i\big(\langle q_1,\bar\mu_t\rangle,\ldots,\langle q_m,\bar\mu_t\rangle\big)q_\ell, h \bigg\rangle.
\]
Hence, in this case, the representing function is explicitly given by
\[
	\varphi_i^t = \sum_{\ell=1}^m\partial_\ell F_i\big(\langle q_1,\bar\mu_t\rangle,\ldots,\langle q_m,\bar\mu_t\rangle\big)q_\ell.
\]
In particular, the regularity assumption of Corollary~\ref{coro:covariance-formula} is satisfied whenever the functions $q_\ell$ belong to $C_b^\infty(\mathbb R^d)$. 

Also and as aforementioned, the smoothness requirement on the representing functions is mainly inherited from the covariance representation established in Theorem~\ref{thm:mean-field-uq}. One may in principle seek to extend this representation to a larger Sobolev class by establishing suitable continuity properties of the solution to the associated backward transport equation with respect to its terminal datum. Such an extension appears natural in view of the continuity properties of the backward solution operator established in the previous work, but would require a slight refinement of those arguments. We therefore restrict here to representatives for which Theorem~\ref{thm:mean-field-uq} applies directly.

We conclude this section by proving the local factorization criterion for identifiability. The result shows that, under the constant rank assumption, the differential kernel condition throughout a neighborhood integrates into an exact local nonlinear factorization.

\begin{proof}[Proof of Theorem~\ref{thm:local-identifiability}]
We first prove that $(i)$ implies $(ii)$. Assume that there exist open neighborhoods $\mc U_1 \subset \mc U_0$ and $\mc V$, together with a continuously differentiable mapping $F :\mc V \to \R^\ell$, such that it holds on $\mc U_1$ that $Q = F \circ \Phi$. The chain rule formula gives, for every $\mu \in \mc U_1$ that $DQ(\mu) = DF(\Phi(\mu)) \circ D\Phi(\mu)$. Therefore, if $h \in \mathrm{Ker}\, D\Phi(\mu)$, then
\[
    DQ(\mu)(h) = DF(\Phi(\mu))\big(D\Phi(\mu)(h)\big) = DF(\Phi(\mu))(0) = 0.
\]
Hence for every $\mu \in \mc U_1$ we get $\mathrm{Ker}\, D\Phi(\mu) \subset \mathrm{Ker}\, DQ(\mu)$, which proves $(ii)$.

We now prove that $(ii)$ implies $(i)$. Set $ K:= \mathrm{Ker}\, D\Phi(\mu_0)$ and $E := K^\perp$, so that $\msr H = E \oplus K$ and the restriction $D\Phi(\mu_0)_{|E} :E \to \mathrm{Ran}\,D\Phi(\mu_0)$ is an isomorphism. In particular, $\dim E=r$. Since $E$ is finite-dimensional and $D\Phi$ is continuous, up to reduce $\mc U_2$, $D\Phi(\mu)_{|E}$ remains one to one for every $\mu \in \mc U_2$. The constant rank assumption then yields $D\Phi(\mu)(E) = \mathrm{Ran}\,D\Phi(\mu)$, so that $D\Phi(\mu)_{|E} : E \to \mathrm{Ran}\,D\Phi(\mu)$ is an isomorphism. Hence the assumptions of the constant rank theorem (see e.g. R.Abraham, J.Marsden \& T.Ratiu~\cite[Theorem 2.5.15]{abraham-masden-ratiu-1988}) are satisfied, and it provides open neighborhoods $\mc O_H \subset \mc U_2$ of $\mu_0$ in $H$, $\mc O_{\R^k}$ of $\Phi(\mu_0)$ in $\R^k$,and $C^1$ coordinate maps $\Psi:\mc O_H \to \mc O_E \times \mc O_K$ and $\xi: \mc O_{\R^k} \to \mc O_{\R^r} \times \mc O_{\R^{k-r}}$ which are $C^1$ diffeomorphisms onto their images and satisfy  $\Psi(\mu_0)=(0,0)$, $\xi(\Phi(\mu_0))=(0,0)$, where $\mc O_E \subset E$, $\mc O_K \subset K$, $O_{\R^r} \subset \R^r$, $O_{\R^{k-r}} \subset \R^{k-r}$ are open neighborhoods of $0$ respectively in each space. Also, after possibly reducing these neighborhoods, we may assume that $\mc O_K$ is convex and that we have
\[
    \big(\xi \circ \Phi \circ \Psi^{-1}\big)(u,v) = (u,0)
\]
for every $(u,v) \in \mc O_E \times \mc O_K$, after identifying $E$ with $\R^r$. Define the mapping $ q : \mc O_E \times \mc O_K \to \R^\ell$, $(u,v) \mapsto Q(\Psi^{-1}(u,v))$. Then, for every $(u,v)$ in this product neighborhood and every $\eta \in K$, we have
\[
    D\big(\xi \circ \Phi \circ \Psi^{-1}\big)(u,v)(0,\eta) = 0.
\]
Since $\xi$ is a $C^1$ dffeomorphism, we have that $D\xi\big(\Phi\big(\Psi^{-1}(u,v)\big)\big)$ is one to one, so it follows that $D\Psi^{-1}(u,v)(0,\eta) \in \mathrm{Ker}\, D\Phi(\Psi^{-1}(u,v))$. Using the kernel inclusion in $(ii)$, we thus obtain
\[
    DQ(\Psi^{-1}(u,v))\big(D\Psi^{-1}(u,v)(0,\eta)\big) = 0.
\]
Thanks to the chain rule formula, this is equivalent to $\partial_vq(u,v)(\eta) = 0$. Therefore, for every fixed $u$ and every $v_1,v_2 \in \mc O_K$, it holds
\[
	q(u,v_2)-q(u,v_1) = \int_0^1 \partial_vq\big(u,v_1+s(v_2-v_1)\big)(v_2-v_1)\,\di s  = 0.
\]
Hence $q(u,v)$ does not depend on $v$. Defining then $\widehat F: \mc O_E \to \R^\ell$, $u \mapsto q(u,0)$, we thus get $q(u,v) = \widehat F(u)$. 

Let $\pi_r: \R^r \times \R^{k-r} \to \R^r$ denote the projection onto the first component. Since $\pi_r\xi(\Phi(\mu_0)) = 0 \in \mc O_E$, by continuity of $\pi_r \circ \xi$ we may choose an open neighborhood $\mc V \subset \mc O_{\R^k}$ of $\Phi(\mu_0)$ such that $\pi_r\xi(\mc V)\subset\mc O_E$. We may therefore define $F: \mc V \to \R^\ell$, $y \mapsto \widehat F\big(\pi_r\xi(y)\big)$. Let us point out that $F$ is continuously differentiable since $\widehat F$, $\pi_r$, and $\xi$ are so. Since $\Phi$ is continuous and $\Phi(\mu_0) \in \mc V$, we may further choose an open neighborhood $\mc U_1 \subset \mc O_H \subset \mc U_2$ of $\mu_0$ such that $\Phi(\mc U_1)\subset\mc V$. Hence, for every such $\mu$, we get that
\[
    F(\Phi(\mu)) = \widehat F\big(\pi_r\xi(\Phi(\mu))\big) = \widehat F(u) = q(u,v) = Q(\mu).
\]
Hence $Q= F \circ \Phi$ on such a defined open set $\mc U_1$, with $\mc U_1 \subset \mc U_0$, $\Phi(\mc U_1) \subset \mc V$, and $F \in C^1(\mc V,\R^\ell)$, which proves $(i)$.

In particular, if $\mu_1,\mu_2 \in \mc U_1$ satisfy $\Phi(\mu_1)=\Phi(\mu_2)$, then $Q(\mu_1) = F(\Phi(\mu_1)) = F(\Phi(\mu_2)) = Q(\mu_2)$.
\end{proof}

\appendix

\section{Auxiliary probabilistic and functional results}\label{app:auxiliary-results} 

As discussed throughout the article, some arguments rely on auxiliary technical results of rather different nature, whose detailed proofs would interrupt the clarity of the presentation. We have therefore gathered them in the present appendix. First, we establish the probabilistic argument that combines the convergence of the empirical measure with that of its fluctuation process. We then turn to a specific property of the weighted Sobolev scale underlying our functional framework, namely the Hilbert-Schmidt character of the embeddings used throughout the paper.

\subsection{A joint convergence lemma}

The following lemma isolates the joint convergence argument used in the proof of Theorem~\ref{thm:functional-delta-method}. In the present setting, the result may be viewed as a kind of generalized Slutsky theorem, or obtained from a usual converging-together result for random elements taking values in Polish spaces. Nevertheless, we provide thereafter a direct proof based on bounded Lipschitz test functions. Besides making the paper self-contained, this formulation makes explicit the underlying probabilistic mechanism used in the argument. More preciselty, it highlights the fact that convergence in probability of the empirical measure towards a deterministic limit can be combined with convergence in distribution of the fluctuation process to obtain their joint convergence.

\begin{lemma}[Joint convergence]\label{lem:joint-convergence}
Assume that the assumptions of Theorems~\ref{thm:mean-field-lln} and~\ref{thm:mean-field-clt} hold. Set $\mc B_1 := \mc D\big(\R_+,\mc P_\gamma(\R^d)\big)$ and $\mc B_2 := \mc D\big(\R_+,\msr H\big)$, both endowed with their usual Skorokhod $J_1$ topologies. Then
\[
    (\mu^N,\eta^N) \xrightarrow[N\to\infty]{\msr L} (\bar\mu,\bar\eta) \quad\text{in} \;\mc B_1\times\mc B_2.
\]
\end{lemma}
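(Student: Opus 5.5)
The proof is a Slutsky-type argument: one marginal converges in probability to a deterministic limit, the other converges in distribution, and this forces joint convergence. By the Portmanteau theorem on the Polish product $\mc B_1\times\mc B_2$ (with the metric $d_{\mc B_1\times\mc B_2}$ from the proof of Theorem~\ref{thm:functional-delta-method}), it is enough to show
\[
    \mbb E\big[h(\mu^N,\eta^N)\big]\to \mbb E\big[h(\bar\mu,\bar\eta)\big]
\]
for every bounded Lipschitz $h:\mc B_1\times\mc B_2\to\R$. Convergence for all bounded Lipschitz test functions characterizes weak convergence on metric spaces.

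Fix such an $h$, with $\|h\|_\infty\le M$ and Lipschitz constant $\mathrm{Lip}(h)$. Split the difference as
\[
    \mbb E\big[h(\mu^N,\eta^N)\big]-\mbb E\big[h(\bar\mu,\bar\eta)\big]
    = \mbb E\big[h(\mu^N,\eta^N)-h(\bar\mu,\eta^N)\big] + \Big(\mbb E\big[h(\bar\mu,\eta^N)\big]-\mbb E\big[h(\bar\mu,\bar\eta)\big]\Big).
\]

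\emph{Second term.} Since $\bar\mu$ is a deterministic element of $\mc B_1$, the map $e\mapsto h(\bar\mu,e)$ is bounded and Lipschitz on $\mc B_2$. Theorem~\ref{thm:mean-field-clt} gives $\eta^N\to\bar\eta$ in law in $\mc B_2$, so this term tends to zero.

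\emph{First term.} The product metric satisfies $d_{\mc B_1\times\mc B_2}((\mu^N,\eta^N),(\bar\mu,\eta^N))=d_{\mc B_1}(\mu^N,\bar\mu)$. For any $\ve>0$, its absolute value is therefore at most
\[
    \mathrm{Lip}(h)\,\ve + 2M\,\P\big(d_{\mc B_1}(\mu^N,\bar\mu)>\ve\big).
\]
By Theorem~\ref{thm:mean-field-lln}, $\mu^N\to\bar\mu$ in probability in $\mc B_1=\mc D(\R_+,\mc P_\gamma(\R^d))$. Letting $N\to\infty$ and then $\ve\to0$ shows this term vanishes.

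The only technical care is measurability. The pair $(\mu^N,\eta^N)$ must be a random element of the product space, and $d_{\mc B_1}(\mu^N,\bar\mu)$ must be a random variable. Both hold because each component is a random element of a separable metric space; separability holds since $\mc P_\gamma(\R^d)$ and $\msr H$ are Polish, hence so are their Skorokhod spaces. Separability of the factors also makes the product Borel $\sigma$-algebra equal to the product of the Borel $\sigma$-algebras, which is what legitimizes treating $(\mu^N,\eta^N)$ as a random element. One also needs both processes defined on the same probability space for each $N$, which they are, since $\eta^N=\sqrt N(\mu^N-\bar\mu)$ is a function of $\mu^N$. I expect no real obstacle. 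The essential point is that the limit of the first marginal is deterministic; without that, joint convergence would not follow from the marginal convergences.
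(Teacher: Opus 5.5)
Your proposal is correct and follows essentially the same route as the paper. Both arguments split through the intermediate pair $(\bar\mu,\eta^N)$, bound the first term by $\mathrm{Lip}(h)\,\varepsilon + 2M\,\P\big(d_{\mc B_1}(\mu^N,\bar\mu)>\varepsilon\big)$, and get the second term from Theorem~\ref{thm:mean-field-clt}. The only cosmetic difference is in that second term: the paper applies the continuous mapping theorem to $\eta\mapsto(\bar\mu,\eta)$, whereas you note directly that $h(\bar\mu,\cdot)$ is bounded Lipschitz on $\mc B_2$, which amounts to the same thing.
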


\begin{proof}
	The space $\mc P_\gamma(\R^d)$ endowed with the Wasserstein distance $W_\gamma$ is Polish. Moreover, the Skorokhod space $\mc D(\R_+,E)$ endowed with the $J_1$ topology is Polish whenever $E$ is Polish. Consequently, $\mc B_1$, $\mc B_2$, and $\mc B_1\times\mc B_2$ are Polish spaces. We then set $\beta^N := (\mu^N,\eta^N)$, $\tilde\beta^N := (\bar\mu,\eta^N)$ and $\beta := (\bar\mu,\bar\eta)$. Therefore, we get $d_{\mc B_1 \times \mc B_2}(\beta^N, \tilde \beta^N) = d_{\mc B_1}(\mu^N,\bar\mu)$ so that $d_{\mc B_1 \times \mc B_2}(\beta^N, \tilde\beta^N) \to 0$ in probability, since Theorem~\ref{thm:mean-field-lln} gives $\mu^N \to \bar\mu$ in probability in $\mc B_1$.

We also set the mapping $F_{\bar\mu} : \mc B_2 \to \mc B_1 \times \mc B_2$ defined by $F_{\bar\mu}(\eta) = (\bar\mu,\eta)$, which is clearly continuous for the considered topologies. Then, we point out that thanks to Theorem~\ref{thm:mean-field-clt} and the continuous mapping theorem (see e.g. the book by S.Resnick~\cite[Corollary 8.3.1]{resnick-2005}), we get that 
\begin{equation}\label{eq:convergence-law-tilde-beta}
	 \tilde\beta^N = F_{\bar\mu}(\eta^N) \xrightarrow[N\to\infty]{\mathscr L} F_{\bar\mu}(\bar\eta) = \beta \quad \text{in } \mc B_1 \times \mc B_2.
\end{equation}
Setting now $Q_N := \mc L(\beta^N)$, $\tilde Q_N := \mc L(\tilde\beta^N)$ and $Q := \mc L(\beta)$, we aim to show that
\begin{equation*}
	\langle f, Q_N \rangle \underset{N \to +\infty}{\rightarrow} \langle f, Q\rangle,
\end{equation*}
where $\langle \cdot, \cdot \rangle$ denotes the duality bracket between test functions and measures. Thanks to the Portmanteau theorem (see e.g. the book by S.Resnick~\cite[Theorem 8.4.1]{resnick-2005}) combined with the metrizability characterization of convergence in law (see the book by A.W.van der Vaart \& J.Wellner~\cite[Theorem 1.12.4]{vandervaart-wellner-2023}), it is enough to show the result for test functions $f$ belonging to the space of bounded Lipschitz functions. Assuming that such a property holds, we write
\begin{equation}\label{eq:decomposition-convergence-law-product}
	\big| \langle f, Q_N \rangle - \langle f, Q \rangle  \big| \leq \big| \langle f, Q_N \rangle - \langle f, \tilde Q_N \rangle  \big| + \big| \langle f, \tilde Q_N \rangle - \langle f, Q \rangle  \big|,
\end{equation}
and thanks to \eqref{eq:convergence-law-tilde-beta}, we already get that
\[
	\big| \langle f, \tilde Q_N \rangle - \langle f, Q \rangle  \big| \underset{N \to +\infty}{\to} 0. 
\]
We then write
\[
	\big| \langle f, Q_N \rangle - \langle f, \tilde Q_N \rangle  \big| = \bigg| \mbb E\big[f(\beta^N)\big] - \mbb E\big[ f(\tilde\beta^N)\big]\bigg| \leq \mbb E\bigg[\big| f(\beta^N) - f(\tilde\beta^N)\big|\bigg],
\]
and set $A_N^\ve := \big\{d_{\mc B_1 \times \mc B_2}(\beta^N,\tilde\beta^N) \leq \ve\big\}$. Since $f$ is assumed to be a bounded Lipschitz test function it follows
\[
	\big| \langle f, Q_N \rangle - \langle f, \tilde Q_N \rangle \big| \leq L_f\ve + 2M_f\P\big(d_{\mc B_1 \times \mc B_2}(\beta^N, \tilde\beta^N) > \ve\big),
\]
where $L_f > 0$ and $M_f > 0$ are respectively the Lipschitz and boundedness constants associated to $f$. We recall that the last term vanishes as $N \to +\infty$ since we benefit from the convergence of probability of $\mu^N$ toward $\bar\mu$ in $\mc B_1$ by Theorem~\ref{thm:mean-field-lln}, so that coming back to \eqref{eq:decomposition-convergence-law-product} we get
\[
	\big| \langle f, Q_N \rangle - \langle f, Q \rangle  \big| \leq \big| \langle f, \tilde Q_N \rangle - \langle f, Q \rangle\big| + L_f\ve + 2M_f\P\big(d_{\mc B_1 \times \mc B_2}(\beta^N, \tilde\beta^N) > \ve\big) \underset{N \to +\infty}{\to} L_f\ve,
\]

so that 
\[
    \underset{N \to +\infty}{\mathrm{lim}\, \mathrm{sup}}\, \big|\langle f,Q_N \rangle - \langle f,Q \rangle\big| \leq L_f\varepsilon.
\]
and letting $\ve \to 0$ this imply the following joint convergence
\[
    (\mu^N,\eta^N) \xrightarrow[N\to\infty]{\mathscr L} (\bar\mu,\bar\eta), \quad \text{in}\; \mc B_1 \times \mc B_2.
\]
\end{proof}

\subsection{Hilbert-Schmidt embeddings for weighted Sobolev spaces}

We now turn to the functional framework used in our results. As discussed in Section~\ref{ssec:functional-framework}, under suitable moment and regularity assumptions, probability measures can be identified with continuous linear functionals on weighted spaces of test functions and, through the corresponding continuous embeddings, with elements of dual of weighted Sobolev spaces. This provides a common space in which both the empirical measures and their fluctuation processes can be considered. 

The use of weighted Sobolev spaces is particularly convenient because they retain a Hilbert space structure while accounting for the behavior of functions and measures at infinity. This structure makes several standard analytical, probabilistic and statistical tools particularly natural, thanks to the underlying geometry. Moreover, when the required gaps in regularity and weight are satisfied, the relevant embeddings are Hilbert-Schmidt. This property plays an important role in the probabilistic analysis of measure-valued processes, notably in compactness, tightness, and regularity arguments.

Although the Hilbert-Schmidt character of such embeddings is often used in the literature, detailed proofs are rarely included in applications. Such results are typically found in the specialized literature on function spaces, where they are often formulated for substantially more general classes of weights than those considered here. For the reader's convenience, and to make the precise assumptions required in our polynomially weighted setting explicit, we therefore provide a detailed proof of the embedding result used in this paper.

To be precise, weighted function spaces arise naturally in a wide variety of analytical settings, and a well-developed theory is available for several classes of weights. In particular, weighted Sobolev, Bessel-potential, Besov, and Triebel-Lizorkin spaces associated with Muckenhoupt weights have been extensively studied, and we refer, among others, to the works by M.Meyries \& M.Veraar~\cite{meyries-veraar-2012,meyries-veraar-2014,meyries-veraar-2015} and to the references therein. However, the polynomially decaying weights considered here do not, for the range of exponents relevant to our analysis, fall within the Muckenhoupt framework. Indeed, sufficiently strong polynomial decay at infinity prevents these weights from belonging to the corresponding Muckenhoupt classes.

A natural approach would be to rely on a Bessel-potential characterization of the weighted Sobolev spaces. The next result shows that the simultaneous gain of regularity and spatial decay gives rise to a Hilbert-Schmidt operator on
$L^2(\R^d)$. The argument is standard and relies on the kernel representation
of Bessel potentials, see e.g. the book by L.Grafakos~\cite[Definition 1.2.4]{grafakos-2014} for a definition. We highlight that for every $a \in \R_+$, there holds $1+|x|^{2a} \sim (1+|x|^2)^a$, so there is no difference in considering one weight or the other. For the sake of simplicity, we shall denote throughout this section $\rho_a(x):= (1+|x|^2)^{-a/2}$. For simplicity of exposition, we write the proof with the equivalent polynomial weight $\rho_a$. This gives an equivalent norm to the one used in the main text.

\begin{prop}
Let $\ell > d/2$ and $b > d/2$. Then the operator
\[
	T:=\rho_{b}(I-\Delta)^{-\ell/2}
\]
is Hilbert-Schmidt on $L^2(\mathbb R^d)$.
\end{prop}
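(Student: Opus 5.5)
The plan is to show that $T$ is an integral operator on $L^2(\R^d)$ with a square-integrable kernel. The Hilbert-Schmidt norm will then be the $L^2(\R^d\times\R^d)$ norm of that kernel.

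First I would write $(I-\Delta)^{-\ell/2}$ as convolution with the Bessel kernel $G_\ell$, following Grafakos: $\widehat{G_\ell}(\xi)=(1+4\pi^2|\xi|^2)^{-\ell/2}$. Then $T$ has kernel
\[
k(x,y)=\rho_b(x)\,G_\ell(x-y).
\]
The squared Hilbert-Schmidt norm is
\[
\int\!\!\int |k(x,y)|^2\,\di y\,\di x=\int_{\R^d}\rho_b(x)^2\,\di x\cdot\|G_\ell\|_{L^2}^2,
\]
obtained by integrating first in $y$ and using translation invariance. It therefore suffices to check two finiteness conditions.

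\begin{enumerate}[label=(\alph*)]
\item $\int_{\R^d}(1+|x|^2)^{-b}\,\di x<\infty$. This holds exactly when $2b>d$, which is the hypothesis $b>d/2$.
\item $G_\ell\in L^2(\R^d)$. I would get this through Plancherel rather than from pointwise kernel estimates: $\|G_\ell\|_{L^2}^2=\int(1+4\pi^2|\xi|^2)^{-\ell}\,\di\xi$, which is finite exactly when $2\ell>d$, i.e. $\ell>d/2$.
\end{enumerate}

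To keep things rigorous, I would first verify that $Tf=\rho_b\,(G_\ell*f)$ for $f\in\mc S(\R^d)$. Since $G_\ell\in L^2$, the convolution $G_\ell*f$ is defined pointwise for every $f\in L^2$. The integral operator with kernel $k\in L^2(\R^{2d})$ is bounded, because it is Hilbert-Schmidt. It agrees with $T$ on the dense subspace $\mc S$, so the two operators coincide on all of $L^2$.

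The main point needing care is the identification of $(I-\Delta)^{-\ell/2}$ with convolution by an $L^2$ function, rather than an integrable one. Using Plancherel avoids needing the precise behaviour of $G_\ell$ near $0$ or at infinity, so the whole estimate reduces to the two elementary integrals in (a) and (b).
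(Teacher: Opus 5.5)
Your proposal is correct and follows the paper's own proof essentially line by line. In both, $T$ is the integral operator with kernel $\rho_b(x)G_\ell(x-y)$, Fubini and translation invariance factor the squared $L^2$ norm of that kernel as $\int\rho_b^2\cdot\|G_\ell\|_{L^2}^2$, and finiteness follows from $b>d/2$ and from Plancherel with $\ell>d/2$. Your added density argument, which identifies $T$ with the integral operator on all of $L^2(\R^d)$ using the boundedness of both operators, is a small rigor step the paper leaves implicit, and the different Fourier normalization of $\widehat{G_\ell}$ changes nothing.
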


\begin{proof}
Let $G_\ell$ be the Bessel kernel associated with $(I-\Delta)^{-\ell/2}$, namely
\[
	(I-\Delta)^{-\ell/2}f = G_\ell*f.
\]
Thus $T$ is the integral operator with kernel
\[
	K(x,y) = \rho_{b}(x)G_\ell(x-y).
\]
An integral operator with kernel in $L^2(\R^d \times \R^d)$ is Hilbert-Schmidt (see e.g. the book by M.Reed \& B.Simon~\cite[Theorem VI.23]{reed-simon-1972}), and its Hilbert-Schmidt norm is the $L^2$ norm of its kernel. Namely,
\[
	\|T\|_{\mathrm{H.S.}}^2 = \iint_{\mathbb R^d\times\mathbb R^d} |K(x,y)|^2 \dx\,\di y.
\]
Using the form of $K$, Fubini theorem, and the change of variables $z=x-y$, we obtain

\begin{align*}
	\|T\|_{{\rm H.S.}}^2 &= \iint_{\R^d \times \R^d}\rho_{2b}(x)|G_\ell(x-y)|^2 \dx\,\di y \\
	&= \bigg(\int_{\R^d} \rho_{2b}(x)\dx\bigg)\bigg(\int_{\R^d}|G_\ell(z)|^2\,\di z\bigg).
\end{align*}
The first integral is finite because $b > d/2$. The second one is finite since $\widehat{G_\ell}(\xi)=(1+|\xi|^2)^{-\ell/2}$, and therefore, by Plancherel theorem
\[
	\|G_\ell\|_{L^2}^2 = \int_{\R^d}(1+|\xi|^2)^{-\ell}\,\di\xi <+\infty
\]
if and only if $\ell>d/2$. Hence $K\in L^2(\R^d \times \R^d)$, and $T$ is Hilbert-Schmidt.
\end{proof}

The preceding proposition suggests that the desired Hilbert-Schmidt embedding should result from the combination of two independent effects, namely a gain of regularity, represented by the Bessel potential $(I-\Delta)^{-\ell/2}$, and an additional polynomial decay at infinity, represented by the multiplication operator $\rho_{b}$. However, directly identifying the canonical embedding between the weighted Sobolev spaces considered here with such an operator would require an appropriate Bessel-potential characterization of these spaces. While such characterizations are classical in the unweighted setting, and are also available within several classes of weighted function spaces, their direct use for the weights and the definition adopted here is not immediate. We refer e.g. to the book by T.Hyt{\"o}nen, J.vanNeerven, M.Veraar \& L.Weis~\cite[Section 14.7]{hytonen-vanneerven-veraar-weis-2023} for the general theory of Bessel-potential spaces and related Littlewood-Paley characterizations.

Fortunately, polynomially weighted function spaces have also been studied outside the Muckenhoupt framework. In particular, a theory of weighted Besov and Triebel-Lizorkin spaces associated with nondegenerate polynomial weights has been developed in the works by A.Caetano~\cite{caetano-1998}, by T.K{\"u}hn, H.-G.Leopold, W.Sickel \& L.Skrzypczak~\cite{kuhn-leopold-sickel-Skrzypczak-2006,kuhn-leopold-sickel-Skrzypczak-2006-bis,kuhn-leopold-sickel-Skrzypczak-2007} or also by L.Skrzypczak~\cite{skrzypczak-2005}. Of particular relevance here are the precise estimates for the approximation numbers of embeddings between polynomially weighted function spaces obtained in the work by L.Skrzypczak~\cite{skrzypczak-2005}, see also the correction by L.Skrzypczak \& J.Vyb{\'{\i}}ral~\cite{skrzypczak-vybiral-2009}. To apply these results to the weighted Sobolev spaces used in the present work, we first relate our definition to the multiplicatively weighted formulation appearing in that theory. The following equivalence of norms provides the required link.

\begin{lemma}\label{lem:weighted-sobolev-equivalent-norm}
Let $J \in \N$ and $a \geq 0$. Then there exist constants $c,C>0$, depending only on $J$, $a$, and $d$, such that
\[
    c\|\rho_af\|_{H^J} \leq \|f\|_{\mc H^{J,a}} \leq C\|\rho_af\|_{H^J}.
\]
\end{lemma}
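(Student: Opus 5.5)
The plan is to prove both inequalities by expanding $D^\alpha(\rho_a f)$ with the Leibniz rule, controlling derivatives of the weight pointwise, and working first on $C_c^\infty(\R^d)$ before extending by density. Two observations come first. By the remark preceding the statement, $1+|x|^{2a}\sim(1+|x|^2)^a=\rho_a(x)^{-2}$, so the norm of $\mc H^{J,a}$ is equivalent to $\big(\sum_{|\alpha|\le J}\|\rho_a D^\alpha f\|_{L^2}^2\big)^{1/2}$. Second, the derivatives of the weight satisfy the symbol estimate
\[
|D^\beta\rho_a(x)|\le C_\beta\,\rho_a(x)(1+|x|^2)^{-|\beta|/2}\le C_\beta\,\rho_a(x),
\]
and similarly $|D^\beta\rho_{-a}(x)|\le C_\beta\,\rho_{-a}(x)$. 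Both follow by induction on $|\beta|$, since each derivative of $(1+|x|^2)^{s}$ produces a factor of the form polynomial of degree $\le |\beta|$ times $(1+|x|^2)^{s-|\beta|}$.

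For the upper bound $\|\rho_a f\|_{H^J}\lesssim\|f\|_{\mc H^{J,a}}$, I would write
\[
D^\alpha(\rho_a f)=\sum_{\beta\le\alpha}\binom{\alpha}{\beta}D^{\alpha-\beta}\rho_a\,D^\beta f .
\]
Each term is bounded pointwise by $C\rho_a|D^\beta f|$. Summing the $L^2$ norms over $|\alpha|\le J$ then gives $\|\rho_a f\|_{H^J}\le C\|f\|_{\mc H^{J,a}}$, which is the left inequality in the statement.

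For the reverse inequality, I would write $f=\rho_{-a}(\rho_a f)$ with $g:=\rho_a f\in C_c^\infty$. The Leibniz rule gives
\[
\rho_a D^\alpha f=\sum_{\beta\le\alpha}\binom{\alpha}{\beta}\rho_a\,D^{\alpha-\beta}\rho_{-a}\,D^\beta g .
\]
Using $|\rho_a D^{\alpha-\beta}\rho_{-a}|\le C\rho_a\rho_{-a}=C$, each term is bounded by $C|D^\beta g|$. This yields $\|f\|_{\mc H^{J,a}}\le C\|\rho_a f\|_{H^J}$.

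All constants depend only on $J$, $a$ and $d$. Both sides are continuous norms on $C_c^\infty(\R^d)$, and multiplication by $\rho_a$ maps $C_c^\infty$ onto itself. Since $C_c^\infty$ is dense in $H^J$ and, by definition, in $\mc H^{J,a}$, the equivalence extends to the completions, and multiplication by $\rho_a$ becomes an isomorphism from $\mc H^{J,a}$ onto $H^J$.

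The only step needing care is the symbol estimate for $D^\beta\rho_{\pm a}$ uniformly in $x$, together with the equivalence of $1+|x|^{2a}$ and $(1+|x|^2)^a$ when $a$ is not an integer. I would handle the latter by noting that both are positive, continuous, and comparable as $|x|\to\infty$; the former is a routine induction.
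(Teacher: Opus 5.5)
Your proof is correct and matches the paper's argument. You bound the weight derivatives by $\rho_a$, apply the Leibniz rule to get $\|\rho_a f\|_{H^J}\lesssim\|f\|_{\mc H^{J,a}}$, and extend from $C_c^\infty$ by density. The only variation is in the reverse inequality: you expand $f=\rho_{-a}(\rho_a f)$ directly using the symbol bound for the growing weight $\rho_{-a}$, whereas the paper inverts the Leibniz formula and inducts on $|\alpha|$. Both routes are equally valid.
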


\begin{proof}  Let us first point out that for every multi-index $\gamma$, there exists a constant $C_{a,\gamma}>0$ such that for every $x \in \R^d$
\[
    |\partial^\gamma\rho_a(x)| \leq C_{a,\gamma}\rho_a(x).
\]
This follows by induction from
\[
    \partial_j\rho_a(x) = -a\,\frac{x_j}{1+|x|^2}\rho_a(x)
\]
and from the boundedness of the derivatives of $x_j/(1+|x|^2)$. Applying Leibniz rule, we get that for every $|\alpha| \leq J$,
\[
    \partial^\alpha(\rho_a f) =\sum_{\beta \leq \alpha}{\alpha\choose\beta}(\partial^{\alpha-\beta}\rho_a)\partial^\beta f,
\]
yielding directly
\[
    \|\partial^\alpha(\rho_a f)\|_{L^2} \leq C\sum_{\beta \leq \alpha}\|\rho_a\partial^\beta f\|_{L^2}.
\]
Summing such inequalities over $|\alpha| \leq J$ yields
\[
    \|\rho_a f\|_{H^J} \leq C\bigg(\sum_{|\beta| \leq J} \|\rho_a\partial^\beta f\|_{L^2}^2\bigg)^{1/2},
\]

leading to the first side of the wished inequalit. Conversely, rewriting Leibniz formula as
\[
    \rho_a\partial^\alpha f = \partial^\alpha(\rho_a f) - \sum_{\substack{\beta\leq\alpha\\ \beta\neq\alpha}}{\alpha\choose\beta}(\partial^{\alpha-\beta}\rho_a)\partial^\beta f,
\]
and using once again $|\partial^{\alpha-\beta}\rho_a| \leq C_{a,\alpha,\beta}\rho_a$, we obtain
\[
    \|\rho_a\partial^\alpha f\|_{L^2} \leq \|\partial^\alpha(\rho_a f)\|_{L^2} + C \sum_{|\beta| < |\alpha|}\|\rho_a\partial^\beta f\|_{L^2}.
\]
Finally, an induction on $|\alpha|$ gives
\[
    \|\rho_a\partial^\alpha f\|_{L^2} \leq C\|\rho_a f\|_{H^J},\; \text{for }\, |\alpha| \leq J.
\]
Summing over $|\alpha| \leq J$ proves the reverse estimate and hence the equivalence of norms. The argument is first carried out for $f\in C_c^\infty(\R^d)$ and extends to $H^{J,a}(\R^d)$ by density.
\end{proof}

The preceding lemma shows that, up to equivalent Hilbert norms, the weighted Sobolev spaces considered here can be identified with Sobolev spaces defined through multiplication by a nondegenerate polynomial weight. This allows us to reduce the desired embedding to an embedding between polynomially weighted Triebel-Lizorkin spaces of the type studied by L.Skrzypczak~\cite{skrzypczak-2005}. We may therefore combine this identification with the asymptotic estimates for approximation numbers established therein to obtain the Hilbert-Schmidt property of the canonical embeddings.

\begin{prop}\label{prop:HS-weighted-embedding}
Let $j,\ell \in \N$, $a,b \geq 0$, and assume $\ell, b > d/2$. Then the canonical embedding
\[
    \mc H^{j+\ell,a}(\R^d) \hookrightarrow \mc H^{j,a+b}(\R^d)
\]
is Hilbert-Schmidt.
\end{prop}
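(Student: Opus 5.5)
The plan is to reduce the weighted embedding to an unweighted operator on $L^2(\R^d)$, and then to conclude with the preceding proposition on $\rho_b(I-\Delta)^{-\ell/2}$ together with the ideal property of Hilbert-Schmidt operators. The ideal property says that if $A,C$ are bounded and $T$ is Hilbert-Schmidt, then $ATC$ is Hilbert-Schmidt. It also implies that being Hilbert-Schmidt is invariant under conjugation by bounded isomorphisms between Hilbert spaces, and under replacing a norm by an equivalent Hilbert norm.

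\textbf{Step 1 (removing the weights).} By Lemma~\ref{lem:weighted-sobolev-equivalent-norm}, the map $M_a:f\mapsto\rho_af$ is a bounded isomorphism from $\mc H^{J,a}(\R^d)$ onto $H^J(\R^d)$. This holds for every $J$ and $a$; surjectivity follows since $g\mapsto\rho_a^{-1}g$ preserves $C_c^\infty$, and density does the rest. Write $\iota$ for the canonical embedding of the statement. Since $\rho_{a+b}=\rho_a\rho_b$, we get
\[
M_{a+b}\circ\iota\circ M_a^{-1}\,g=\rho_b\,g, \qquad g \in H^{j+\ell}(\R^d).
\]
So it suffices to show that multiplication by $\rho_b$ is Hilbert-Schmidt from $H^{j+\ell}(\R^d)$ into $H^j(\R^d)$.

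\textbf{Step 2 (passing to $L^2$).} By Plancherel, $(I-\Delta)^{-(j+\ell)/2}$ is an isomorphism from $L^2$ onto $H^{j+\ell}$. Moreover, for integer $j$, the $H^j$ norm is equivalent to $\big(\sum_{|\alpha|\le j}\|\partial^\alpha h\|_{L^2}^2\big)^{1/2}$. Hence it is enough to prove that, for each $|\alpha|\le j$, the operator
\[
S_\alpha:=\partial^\alpha\rho_b(I-\Delta)^{-(j+\ell)/2}
\]
is Hilbert-Schmidt on $L^2$. Indeed, $h\mapsto(\partial^\alpha h)_{|\alpha|\le j}$ embeds $H^j$ isometrically, up to equivalence, into a finite direct sum of copies of $L^2$, and a finite direct sum of Hilbert-Schmidt operators is Hilbert-Schmidt.

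\textbf{Step 3 (Leibniz decomposition).} By the Leibniz rule,
\[
S_\alpha=\sum_{\beta\le\alpha}\binom{\alpha}{\beta}\,(\partial^{\alpha-\beta}\rho_b)\,\partial^\beta(I-\Delta)^{-(j+\ell)/2}.
\]
For each term we proceed as follows.
\begin{enumerate}
\item As in the proof of Lemma~\ref{lem:weighted-sobolev-equivalent-norm}, $m_{\alpha-\beta}:=\partial^{\alpha-\beta}\rho_b/\rho_b$ is bounded, so multiplication by it is bounded on $L^2$.
\item Fourier multipliers commute, so
\[
\partial^\beta(I-\Delta)^{-(j+\ell)/2}=(I-\Delta)^{-(j+\ell-|\beta|)/2}\circ B_\beta, \qquad B_\beta:=\partial^\beta(I-\Delta)^{-|\beta|/2}.
\]
The multiplier $B_\beta$ has symbol $(i\xi)^\beta(1+|\xi|^2)^{-|\beta|/2}$, which is bounded, so $B_\beta$ is bounded on $L^2$.
\end{enumerate}
Each term therefore equals
\[
m_{\alpha-\beta}\circ\big[\rho_b(I-\Delta)^{-\ell'/2}\big]\circ B_\beta, \qquad \ell':=j+\ell-|\beta|\ge\ell>d/2.
\]
The middle factor is Hilbert-Schmidt by the preceding proposition, applied with $\ell'$ in place of $\ell$ and $b>d/2$. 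That proof only uses $\ell'>d/2$ through $\|G_{\ell'}\|_{L^2}<\infty$, so it applies verbatim. The ideal property then shows that each term, hence $S_\alpha$, is Hilbert-Schmidt, which concludes the argument.

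\textbf{Main obstacle.} The only delicate point is the justification of Step 1 and Step 2 on dense subspaces. One has to check that $M_a$ is indeed onto $H^J$, and that the identities hold on $C_c^\infty$ or Schwartz functions and extend by continuity. Otherwise the argument is a routine combination of bounded operators. If the direct route ran into trouble, the fallback would be the approximation-number estimates of L.Skrzypczak~\cite{skrzypczak-2005}, combined with Lemma~\ref{lem:weighted-sobolev-equivalent-norm}; the operator-ideal argument above avoids that machinery.
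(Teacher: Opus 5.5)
Your proof is correct. It follows a genuinely different route from the paper's after the first reduction.

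Both arguments begin the same way. Lemma~\ref{lem:weighted-sobolev-equivalent-norm} is used to strip the weights, so the embedding becomes conjugate to multiplication by $\rho_b$ from $H^{j+\ell}(\R^d)$ into $H^j(\R^d)$. The paper phrases this as the identity map $F^{j+\ell}_{2,2}(\R^d,\rho_{-b})\to F^j_{2,2}(\R^d)$, which is the same operator after substituting $g=\rho_b h$.

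The two proofs then diverge. The paper invokes Skrzypczak's Theorem~17(i) to obtain the approximation numbers $a_k\sim k^{-\min(b,\ell)/d}$ and sums their squares. It must treat the case $b=\ell$ separately by lowering $b$. You instead factor through $L^2$ using the Bessel potential, the derivative description of the $H^j$ norm and the Leibniz rule. Each resulting term is a bounded operator composed with $\rho_b(I-\Delta)^{-\ell'/2}$ for some $\ell'\geq\ell>d/2$. That operator is Hilbert--Schmidt by the paper's own preceding proposition, via the explicit $L^2$ kernel.

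Your route has several advantages. It is self-contained, using only Plancherel, bounded Fourier and multiplication operators, and the kernel criterion. It needs no case distinction at $b=\ell$ and gives an explicit bound on the Hilbert--Schmidt norm. It also shows that the Bessel-potential approach, which the paper calls ``not immediate'' for its weighted spaces, does go through. The point is that Lemma~\ref{lem:weighted-sobolev-equivalent-norm} moves the weight onto the function, and the unweighted spaces $H^J$ have the classical Bessel characterization.

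What the paper's route buys is sharper spectral information. The decay $k^{-\min(b,\ell)/d}$ of the singular values gives membership in every Schatten class $\mathfrak S_p$ with $p>d/\min(b,\ell)$, not only in $\mathfrak S_2$. The price is reliance on deep external results about weighted Triebel--Lizorkin spaces and their later correction.
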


\begin{proof}  Let us denote the canonical embedding $I: \mc H^{j+\ell,a}(\R^d) \to \mc H^{j,a+b}(\R^d)$. By Lemma~\ref{lem:weighted-sobolev-equivalent-norm}, the source and target norms are respectively equivalent to $\|\rho_af\|_{H^{j+\ell}}$ and $\|\rho_{a+b}f\|_{H^j}$. We also highlight that $w_b(x) := \rho_{-b}(x) = (1+|x|^2)^{b/2}$, which is precisely the polynomial weight used in the work by L.Skrzypczak~\cite{skrzypczak-2005}. Thus, we define $U : \mc H^{j,a+b}(\R^d) \to H^j(\R^d)$ such that
\[
    U f := \rho_{a+b}f.
\]
Also, by Lemma~\ref{lem:weighted-sobolev-equivalent-norm}, $U$ is bounded and one-to-one. Moreover, for every $g \in H^j(\R^d)$, the function $f := \rho_{-a-b}g$ belongs to $\mc H^{j,a+b}(\R^d)$ and satisfies $Uf = g$, and therefore $\rho_af = \rho_{-b}g$. Hence $U$ is a bounded isomorphism with bounded inverse. Thus, $\|f\|_{\mc H^{j+\ell,a}}$ and $\|\rho_{-b}g\|_{H^{j+\ell}}$ are equivalent. Following the notation of L.Skrzypczak~\cite{skrzypczak-2005}, we define the weighted Triebel-Lizorkin space by
\[
    F^{j+\ell}_{2,2}(\mathbb R^d,\rho_{-b}) := \bigg\{ g \in \mc S'(\R^d)\, \mid\, \rho_{-b}g \in F^{j+\ell}_{2,2}(\R^d) \bigg\},
\]
with norm $\|g\|_{F^{j+\ell}_{2,2}(\rho_{-b})} := \|\rho_{-b}g\|_{F^{j+\ell}_{2,2}}$. Since $F^s_{2,2}(\R^d) = H^s(\R^d)$ with equivalent norms, the preceding identity shows that $U$ also gives an isomorphism, up to equivalent Hilbert norms, from $\mc H^{j+\ell,a}(\R^d)$ onto $F^{j+\ell}_{2,2}(\R^d,\rho_{-b})$.  Hence the embedding $I$ is conjugate, through bounded isomorphisms with bounded inverses, to the canonical embedding
\[
    \widetilde I: F^{j+\ell}_{2,2}(\R^d,\rho_{-b}) \to F^j_{2,2}(\R^d).
\]
In particular, the Hilbert-Schmidt property of $I$ is equivalent to that of $\widetilde I$. We now apply the result by L.Skrzypczak~\cite[Theorem~17(i)]{skrzypczak-2005}. In the notation of that result, we take $p_0 = p_1=2$, $ q_0 = q_1 =2$, $s_0 = j+\ell$, $s_1 = j$, and $\alpha = b$. The parameter denoted there by $\delta$ is
\[
    \delta = s_0-s_1 - d\bigg(\frac{1}{p_0} - \frac{1}{p_1}\bigg) = \ell.
\]
Now since $p_0 = p_1 = 2$ and $b \neq \ell$, the result by L.Skrzypczak~\cite[Theorem~17(i)]{skrzypczak-2005} yields
\[
    a_k(\widetilde I) \sim k^{- {\rm min}\,(b,\ell)/d},\; \text{as } k\to\infty,
\]
where $a_k(\widetilde I)$ denotes the $k$-th approximation number of $\widetilde I$ (see e.g. the book by A.Pietsch~\cite[Definition 2.3.1]{pietsch-1987}). Because both the source and target spaces are Hilbert spaces, the approximation numbers of the compact operator $\widetilde I$ coincide with its singular values $a_k(\widetilde I)=s_k(\widetilde I)$ (see e.g. the book by A.Pietsch~\cite[Theorem 2.11.9]{pietsch-1987}). Therefore, applying the result in e.g. the book by A.Pietsch~\cite[Proposition 2.11.17]{pietsch-1987},
\[
    \widetilde I \in \mathfrak S_2 \Leftrightarrow \sum_{k=1}^\infty a_k(\widetilde I)^2 < \infty.
\]
Using the above asymptotic estimate we have that,
\[
    \sum_{k=1}^\infty a_k(\widetilde I)^2 \sim \sum_{k=1}^\infty k^{-2\min\{b,\ell\}/d},
\]
and since $b, \ell > d/2$ we have $2{\rm min}\,(b,\ell)/d >1$, and thus
\[
    \sum_{k=1}^\infty a_k(\widetilde I)^2<\infty.
\]
Hence $\widetilde I$ is Hilbert-Schmidt thanks to a well-known result (see e.g. the book by A.Pietsch~\cite[Theorem 1.4.5]{pietsch-1987}).

Finally, the class of Hilbert-Schmidt operators is stable under composition with bounded operators. Since $I$ and $\widetilde I$ are conjugate through bounded isomorphisms with bounded inverses, it follows that $I$ is Hilbert-Schmidt.

It remains to consider the limiting case $b = \ell$. Choose $b_0 \in (d/2,b)$. Since $b_0 \neq \ell$, the preceding argument gives the Hilbert-Schmidt embedding $\mc H^{j+\ell,a}(\R^d) \overset{\mathrm{H.S.}}{\hookrightarrow}  \mc H^{j,a+b_0}(\R^d)$. Moreover, because $b_0 < b$, the canonical embedding $\mc H^{j,a+b_0}(\R^d) \hookrightarrow \mc H^{j,a+b}(\R^d)$ is continuous. Hence the canonical embedding $\mc H^{j+\ell,a}(\R^d) \hookrightarrow \mc H^{j,a+b}(\R^d)$ is Hilbert-Schmidt as the composition of a Hilbert-Schmidt operator with a bounded operator, which ends the proof.
\end{proof}

\vspace{2cm}

{\large \textbf{Acknowledgements}.} Geoffrey Lacour was partially funded by the NumOpTES project (ANR22-CE46-0005) of the French National Research Agency. This work was partially carried out
while Geoffrey Lacour was a postdoctoral researcher at Institut de Mathématiques de Bordeaux, CNRS UMR 5251.

{}

\end{document}